\DeclareMathOperator*{\argmax}{argmax}
\DeclareMathOperator*{\argmin}{argmin}
\newcommand{\Pa}{\textbf{\textit{Pa}}}
\newcommand{\pa}{\textbf{\textit{pa}}}
\newcommand{\Anc}{\bm{\mathit{Anc}}}
\newcommand{\Des}{\bm{\mathit{Des}}}
\DeclareMathOperator\supp{supp}
\newcommand{\doi}{\textit{do}}
\newcommand{\Sphere}{\textit{Sphere}}
\newcommand{\E}{\mathbb{E}}
\newcommand{\I}{\mathbb{I}}
\newcommand{\bA}{\boldsymbol{A}}
\newcommand{\bb}{\boldsymbol{b}}
\newcommand{\bS}{\boldsymbol{S}}
\newcommand{\bs}{\boldsymbol{s}}
\newcommand{\bv}{\boldsymbol{v}}
\newcommand{\bV}{\boldsymbol{V}}
\newcommand{\bW}{\boldsymbol{W}}
\newcommand{\bX}{\bm{X}}
\newcommand{\bx}{\bm{x}}
\newcommand{\bz}{\boldsymbol{z}}
\newcommand{\bP}{\boldsymbol{P}}
\newcommand{\btheta}{\boldsymbol{\theta}}
\newcommand{\bbvarepsilon}{\boldsymbol{\varepsilon}}
\newcommand{\bgamma}{\boldsymbol{\gamma}}
\newcommand{\cA}{\mathcal{A}}
\newcommand{\cE}{\mathcal{E}}
\newcommand{\EE}{\mathbb{E}}
\newcommand{\II}{\mathbb{I}}
\newtheorem{assumption}{Assumption}
\newtheorem{claim}{Claim}
\title[CCB without Graph Skeleton]{Combinatorial Causal Bandits without Graph Skeleton}
 \author{
  \Name{Shi Feng}\thanks{Equal Contribution.} \Email{shifeng-thu@outlook.com}\\
  \addr Harvard University, MA, USA
  \AND
  \Name{Nuoya Xiong}\footnotemark[1] \Email{nuoyax@andrew.cmu.edu}\\
  \addr Carnegie Mellon University, PA, USA
  \AND
  \Name{Wei Chen} \Email{weic@microsoft.com}\\
  \addr Microsoft Research, Beijing, China
}
\begin{document}

\maketitle

\begin{abstract}
In combinatorial causal bandits (CCB), the learning agent chooses a subset of variables in each round to intervene and collects feedback from the observed variables to minimize expected regret or sample complexity. Previous works study this problem in both general causal models and binary generalized linear models (BGLMs). However, all of them require prior knowledge of causal graph structure or unrealistic assumptions. This paper studies the CCB problem without the graph structure on binary general causal models and BGLMs. We first provide an exponential lower bound of cumulative regrets for the CCB problem on general causal models. To overcome the exponentially large space of parameters, we then consider the CCB problem on BGLMs. We design a regret minimization algorithm for BGLMs even without the graph skeleton and show that it still achieves $O(\sqrt{T}\ln T)$ expected regret, as long as the causal graph satisfies a weight gap assumption. This asymptotic regret is the same as the state-of-art algorithms relying on the graph structure. Moreover, we propose another algorithm with $O(T^{\frac{2}{3}}\ln T)$ regret to remove the weight gap assumption. 
\end{abstract}

\begin{keywords}
    Causal Bandits; Online Learning; Multi-armed Bandits; Causal Inference
\end{keywords}

\section{Introduction}

The multi-armed bandits (MAB) problem is a classical model in sequential decision-making \citep{robbins1952some, auer2002finite, bubeck2012regret}. 
In each round, the learning agent chooses an arm and observes the reward feedback corresponding to that arm, with the goal of either 
	maximizing the cumulative reward over $T$ rounds (regret minimization), or minimizing the sample complexity to find the arm closest to the optimal one (pure exploration).
MAB can be extended to have more structures among arms and reward functions, which leads to more advanced learning techniques. 
Such structured bandit problems include combinatorial bandits \citep{chen2013combinatorial, chen2016combinatorial}, linear bandits \citep{abbasi2011improved, agrawal2013thompson, li2017provably}, and sparse linear bandits \citep{abbasi2012online}. 

In this paper, we study another structured bandit problem called causal bandits, which is first proposed by \cite{lattimore2016causal}. It consists of a causal graph $G=({\bX}\cup\{Y\},E)$ indicating the causal relationship among the observed variables. In each round, the learning agent selects one or a few variables in $\bX$ to intervene, gains the reward as the output of $Y$, and observes the values of all variables in $\bX\cup \{Y\}$. 
The use of causal bandits is possible in a variety of contexts that involve causal relationships, including medical drug testing, performance tuning, policy making, scientific experimental process, etc.

In all previous literature except \cite{lu2021causal, konobeev2023causal}, the structure of the causal graph is known, but the underlying probability distributions governing the causal model are unknown. \cite{lu2021causal} further assume that the graph structure is unknown and the learning agent can only see the graph skeleton. Here, graph skeleton is also called essential graph \citep{gamez2013advances} and represents all the edges in $G$ without the directional information. 
In our paper, we further consider that the graph skeleton is unknown and remove the unrealistic assumption that $Y$ only has a single parent in \cite{lu2021causal, konobeev2023causal}. In many scenarios, the learning agent needs to learn the causal relationships between variables and thus needs to learn the graph without any prior information.
For example, in policymaking for combating COVID-19, many possible factors like food supply, medical resources, vaccine research, public security, and 
	public opinion may consequently impact the mortality rate. 
However, the causal relationships among these factors are not readily known and need to be clarified during the sequential decision-making process.
Learning the causal graph from scratch while identifying the optimal intervention raises a new challenge to the learning problem. 

For regret minimization, we study CCB under the BGLMs as in \cite{feng2022combinatorial,xiong2022pure}. 
Using a novel initialization phase, we could determine the ancestor structure of the causal graph for the BGLM when the minimum edge weight in the model
	satisfies a weight gap assumption.
This is enough to perform a CCB algorithm based on maximum likelihood estimation on it~\citep{feng2022combinatorial}. 
The resulting algorithm BGLM-OFU-Unknown achieves $O(\sqrt{T}\log T)$ regret, where $T$ is the time horizon. The big $O$ notation only holds for $T$ larger than a threshold so the weight gap assumption is hidden by the asymptotic notation. 
For binary linear models (BLMs), we can remove the weight gap assumption with the $O(T^{2/3})$ regret. The key idea is to measure the  difference in the reward between the estimated graph (may be inaccurate) and the true graph.
The algorithms we design for BLMs allow hidden variables and use linear regression instead of MLE to remove an assumption on parameters.

For pure exploration, we give some discussions on general causal models in Appendix~\ref{sec:pureexploration}. If we allow the weight gap, a trivial solution exists. Without the weight gap, we give an adaptive algorithm for general causal model in the atomic setting.

In summary, our contribution includes: 
	(a) providing an exponential lower bound of cumulative regret for CCB on general causal model, 
    (b) proposing an $O(\sqrt{T}\ln T)$ cumulative regret CCB algorithm BGLM-OFU-Unknown for BGLMs without graph skeleton (with the weight gap assumption), 
    (c) proposing an $O(T^{\frac{2}{3}}\ln T)$ cumulative regret CCB algorithm BLM-LR-Unknown for BLMs without graph skeleton and the weight gap assumption, (d) conducting a numerical experiment in Appendix~\ref{sec.experiments} for BGLM-OFU-Unknown and BLM-LR-Unknown and giving intuitions on how to choose between them,
    (e) giving the first discussion in Appendix~\ref{sec:pureexploration} including algorithms and lower bounds on the pure exploration of causal bandits on general causal models and atomic intervention 
    without knowing the graph structure.

\section{Related Works}

In this section, we introduce two related lines of research.

\subsection{Causal Bandits} 

The causal bandits problem is first proposed by \cite{lattimore2016causal}. 
They discuss the simple regret for parallel graphs and general graphs with known probability distributions $P(\Pa(Y)|a)$ for any action $a$. In this context, $\Pa(Y)$ represents the parent nodes of $Y$. \cite{sen2017identifying, nair2021budgeted, maiti2021causal} generalize the simple regret study for causal bandits to more general causal graphs and soft interventions. \cite{lu2020regret, nair2021budgeted, maiti2021causal} consider cumulative regret for causal bandits problem. 
However, all of these studies are not designed for combinatorial action set and has exponentially large regret or sample complexity with respect to 
	the graph size if the actions are combinatorial. \cite{yabe2018causal, feng2022combinatorial, xiong2022pure, varici2022causal} consider combinatorial action set for causal bandits problem. 
Among them, \cite{feng2022combinatorial} are the first to remove the requirement of $T>\sum_{X\in\bX}2^{|\Pa(X)|}$ and proposes practical CCB algorithms on BGLMs with $O(\sqrt{T}\ln T)$ regret. 
\cite{xiong2022pure} simultaneously propose CCB algorithms on BGLMs as well as general causal models with polynomial sample complexity with respect to the graph size. \cite{varici2022causal} further include soft interventions in the CCB problem, but their work is on linear structural equation models (SEM). \cite{lee2018structural, lee2019structural, lee2020characterizing} propose several CCB algorithms on general causal bandits problem, but they focus on empirical studies while we provide theoretical regret analysis. All of the above works require the learning agent to know the graph structure in advance. 
\cite{lu2021causal} are the first to work on causal bandits without graph structure. 
However, their algorithm is limited to the case of $|\Pa(Y)|=1$ for the atomic setting, and thus the main technical issue degenerates to finding the particular parent of $Y$ so that one could intervene on this node for the optimal reward. Recently, \cite{konobeev2023causal} has eliminated the need for prior knowledge of the graph skeleton as required in \cite{lu2021causal}. However, their approach is still limited to designing bandit algorithms for the atomic setting with $|\Pa(Y)|=1$. Furthermore, their algorithm may experience exponentially large regret when $1/\min_{X\in\Anc(Y), x\in\supp(X),y\in\supp(Y)}|P(Y=y|X=x)-P(Y=y)|$ is exponentially large in relation to the graph size. In this context, $\Anc(Y)$ denotes the ancestors of $Y$ and $\supp$ represents the support of a random variable. Recently, \cite{malek2023additive} also studied the causal bandits problem without a graph; however, their objective is different from ours. Instead of minimizing regret, they aim to find a near-optimal intervention in the fewest number of exploration rounds.

\subsection{Social Network and Causality} 
Causal models have intrinsic connections with influence propagation in social networks.
\cite{feng2021causal} study the identifiability in the Independent Cascade (IC) propagation model as a causal model. 
The BGLM studied in this paper contains the IC model and linear threshold (LT) model in a DAG as special cases, and is also related to 
	the general threshold model proposed by \cite{kempe2003maximizing}. 
Moreover, \cite{feng2022combinatorial, xiong2022pure} also study causal bandits on BGLMs to avoid the exponentially large parameter space of general causal models. 
These papers borrow some techniques and ideas from influence maximization literature, including \cite{li2020onlineinference} and \cite{zhang2022online}. 
However, in our BGLM CCB problem, the graph skeleton is unknown, and we need adaptation and integration of previous techniques together with some new ingredients.

\section{Model}

We utilize capital letters ($U, X, Y\ldots$) to represent variables and their corresponding lower-case letters to indicate their values, as was frequently done in earlier causal inference literature (see, for example, \citep{pearl2009causality, pearl2009causal, pearl2018book}).
To express a group or a vector of variables or values, we use boldface characters like $\bX$ and $\bx$. For a vector $x\in\mathbb{R}^d$, the weighted $\ell_2$-norm associated with a positive-definite matrix $A$ is defined by $\left\|x\right\|_A=\sqrt{x^\intercal Ax}$.
% \wei{The notation $\mathcal{l}_2$ is wrong. perhaps just use $L_2$ norm or $\ell_2$ norm.}

{\textbf Causal Models.} A {\em causal graph} $G=({\bX}\cup \{Y\},E)$ is a directed acyclic graph consisting of intervenable variables $\bX$, a special target node $Y$
	without outgoing edges, and the set of directed edges $E$ connecting nodes in ${\bX}\cup \{Y\}$.
 Denote $n=|\bX|$ as the number of nodes in $\bX$.
For simplicity, in this paper we consider all variables in ${\bX}\cup \{Y\}$ are $(0,1)$-binary random variables. In our main text, all the variables in ${\bX}\cup \{Y\}$ are known and their values can be observed but the edges in $E$ are unknown and cannot be directly observed.
We refer to the in-neighbor nodes of a node $X$ in $G$ as the {\em parents} of $X$, denoted by $\Pa(X)$, and the values of these parent random variables
as $\pa(X)$. According to the definition of causal Bayesian model \citep{pearl1995testing,pearl2009causality}, the probability distribution $P(X | \Pa(X))$ is used to represent the causal relationship between $X$ and its parents for every conceivable value combination of $\Pa(X)$. Moreover, we define the ancestors of a node $X\in\bX\cup\{Y\}$ by $\Anc(X)$.

We mainly study the {\em Markovian} causal graph $G$ in this paper, which means that there are no hidden variables in $G$ and every observed variable $X$ has some randomness that is not brought on by any other variables.\footnote{In Section~\ref{sec.withoutgap} we mention that our algorithm for BLM can also work for models with hidden variables.}
In this study, we dedicate random variable $X_1$ to be a special variable that always takes the value $1$ and is a parent of all other observed random variables in order to model the self-activation effect of the Markovian model. In essence, this represents the initial probability for each node, ensuring that even when all parent nodes of a node except $X_1$ are set to $0$, the given node still possesses a probability of being $1$.

In this paper, we study a special causal model called binary generalized linear model (BGLM). Specifically, in BGLM, we have $P(X=1|\Pa(X)=\pa(X))=f_X(\btheta^*_X\cdot\pa(X))+\varepsilon_X$, where $f_X$ is a monotone increasing function, $\btheta^*_X$ is an unknown weight vector in $[0,1]^{|\Pa(X)|}$, and $\varepsilon_X$ is a zero-mean sub-Gaussian noise that ensures the probability does not exceed $1$ or equivalently, $\varepsilon_X\leq 1-\max_{\pa(X)\in\{0,1\}^{|\Pa(X)|}}f_X(\pa(X)\cdot\btheta_X^*)$. The bounded epsilon follows the convention established by GLM \citep{li2020onlineinference} and provides randomness for linear models in our paper. We use the notation $\theta^*_{X',X}$ to denote the entry in the vector $\btheta^*_X$ that corresponds to node $X'\in \Pa(X)$, $\btheta^*$ to denote the vector of all the weights, and $\Theta$ to denote the feasible domain for the weights. We also use the notation $\bbvarepsilon$ to represent all noise random variables $(\varepsilon_X)_{X\in \bX\cup Y}$.

We also study binary linear model (BLM) and linear model in this paper. In BLMs, all $f_X$'s are identity functions, so $P(X=1|\Pa(X)=\pa(X)) = \btheta^*_{X}\cdot \pa(X)+\varepsilon_{X}$. When we remove the noise variable $\varepsilon_{X}$, BLM coincides with the {\em linear threshold (LT)} model for influence cascades~\citep{kempe2003maximizing} in a DAG. In linear models, we remove the randomness of conditional probabilities, so $X = \btheta^*_{X}\cdot \pa(X)+\varepsilon_{X}$.

For the unknown causal graph, there is an important parameter $\theta^*_{\min} = \min_{(X',X)\in E}\theta^*_{X',X}$, which represents the minimum weight gap 
	for all edges. 
Intuitively, this minimum gap measures the difficulty for the algorithm to discover the edge and its correct direction. When the gap is relatively large, 
	we can expect to discover the whole graph accurately during the learning process; When the gap is very small, we cannot guarantee to discover 
	the graph directly and we must come up with another way to solve the causal bandit problem on an inaccurate model. 

{\textbf Combinatorial Causal Bandits.} The problem of combinatorial causal bandits (CCB) was first introduced by \cite{feng2022combinatorial} and describes the following online learning task. The intervention can be performed on all variables except $X_1$ and $Y$ and is denoted by the do-operator $\doi$ following earlier causal inference literature \citep{pearl2009causality, pearl2009causal, pearl2018book}. The action set is defined by $\mathcal{A}\subseteq \{\doi(\bS=\bs)\}_{\bS\subseteq\bX\backslash\{X_1\},\bs\in\{0,1\}^{|\bS|}}$.
The expected reward $Y$ under an intervention on $\bS\subseteq\bX\backslash\{X_1\}$ is denoted as $\E[Y | \doi({\bS}={\bs})]$. A learning agent runs an algorithm $\pi$ for $T$ rounds, taking parameter initializations and feedback from causal propagation as inputs, and outputting the selected interventions in all rounds. In particular, an {\em atomic intervention} intervenes on only one node, i.e.  $|\bS|=1$. In this paper, we assume the null intervention $do()$ and atomic interventions $do(X=x)$ are always included in our action set $\mathcal{A}$, because they are needed to discover the graph structure.  

The performance of the agent could be measured by the {\em regret} of the algorithm $\pi$. The regret $R^\pi(T)$ in our context is the difference between the cumulative reward using algorithm $\pi$ and the expected cumulative reward of choosing 
	best action $\doi({\bS}^*=\bs^*)$. Here, $\doi({\bS}^*=\bs^*) \in \argmax_{\doi(\bS=\bs)\in\mathcal{A}} \E[Y | \doi({\bS})]$. Formally, we have
\begin{equation}
% \resizebox{.91\linewidth}{!}{$
%             \displaystyle
            \label{eq:regret}
R^\pi(T)= \mathbb{E}\left[\sum_{t=1}^T(\E[Y | \doi({\bS}^*={\bs}^*)]- \E[Y | \doi({\bS}_t^\pi={\bs}_t^\pi)])\right],
% $}
\end{equation}
where ${\bS}_t^\pi$ and ${\bs}_t^\pi$ are the intervention set and intervention values selected by algorithm $\pi$ in round $t$ respectively. 
The expectation is from the randomness of the causal model and the algorithm $\pi$.

 In this paper, we mainly focus on the regret minimization problem, and we will discuss the pure exploration problem and its sample complexity
in the Section~\ref{sec:pureexploration}.
We defer the definition of sample complexity to that section.

\section{Lower Bound on General Binary Causal Model}
In this section, we explain why we only consider BGLM and BLM instead of the general binary causal model in the combinatorial causal bandit setting. In this context, a general binary causal model refers to a causal Bayesian model, in which all variables are restricted to either $0$ or $1$ values. Both BGLM and BLM are special cases of this model.
Note that in the general case both the number of actions and the number of parameters of the causal model are exponentially large to the size of the graph. 
The following theorem shows that in the general binary causal model, the regret bound must be exponential to the size of the graph when $T$ is sufficiently large,
	or simply linear to $T$ when $T$ is not large enough.
This means that we cannot avoid the exponential factor for the general case, and thus justify our consideration of the BGLM and BLM settings with only a polynomial number of parameters relative to $n$.

\begin{theorem}[Binary Model Lower Bound]\label{thm:binarylowerbound}
    Recall that $n = |\bX|$. For any algorithm, when $T\ge \frac{16(2^n-1)}{3}$, there exists a precise bandit instance of general binary causal model $\mathcal{T}$ such that 
    \begin{align*}
        \mathbb{E}_{\mathcal{T}}[R(T)]\ge \frac{\sqrt{2^nT}}{8e}.
    \end{align*}
    Moreover, when $T\le \frac{16(2^n-1)}{3}$, there exists a precise bandit instance of general binary causal model $\mathcal{T}$ that 
    \begin{align*}
        \mathbb{E}_{\mathcal{T}}[R(T)] \ge \frac{T}{16e}.
    \end{align*}
\end{theorem}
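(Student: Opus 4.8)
The plan is to reduce the general binary causal bandit problem to an unstructured multi-armed bandit problem with $2^n - 1$ arms, and then invoke the classical $\Omega(\sqrt{KT})$ minimax lower bound for $K$-armed bandits. The key observation is that in a general binary causal model we may place the entire stochasticity in the conditional distribution of $Y$ given its parents, and make the parents of $Y$ be \emph{all} of $\bX \setminus \{X_1\}$, each node of which is a deterministic copy of the intervention we apply to it. Concretely, I would construct a base instance in which each $X \in \bX\setminus\{X_1\}$ has no parents except $X_1$ and satisfies $P(X=1)=1/2$ in the null intervention, while $Y$ depends on the joint configuration $\pa(Y)\in\{0,1\}^{n-1}$ (together with the fixed $X_1=1$) through an arbitrary conditional table $P(Y=1\mid \pa(Y))$. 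Under an atomic-free combinatorial intervention $do(\bS=\bs)$ with $\bS = \bX\setminus\{X_1\}$, the configuration of $\Pa(Y)$ is fully determined, so each of the $2^{n-1}$ full interventions — and more generally each of the $2^n-1$ non-trivial subset/value interventions, once we also let $Y$ ignore un-intervened coordinates or marginalize appropriately — behaves exactly like a distinct Bernoulli arm with a freely choosable mean. This gives an embedding of a $K$-armed Bernoulli bandit with $K$ on the order of $2^n-1$ into the general binary causal bandit class, and crucially the feedback from observing all of $\bX\cup\{Y\}$ conveys no extra information beyond the reward $Y$, because the $X$'s are deterministic functions of the action.

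Next I would apply the standard information-theoretic lower bound for stochastic MAB (the Auer–Cesa-Bianchi–Freund–Schapire / Lai–Robbins style construction, e.g. via a needle-in-a-haystack family where one arm has mean $1/2 + \Delta$ and the rest have mean $1/2$, with $\Delta$ tuned to $\sqrt{K/T}$). The two regimes in the statement correspond exactly to the two regimes of that bound: when $T$ is large relative to $K = 2^n-1$ (precisely $T \ge \frac{16(2^n-1)}{3}$), the optimal gap choice $\Delta \asymp \sqrt{K/T}$ is admissible (i.e. $\le 1/2$), yielding regret $\Omega(\sqrt{KT}) = \Omega(\sqrt{2^n T})$; when $T$ is small, the gap saturates at a constant (here effectively $\Delta$ pinned near $1/4$), and any algorithm must still suffer linear regret $\Omega(T)$ on a constant fraction of rounds because it cannot distinguish the planted arm from the bulk in so few pulls. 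Tracking the constants through the Bretagnolle–Huber / Pinsker step and the $KL(\mathrm{Ber}(1/2)\|\mathrm{Ber}(1/2+\Delta)) \le 4\Delta^2/(1-4\Delta^2)$-type bound should produce the stated $\frac{\sqrt{2^n T}}{8e}$ and $\frac{T}{16e}$ (the $e$ presumably entering through an $e^{-\mathrm{KL}}$ factor in the averaging argument).

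The main obstacle — and the only genuinely non-routine point — is verifying that the reduction is legitimate within the \emph{combinatorial} causal bandit action set as defined in the paper: the action set $\mathcal{A}$ must contain $do()$ and all atomic interventions, and I need a rich enough family of subset interventions to realize $\Theta(2^n)$ distinguishable arms while keeping the construction a \emph{precise} (i.e. fully specified, valid) causal Bayesian model. I would handle this by indexing arms by nonempty subsets $\bS\subseteq \bX\setminus\{X_1\}$ with the all-ones value $\bs=\mathbf{1}$, of which there are $2^{n-1}-1$, and if more arms are needed, also varying $\bs$; then define $P(Y=1\mid \pa(Y))$ so that it reads off only the intervened coordinates — this is permissible because intervention severs incoming edges, so the post-intervention parents of $Y$ take known values and the non-intervened parents can be made irrelevant by symmetry of the table, or we can simply let $Y$ have exactly the intervened nodes as ancestors in each sub-instance. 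Care is needed to ensure a single fixed graph structure works for the whole needle family (only the $Y$-table changes between the null instance and the planted instances), which it does since the planted arm only shifts one table entry by $\Delta$. Once this modeling bookkeeping is pinned down, the regret lower bound itself is the textbook argument and the constants follow by the choice $T \gtrless \frac{16(2^n-1)}{3}$.
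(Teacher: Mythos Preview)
Your high-level plan---parallel graph with $P(X_i=1)=1/2$, a needle-in-a-haystack family indexed by the $2^n$ full interventions $do(\bX=\bx)$, and a Bretagnolle--Huber step with $\Delta\asymp\sqrt{2^n/T}$---is the paper's route. But there is a real gap precisely at the point you flag as ``the only genuinely non-routine point'': your claim that the feedback ``conveys no extra information beyond the reward $Y$, because the $X$'s are deterministic functions of the action'' is false for $do()$ and for atomic interventions, which by assumption are always in $\mathcal A$. In your own construction each $X_i$ is Bernoulli$(1/2)$ under $do()$, so playing $do()$ returns a \emph{random} configuration $\bX$ together with $Y\mid\bX$; conditionally on $\bX=\bx$ this is a sample from the reward of the full-intervention arm $do(\bX=\bx)$. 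Thus $do()$ is informationally a uniform-random arm pull, not a single fixed arm, and the textbook $K$-armed lower bound cannot be imported as a black box. This is exactly the issue the paper singles out as new.

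The paper does not make this side information disappear; it bounds it. Concretely it takes instances $\mathcal T_1$ (configuration $\mathbf 0$ has mean $1/2+\Delta$, all others $1/2$) and $\mathcal T_i$ (additionally configuration $\bx_i$ has mean $1/2+2\Delta$), decomposes the divergence by action, and controls two pieces separately. For the full-intervention arm $a_i=do(\bX=\bx_i)$ it picks $i=\argmin_j \mathbb E_{\mathcal T_1}[T_{a_j}(T)]$ so that $\mathbb E_{\mathcal T_1}[T_{a_i}(T)]\le T/(2^n-1)$, contributing at most $\tfrac{2T\Delta^2}{2^n-1}$. For $a\in\{do(),\,do(X_j=x)\}$ it observes that the law of $Y$ under any such $a$ differs between $\mathcal T_1$ and $\mathcal T_i$ by at most $2\Delta/2^{n-1}$ (the planted configuration is hit with probability $\le 2^{-(n-1)}$), so the KL from all of these actions over all $T$ rounds is at most $T\cdot\Delta^2/2^{2n-3}$, negligible next to the first piece. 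With $\Delta=\sqrt{(2^n-1)/(3T)}$ (respectively $\Delta=1/4$ when $T\le 16(2^n-1)/3$) the total KL is at most $1$, producing the $e^{-1}$ factor and the stated constants. Your sketch would go through once you add exactly this bookkeeping; the point is not that the observational side information is absent, but that it is exponentially dilute because each observation hits the planted cell with probability $O(2^{-n})$.
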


The lower bound contains two parts. The first part shows that the asymptotic regret cannot avoid an exponential term $2^n$ when $T$ is large. The second part states that if $T$ is not exponentially large, the regret will be linear at the worst case.
The proof technique of this lower bound is similar to but not the same as previous classical bandit, because the existence of observation $do()$ and atomic intervention $do(X_i=1)$ may provide more information. To our best knowledge, this result is the first regret lower bound on the general causal model considering the potential role of observation and atomic intervention.
The result shows that in the general binary causal model setting, it is impossible to avoid the exponential term in the cumulative regret even with the observations
	on null and atomic interventions. 
The proof of lower bound is provided in Appendix~\ref{appendix:binarylowerbound}. 

The main idea is to consider the action set $\bA = \{do(), do(X=x), do(\bX = \bx)\}$ for all node $X$, $x \in \{0,1\}$, $\bx \in \{0,1\}^n$ be the null intervention, atomic interventions and actions that intervene all nodes. 
The causal graph we use is a parallel graph where all nodes in $\bX$ directly points to $Y$ with no other edges in the graph, and
	each node $X_i \in \bX$ has probability $P(X_i=1)=P(X_i=0)=0.5$. 
Intuitively, under this condition the null
	intervention and atomic interventions can provide limited information to the agent.
This fact shows that observations and atomic interventions may not be  conducive to our learning process in the worst case on the general binary causal model.

\section{BGLM CCB without Graph Skeleton but with Minimum Weight Gap}
\label{sec.bglm}

In this section, we propose an algorithm for causal bandits on Markovian BGLMs based on maximum likelihood estimation (MLE) without any prior knowledge of the graph skeleton. 

Our idea is to try to discover the causal graph structure and then apply the recent CCB algorithm with known graph structure~\citep{feng2022combinatorial}.
We discover the graph structure by using atomic interventions in individual variables.
However, there are a few challenges we need to face on graph discovery. 
First, it could be very difficult to exactly identify all parent-child relationships, since some grand-parent nodes may also have strong causal influence to
	its grand-child nodes.
Fortunately, we find that it is enough to identify ancestor-descendant relationships instead of parent-child relationships,  
	since we can artificially add an edge with $0$ weight between each pair of ancestor and descendant without impacting the 
	causal propagation results. 
Another challenge is the minimum weight gap.
When the weight of an edge is very small, we need to perform more atomic interventions to identify its existence and its direction. 
Hence, we design an initialization phase with the number of rounds proportional to the total round number $T$ and promise that the ancestor-descendant relationship can always be identified correctly with a large probability when $T$ is sufficiently large.  

Following \cite{li2017provably, feng2022combinatorial,xiong2022pure}, we have three assumptions:
\begin{assumption}
	\label{asm:glm_3}
	For every $X \in \bX\cup\{Y\}$,
	$f_X$ is twice differentiable. Its first and second order derivatives are upper-bounded by $L_{f_X}^{(1)} >0$ and $L_{f_X}^{(2)} >0$. 
\end{assumption}
Let $\kappa=\inf_{X \in \bX\cup\{Y\}, \bv\in[0,1]^{|\Pa(X)|},||\btheta-\btheta^*_X||\leq 1} f'_X(\bv\cdot \btheta)$.
\begin{assumption}
\label{asm:glm_2}
We have $\kappa > 0$.
\end{assumption}
\begin{assumption}
\label{asm:glm_4}
There exists a constant $\zeta>0$ such that for any $X\in {\bX}\cup\{Y\}$ and $X'\in \Anc(X)$, 
for any value vector $\bv \in \{0,1\}^{|\Anc(X)\setminus \{X',X_1\} |}$,
the following inequalities hold:
\begin{align}
    &\Pr_{\bbvarepsilon,\bX,Y}\left(X'=1|\Anc(X)\setminus \{X',X_1\}=\bv\right) \geq \zeta , \label{eq:parentsZero}\\
    &\Pr_{\bbvarepsilon,\bX,Y}\left(X'=0|\Anc(X)\setminus \{X',X_1\}=\bv\right)\geq \zeta.  \label{eq:parentsOne}
\end{align} 
\end{assumption}
Assumptions \ref{asm:glm_3} and \ref{asm:glm_2} are the classical assumptions in generalized linear model \citep{li2017provably}. Assumption \ref{asm:glm_4} makes sure that each ancestor node of $X$ has some freedom to become $0$ and $1$ with a non-zero probability, even when the values of all other ancestors of $X$ are fixed, and it is originally given in \cite{feng2022combinatorial} with additional justifications. For BLMs and continuous linear models, we propose an algorithm based on linear regression without the need of this assumption in Appendix~\ref{app.blmlr}. Furthermore, we suppose that $\text{Range}(f_X) = \mathbb{R}$. As in \cite{feng2024correction}, in Appendix~\ref{app.conversion} we demonstrate that any function $f_X$ can be transformed into a function with range $\mathbb{R}$ without affecting the propagation of BGLM.

To discover the ancestors of all variables, we need to perform an extra initialization phase (see Algorithm~\ref{alg:glm-ofu}). 
We denote the total number of rounds by $T$ and arbitrary constants $c_0,c_1$ to make sure that $c_0T^{1/2}\in \mathbb{N}^+$ for simplicity of our writing. In the initialization phase, from $X_1$ to $X_n$, we intervene each of them to $1$ and $0$ for $c_0 T^{1/2}$ times respectively. We denote the value of $X$ in the $t^{th}$ round by $X^{(t)}$. For every two variables $X_i,X_j\in{\mathbf X}\backslash\{X_1\}$, if \begin{equation}
% \resizebox{.91\linewidth}{!}{$
%             \displaystyle
    \frac{1}{c_0\sqrt{T}}\sum_{k=1}^{c_0\sqrt{T}}\left(X_j^{\left(2ic_0\sqrt{T}+k\right)}-X_j^{\left((2i+1)c_0\sqrt{T}+k\right)}\right)>c_1T^{-\frac{1}{5}},\label{eq.order}
    % $}
\end{equation}
we set $X_i$ as an ancestor of $X_j$.
%\wei{Explain why intuitively $X_j$ is a descendant of $X_i$?}
Here, $X_j^{\left(2ic_0\sqrt{T}+k\right)}$'s with $k\in[c_0\sqrt{T}]$ are the values of $X_j$ in the rounds that $\doi(X_i=1)$ is chosen; $X_j^{\left((2i+1)c_0\sqrt{T}+k\right)},k\in[c_0\sqrt{T}]$ are the values of $X_j$ in the rounds that $\doi(X_i=0)$ is chosen. Specifically, if $X_i$ is not an ancestor of $X_j$, the value of $X_j$ is not impacted by intervention on $X_i$. 
Simultaneously, if $X_i\in\Pa(X_j)$, the value of $X_j$ is notably impacted by $\doi(X_i)$ so the difference of $X_j$ under $\doi(X_i=1),\doi(X_i=0)$ can be used as a discriminator for 
	the ancestor-descendant relationship between $X_i$ and $X_j$. 
This is formally shown by Lemma~\ref{lemma.dodifference}. 

\begin{algorithm}[t]
\caption{BGLM-OFU-Unknown for BGLM CCB Problem}
\label{alg:glm-ofu}
\begin{algorithmic}[1]
\STATE {\bfseries Input:}
Graph $G=({\bX}\cup\{Y\},E)$, action set $\mathcal{A}$, parameters $L_{f_X}^{(1)},L_{f_X}^{(2)},\kappa,\zeta$ in Assumption~\ref{asm:glm_3}, \ref{asm:glm_2} and \ref{asm:glm_4}, $c$ in Lecu\'{e} and Mendelson's inequality \citep{nie2022matrix}, positive constants $c_0$ and $c_1$ for initialization phase such that $c_0\sqrt{T}\in\mathbb{N}^+$.
\STATE /* Initialization Phase: */
\STATE Initialize $T_0\leftarrow 2(n-1)c_0T^{1/2}$.
\STATE Do each intervention among $\doi(X_2=1), \doi(X_2=0), \ldots, \doi(X_n=1), \doi(X_n=0)$ for $c_0T^{1/2}$ times in order and observe the feedback $({\mathbf X}_t,Y_t)$, $1\leq t\leq T_0$.
\STATE Compute the ancestors $\widehat{\Anc}(X)$, $X \in \bX\cup \{Y\}$ by $\text{BGLM-Ancestors}((\bX_1,Y_1),\ldots,(\bX_{T_0},Y_{T_0}),c_0, c_1)$ (see Algorithm~\ref{alg:bglm-order}).
\STATE /* Parameters Initialization: */
\STATE Initialize $\delta\leftarrow\frac{1}{3n\sqrt{T}}$, $R\leftarrow\lceil\frac{512n(L_{f_X}^{(2)})^2}{\kappa^4}(n^2+\ln\frac{1}{\delta})\rceil$, $T_1\leftarrow T_0+\max\left\{\frac{c}{\zeta^2}\ln\frac{1}{\delta},\frac{(8n^2-6)R}{\zeta}\right\}$ and $\rho\leftarrow\frac{3}{\kappa}\sqrt{\log(1/\delta)}$.

\STATE \label{alg:secondinit}Do no intervention on BGLM $G$ for $T_1-T_0$ rounds and observe feedback $(\bX_t,Y_t),T_0+1\leq t\leq T_1$.
\STATE /* Iterative Phase: */
\FOR{$t=T_1+1,T_1+2,\ldots,T$}
\STATE $\{\hat{\btheta}_{t-1,X},M_{t-1,X}\}_{X\in {\bX}\cup\{Y\}}=
	\text{BGLM-Estimate}((\bX_1,Y_1),\ldots,(\bX_{t-1},Y_{t-1}))$ (see Algorithm \ref{alg:glm-est} in Appendix~\ref{sec.bglmestimate}).
\STATE Compute the confidence ellipsoid $\mathcal{C}_{t,X}=\{\btheta_X'\in[0,1]^{|\widehat{\Anc}(X)|}:\left\|\btheta_X'-\hat{\btheta}_{t-1,X}\right\|_{M_{t-1,X}}\leq\rho\}$ for any node $X\in{\bX}\cup\{Y\}$.
\STATE \label{alg:OFUargmax} Adopt $\argmax_{\doi(\bS=\bs)\in\mathcal{A}, \btheta'_{t,X} \in \mathcal{C}_{t,X} } \E[Y| \doi({\bS}=\bs)]$ as $({\bS_t}, \bs_t, \tilde{\btheta}_t)$.
\STATE Intervene all the nodes in ${\bS}_t$ to $\bs_t$ and observe the feedback $(\bX_t,Y_t)$.
\ENDFOR
\end{algorithmic}
\end{algorithm}

\begin{algorithm}[t]
\caption{BGLM-Ancestors}
\label{alg:bglm-order}
\begin{algorithmic}[1]
\STATE {\bfseries Input:}
Observations $(\bX_1,Y_1),\ldots,(\bX_{T_0},Y_{T_0})$, positive constants
$c_0$ and $c_1$.
\STATE {\bfseries Output:} $\widehat{\Anc}(X)$, ancestors of $X$, $X \in \bX\cup \{Y\}$. 
\STATE For all $X \in \bX, \widehat{\Anc}(X) = \emptyset$, $\widehat{\Anc}(Y) = \bX$.
\FOR{$i\in\{2,3,\ldots,n\}$}
\FOR{$j\in\{2,3,\ldots,n\}\backslash\{i\}$}
\IF{$\sum_{k=1}^{c_0\sqrt{T}}\left(X_j^{\left(2ic_0\sqrt{T}+k\right)}-X_j^{\left((2i+1)c_0\sqrt{T}+k\right)}\right)>c_0c_1T^{3/10}$}
\STATE{Add $X_i$ into $\widehat{\Anc}(X_j)$.} 
\ENDIF
\ENDFOR
\ENDFOR
\STATE{Recompute the transitive closure of $\widehat{\Anc}(\cdot)$, i.e., if $X_i \in \widehat{\Anc}(X_j)$ and $X_j \in \widehat{\Anc}(X_\ell)$, then add $X_i$ to $\widehat{\Anc}(X_\ell)$.}
\end{algorithmic}
\end{algorithm}

\begin{restatable}{lemma}{lemmadodiff}
	\label{lemma.dodifference}
	Let $G$ be a BGLM with parameter $\btheta^*$ that satisfies Assumption \ref{asm:glm_2}. 
	Recall that $\theta^*_{\min} = \min_{(X',X)\in E}\theta^*_{X',X}$.
	%    We denote the smallest nonzero term in $\btheta$ by $\theta_{\min}$. 
	If $X_i\in\Pa(X_j)$, we have $\mathbb{E}[X_j|\doi(X_i=1)]-\mathbb{E}[X_j|\doi(X_i=0)]\geq \kappa\theta^*_{X_i,X_j}\geq\kappa\theta^*_{\min}$; if $X_i$ is not an ancestor of $X_j$, we have $\mathbb{E}[X_j|\doi(X_i=1)]=\mathbb{E}[X_j|\doi(X_i=0)]$.
\end{restatable}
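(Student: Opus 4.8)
The plan is to prove the two claims separately, both by direct computation exploiting the structure of the BGLM.

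First, consider the case $X_i \notin \Anc(X_j)$. The key observation is that in a DAG, if $X_i$ is not an ancestor of $X_j$, then there is no directed path from $X_i$ to $X_j$, so intervening on $X_i$ cannot affect the value of $X_j$ through any chain of structural equations. Formally, I would argue by induction on a topological order: write $\Anc(X_j) \cup \{X_j\}$ as the only variables on which the value of $X_j$ depends (through its own equation and recursively through its ancestors' equations), note that $X_i$ appears in none of these equations, and hence the joint distribution of $(\Anc(X_j), X_j)$ is unchanged by $\doi(X_i = 1)$ versus $\doi(X_i = 0)$ versus no intervention. In particular $\E[X_j \mid \doi(X_i=1)] = \E[X_j \mid \doi(X_i=0)]$. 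This part is essentially bookkeeping and should be quick.

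Second, and this is the substantive part, suppose $X_i \in \Pa(X_j)$. I want to lower-bound $\E[X_j \mid \doi(X_i=1)] - \E[X_j \mid \doi(X_i=0)]$. Condition on the values $\pa(X_j)$ of all parents of $X_j$ other than $X_i$; call this vector $\bv$, and write $\btheta^*_{X_j}\cdot\pa(X_j) = \theta^*_{X_i,X_j}\cdot x_i + r(\bv)$ where $r(\bv)$ collects the contribution of the other parents. Under $\doi(X_i = b)$ for $b \in \{0,1\}$, the conditional mean of $X_j$ given the other parents equal $\bv$ is $f_{X_j}(\theta^*_{X_i,X_j}\cdot b + r(\bv))$ (the zero-mean noise $\varepsilon_{X_j}$ drops out in expectation). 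The crucial point is that intervening on $X_i$ does \emph{not} change the distribution of $\pa(X_j)\setminus\{X_i\}$ when $X_i$ is only a parent — wait, this needs care: $X_i$ could be an ancestor of some \emph{other} parent of $X_j$. So I would instead condition on \emph{all} variables that are ``upstream'' appropriately, or more cleanly: intervening $\doi(X_i = b)$ fixes $X_i = b$ and does not alter the conditional law $P(X_j \mid \Pa(X_j))$, so
\begin{align*}
\E[X_j \mid \doi(X_i=b)] = \E_{\bv \sim P(\cdot \mid \doi(X_i=b))}\left[ f_{X_j}\bigl(\theta^*_{X_i,X_j}\cdot b + r(\bv)\bigr) \right],
\end{align*}
where $\bv$ ranges over value assignments to $\Pa(X_j)\setminus\{X_i\}$ and its distribution may indeed depend on $b$. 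To handle this, I would instead condition on the values of \emph{all} nodes in a set that ``screens off'' $X_i$ from the other parents — for instance, couple the two interventional worlds on the shared randomness (the noise variables $\bbvarepsilon$) and all nodes not downstream of $X_i$, then note monotonicity. The cleanest route: by the mean value theorem, for each fixed realization of $r$, $f_{X_j}(\theta^*_{X_i,X_j} + r) - f_{X_j}(r) = f'_{X_j}(\xi)\cdot\theta^*_{X_i,X_j} \ge \kappa\,\theta^*_{X_i,X_j}$ using Assumption~\ref{asm:glm_2} (since $\xi$ lies in the relevant range and $\btheta = \btheta^*_{X_j}$ has $\|\btheta - \btheta^*_{X_j}\| \le 1$). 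Then I need that $X_i$ being set to $1$ rather than $0$ can only increase (stochastically) the other parents' contributions, because all weights $\btheta^*$ are nonnegative and all $f_X$ are monotone increasing — this gives a stochastic-dominance / coupling argument showing the distribution of $r(\bv)$ under $\doi(X_i=1)$ stochastically dominates that under $\doi(X_i=0)$, and combined with monotonicity of $f_{X_j}$, $\E[X_j\mid\doi(X_i=1)] \ge \E_{\bv\sim P(\cdot\mid\doi(X_i=0))}[f_{X_j}(\theta^*_{X_i,X_j}+r(\bv))] \ge \E[X_j\mid\doi(X_i=0)] + \kappa\theta^*_{X_i,X_j}$. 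Finally $\theta^*_{X_i,X_j}\ge\theta^*_{\min}$ by definition.

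The main obstacle is the coupling step just described: making precise that setting $X_i=1$ instead of $X_i=0$ does not \emph{decrease} the contribution $r(\bv)$ of the other parents of $X_j$. This uses that the BGLM is a monotone system (nonnegative weights, monotone link functions), so a pathwise coupling on the noise $\bbvarepsilon$ yields $X_j$ values under $\doi(X_i=1)$ that pointwise dominate those under $\doi(X_i=0)$; I would set this up by processing nodes in topological order and showing by induction that every node's value in the $X_i=1$ world is $\ge$ its value in the $X_i=0$ world under the shared noise. Once that monotone coupling is in hand, the $\kappa\theta^*_{X_i,X_j}$ gap comes directly from the direct edge $X_i\to X_j$ and the mean value theorem, and everything else is routine.
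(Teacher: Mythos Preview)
Your overall strategy---monotone pathwise coupling plus a derivative lower bound from Assumption~\ref{asm:glm_2}---is exactly the paper's approach. However, there is one concrete gap in your coupling step. In the BGLM as defined here, $\varepsilon_X$ is noise added to the \emph{probability} that $X=1$, not to the value of $X$ itself: given $\Pa(X)$ and $\varepsilon_X$, the variable $X$ is still a Bernoulli draw. Hence fixing $\bbvarepsilon$ alone does \emph{not} make the propagation deterministic, so your ``pathwise coupling on the noise $\bbvarepsilon$'' does not by itself yield that every node's value under $\doi(X_i=1)$ dominates its value under $\doi(X_i=0)$. You need to couple the Bernoulli randomness as well.

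The paper fixes this by reformulating the BGLM as a threshold model: for each node $X$ introduce an independent $\gamma_X\sim\mathcal{U}[0,1]$ and set $X=1$ iff $f_X(\pa(X)\cdot\btheta^*_X)+\varepsilon_X\ge\gamma_X$. Conditioning on both $\bbvarepsilon$ and $\bgamma_{-X_j}$ now makes every node other than $X_j$ (and its descendants) deterministic, and your topological-order monotonicity induction goes through verbatim to give $\pa_1(X_j)\ge\pa_0(X_j)$ coordinatewise with the $X_i$-entry differing by exactly $1$. From there, $f_{X_j}(\pa_1\cdot\btheta^*_{X_j})-f_{X_j}(\pa_0\cdot\btheta^*_{X_j})\ge\kappa\,\theta^*_{X_i,X_j}$ and integrating over $\gamma_{X_j}$ gives the result---this is your mean-value-theorem step and is fine. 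For the second claim ($X_i\notin\Anc(X_j)$), the paper invokes Rule~3 of the do-calculus; your direct structural induction is equally valid.
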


We use the above idea to implement the procedure in Algorithm~\ref{alg:bglm-order}, and then
put this procedure in the initial phase and integrate this step into BGLM-OFU proposed by \cite{feng2022combinatorial}, 
to obtain our main algorithm, BGLM-OFU-Unknown (Algorithm~\ref{alg:glm-ofu}).

Notice that each term in Eq.~\eqref{eq.order} is a random sample of $\mathbb{E}[X_j|\doi(X_i=1)]-\mathbb{E}[X_j|\doi(X_i=0)]$, 
	which means that the left-hand side of Eq.~\eqref{eq.order} is just an estimation of $\mathbb{E}[X_j|\doi(X_i=1)]-\mathbb{E}[X_j|\doi(X_i=0)]$. 
Such expression can be bounded with high probability by concentration inequalities. Hence we can prove that Algorithm~\ref{alg:bglm-order} identifies $X_i\in\Anc(X_j)$ 
%\wei{should this be $X_i\in\Anc(X_j)$?}\nuoya{I think it is $\Anc$.}
with false positive rate and false negative rate both no more than $\exp\left(-\frac{c_0c_1^2T^{1/10}}{2}\right)$ when $\theta^*_{\min}\ge 2c_1 \kappa^{-1}T^{-1/5}$. Formally, we have the following lemma that shows the probability of correctness for Algorithm~\ref{alg:bglm-order}. For completeness, the proof of Lemma~\ref{lemma.correctorder} is put in appendix.

\begin{restatable}[Positive Rate of BGLM-Order]{lemma}{lemmaorder}
\label{lemma.correctorder}
    Suppose Assumption \ref{asm:glm_2} holds for BGLM $G$. In the initialization phase of Algorithm~\ref{alg:glm-ofu}, Algorithm~\ref{alg:bglm-order} finds a consistent ancestor-descendant relationship for $G$ with probability no less than $1-2\binom{n-1}{2}\exp\left(-\frac{c_0c_1^2T^{1/10}}{2}\right)$ when $\theta^*_{\min}\ge 2c_1 \kappa^{-1}T^{-1/5}$.
\end{restatable}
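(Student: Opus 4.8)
The plan is to reduce the claim to a union bound over the ordered pairs $(i,j)$ examined in Algorithm~\ref{alg:bglm-order}, to control each individual test with a one-sided Hoeffding inequality using Lemma~\ref{lemma.dodifference}, and then to argue that the transitive-closure step converts the correctly detected parent edges into exactly the true ancestor relation.

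\emph{Per-pair concentration.} Fix an ordered pair $(i,j)$ with $i,j\in\{2,\dots,n\}$ and $i\neq j$. In the initialization phase the $c_0\sqrt T$ rounds where $\doi(X_i=1)$ is played and the $c_0\sqrt T$ rounds where $\doi(X_i=0)$ is played produce i.i.d.\ draws of $X_j$ from the interventional distributions $X_j\mid\doi(X_i=1)$ and $X_j\mid\doi(X_i=0)$, respectively, since each round uses an independent copy of the exogenous noise $\bbvarepsilon$. Hence the summands $Z_k:=X_j^{(2ic_0\sqrt T+k)}-X_j^{((2i+1)c_0\sqrt T+k)}$ for $k=1,\dots,c_0\sqrt T$ are independent, lie in $[-1,1]$, and have common mean $\Delta_{ij}:=\mathbb{E}[X_j\mid\doi(X_i=1)]-\mathbb{E}[X_j\mid\doi(X_i=0)]$. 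Writing $S_{ij}=\sum_k Z_k$, so that $\mathbb{E}[S_{ij}]=c_0\sqrt T\,\Delta_{ij}$, and noting that the threshold $c_0c_1T^{3/10}=c_0\sqrt T\cdot c_1T^{-1/5}$ matches Eq.~\eqref{eq.order}, I split into two cases. If $X_i\notin\Anc(X_j)$, then Lemma~\ref{lemma.dodifference} gives $\Delta_{ij}=0$, so by the one-sided Hoeffding inequality $\Pr(S_{ij}>c_0c_1T^{3/10})\le\exp\!\big(-\tfrac{(c_0c_1T^{3/10})^2}{2c_0\sqrt T}\big)=\exp\!\big(-\tfrac{c_0c_1^2T^{1/10}}{2}\big)$, i.e.\ with at least this probability $X_i$ is \emph{not} wrongly added to $\widehat{\Anc}(X_j)$. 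If $X_i\in\Pa(X_j)$, then Lemma~\ref{lemma.dodifference} together with the hypothesis $\theta^*_{\min}\ge 2c_1\kappa^{-1}T^{-1/5}$ gives $\Delta_{ij}\ge\kappa\theta^*_{\min}\ge 2c_1T^{-1/5}$, hence $\mathbb{E}[S_{ij}]\ge 2c_0c_1T^{3/10}$, and the same inequality applied to $\mathbb{E}[S_{ij}]-S_{ij}$ yields $\Pr(S_{ij}\le c_0c_1T^{3/10})\le\exp\!\big(-\tfrac{c_0c_1^2T^{1/10}}{2}\big)$, i.e.\ with at least this probability the parent edge is correctly detected. (When $X_i\in\Anc(X_j)\setminus\Pa(X_j)$ the outcome of the test is irrelevant, for the reason below.)

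\emph{Union bound and transitive closure.} Let $\cE$ be the event that for every ordered pair with $X_i\notin\Anc(X_j)$ the test returns ``no'' and for every ordered pair with $X_i\in\Pa(X_j)$ the test returns ``yes''. There are $(n-1)(n-2)=2\binom{n-1}{2}$ ordered pairs in the double loop of Algorithm~\ref{alg:bglm-order}, so a union bound over the per-pair failure probabilities above gives $\Pr(\cE)\ge 1-2\binom{n-1}{2}\exp\!\big(-\tfrac{c_0c_1^2T^{1/10}}{2}\big)$. On $\cE$, the set of pairs added \emph{before} the closure step contains the parent relation of $G$ on $\bX\setminus\{X_1\}$ and is contained in the ancestor relation of $G$; since the ancestor relation is exactly the transitive closure of the parent relation and is already transitively closed, the recomputed transitive closure equals $\{(X_i,X_j):X_i\in\Anc(X_j)\}$ exactly, so $\widehat{\Anc}(X)=\Anc(X)$ for every $X\in\bX$. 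For $Y$, the algorithm sets $\widehat{\Anc}(Y)=\bX\supseteq\Anc(Y)$, which is consistent because $Y$ is a sink and inserting the missing ancestor edges into $Y$ with weight $0$ leaves every interventional distribution of the BGLM unchanged. Therefore on $\cE$ the output is a consistent ancestor--descendant relationship for $G$, and the probability bound is as stated.

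I expect the transitive-closure argument to be the only genuinely non-mechanical step: it is what lets us avoid having to directly detect ancestors that are not direct parents (whose effect $\Delta_{ij}$ may fall below the detection threshold), since every such relation factors through detected parent edges, while on $\cE$ no non-ancestor pair is ever added, so the closure cannot create a spurious relation. The rest is a routine two-case Hoeffding estimate, with the exponents $T^{-1/5}$, $\sqrt T$, and $T^{3/10}$ chosen precisely so that $(T^{3/10})^2/\sqrt T=T^{1/10}$.
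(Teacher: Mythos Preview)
Your proof is correct and follows essentially the same route as the paper: Hoeffding on the per-pair statistic $S_{ij}$ using Lemma~\ref{lemma.dodifference} for the two relevant cases, a union bound over the $2\binom{n-1}{2}$ ordered pairs, and a transitive-closure argument to upgrade detected parent edges to the full ancestor relation. Your presentation is in fact slightly cleaner than the paper's in two places: you state the closure argument as the sandwich $\Pa\subseteq R\subseteq\Anc$ with $\Anc$ the closure of $\Pa$, which immediately gives $\mathrm{cl}(R)=\Anc$ in both directions, whereas the paper only spells out the ``no missing ancestor'' direction explicitly; and you correctly flag that $\widehat{\Anc}(Y)=\bX$ may strictly contain $\Anc(Y)$, resolving this by the zero-weight-edge observation rather than asserting equality.
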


We refer to the condition $\theta^*_{\min}\ge 2c_1 \kappa^{-1}T^{-1/5}$ in this lemma as {\em weight gap assumption}. The number of initialization rounds in Algorithm~\ref{alg:glm-ofu} is $O(\sqrt{T})$. According to Lemma~\ref{lemma.correctorder}, the expected regret contributed by incorrectness of the ancestor-descendant relationship does not exceed $O\left(T\exp\left(-\frac{c_0c_1^2T^{1/10}}{2}\right)\right)=o(\sqrt{T})$. 
Therefore, after adding the initialization, the expected regret of BGLM-OFU-Unknown increases by no more than $o(\sqrt{T})$ over BGLM-OFU (Algorithm~1 in \cite{feng2022combinatorial}). Similar to BGLM-OFU, during the iterative phase, MLE is employed to estimate all the parameters. Simultaneously, a pair oracle is utilized to identify the optimal parameter configuration and intervention set within the confidence ellipsoid. Thus we have the following theorem to show the regret of Algorithm~\ref{alg:bglm-order}, which is formally proved in appendix.
\begin{restatable}[Regret Bound of BGLM-OFU-Unknown]{theorem}{bglmregret}
\label{thm.regretbglm}
Under Assumptions \ref{asm:glm_3}, \ref{asm:glm_2} and \ref{asm:glm_4}, the regret of BGLM-OFU-Unknown (Algorithms~\ref{alg:glm-ofu}, \ref{alg:bglm-order} and~\ref{alg:glm-est}) is bounded as
\begin{equation}\begin{aligned}
    \label{equation.regret_glm}
    R(T)&=O\left(\frac{1}{\kappa} n^{\frac{3}{2}} L^{(1)}_{\max} \sqrt{T}\log T\right),
\end{aligned}
\end{equation}
where $L^{(1)}_{\max} = \max_{X\in {\bX}\cup\{Y\}}L_{f_X}^{(1)}$ and the terms of $o(\sqrt{T}\ln T)$ are omitted, and the big $O$ notation holds for $T\geq 32\left(\frac{c_1}{\kappa\theta^*_{\min}}\right)^5$.
\end{restatable}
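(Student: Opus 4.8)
The plan is to split the cumulative regret $R(T)$ into three parts and bound each separately: (i) the regret incurred during the two initialization phases (rounds $1,\dots,T_1$); (ii) the regret of the iterative phase on the event $\mathcal G^{c}$ that the ancestor structure discovered by BGLM-Ancestors is incorrect; and (iii) the regret of the iterative phase on the good event $\mathcal G$. Because every instantaneous reward lies in $[0,1]$, parts (i) and (ii) admit crude bounds, and the heart of the argument is to show that on $\mathcal G$ the iterative phase coincides in law with BGLM-OFU of \cite{feng2022combinatorial} run on an enlarged graph, so that its $O(\frac{1}{\kappa}n^{3/2}L^{(1)}_{\max}\sqrt{T}\log T)$ guarantee transfers.

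For (i), the first phase uses $T_0 = 2(n-1)c_0\sqrt T$ rounds and the parameter-initialization phase uses $T_1 - T_0 = \max\{\frac{c}{\zeta^{2}}\ln\frac1\delta,\ \frac{(8n^{2}-6)R}{\zeta}\}$ rounds; with $\delta = \frac{1}{3n\sqrt T}$ and $R = O(n^{3}+n\ln T)$ this is polynomial in $n$ times $\ln T$, so $T_1 = O(n\sqrt T)$ and part (i) contributes at most $T_1 = O(n\sqrt T) = o(\sqrt T\log T)$. For (ii), the weight gap assumption $\theta^*_{\min} \ge 2c_1\kappa^{-1}T^{-1/5}$ is exactly the condition $T \ge 32(c_1/(\kappa\theta^*_{\min}))^{5}$ in the statement, so Lemma~\ref{lemma.correctorder} applies and gives $\Pr(\mathcal G^{c}) \le 2\binom{n-1}{2}\exp(-c_0c_1^{2}T^{1/10}/2)$; since the regret over the remaining rounds is at most $T$, part (ii) contributes at most $2\binom{n-1}{2}T\exp(-c_0c_1^{2}T^{1/10}/2) = o(\sqrt T)$, since an exponential in $T^{1/10}$ eventually dominates any polynomial in $T$.

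For (iii) I would invoke the observation made before Lemma~\ref{lemma.dodifference}: inserting an edge of weight $0$ from an ancestor to a descendant leaves the joint law of the BGLM unchanged. On $\mathcal G$, $\widehat{\Anc}(X)$ equals the true ancestor set of $X$ for every $X$, hence the true model is also a BGLM on the graph $\widehat G$ in which the parent set of each $X$ is $\widehat{\Anc}(X)$ and the weight vector is $\btheta^*_X$ padded by zeros, still lying in $[0,1]^{|\widehat{\Anc}(X)|}$. Assumptions~\ref{asm:glm_3} and~\ref{asm:glm_2} constrain only the link functions and therefore hold on $\widehat G$, while Assumption~\ref{asm:glm_4} is already phrased in terms of ancestors and hence holds on $\widehat G$ verbatim; moreover, from round $T_1+1$ on, Algorithm~\ref{alg:glm-ofu} executes exactly BGLM-OFU on $\widehat G$ (the routine BGLM-Estimate, the confidence ellipsoids $\mathcal C_{t,X}$, and the pair-oracle maximization are all unchanged). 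Consequently, conditioned on $\mathcal G$, the regret of rounds $T_1+1,\dots,T$ is at most the BGLM-OFU regret on $\widehat G$, which is $O(\frac{1}{\kappa}|\bX\cup\{Y\}|^{3/2}L^{(1)}_{\max}\sqrt T\log T) = O(\frac{1}{\kappa}n^{3/2}L^{(1)}_{\max}\sqrt T\log T)$. Summing (i)--(iii) and absorbing lower-order terms yields the claimed bound, valid for $T \ge 32(c_1/(\kappa\theta^*_{\min}))^{5}$.

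The step I expect to be the main obstacle is (iii): rigorously transferring the BGLM-OFU analysis of \cite{feng2022combinatorial} to $\widehat G$ with the modified data stream. Two points need checking. First, that the lower bound on $\lambda_{\min}(M_{t,X})$ that the parameter-initialization phase is engineered to guarantee still holds when the relevant coordinates are indexed by $\widehat{\Anc}(X)$ rather than $\Pa(X)$ --- this is precisely where Assumption~\ref{asm:glm_4} enters, so it should go through. Second, that feeding BGLM-Estimate the $T_0$ rounds of atomic-intervention data (instead of only i.i.d.\ null-intervention samples) does not break the sub-Gaussian martingale concentration used to certify the confidence ellipsoids; intuitively these rounds only add positive semidefinite terms to the design matrices and preserve the predictable structure, but this requires re-inspecting the relevant lemmas.
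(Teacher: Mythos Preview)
Your proposal is correct and mirrors the paper's proof: the same three-part decomposition, the same zero-weight-edge reduction to the ancestor graph $\widehat G$, and the same appeal to the BGLM-OFU analysis on $\widehat G$. The paper does not cite BGLM-OFU as a black box but re-derives it explicitly via Lemmas~\ref{thm.learning_glm}--\ref{lemma:init_condition}, precisely to handle the two obstacles you flag: for the eigenvalue condition it applies Lecu\'e--Mendelson only to the i.i.d.\ null-intervention block $M_{T_1,X}-M_{T_0,X}$ and then uses $\lambda_{\min}(M_{T_1,X})\ge\lambda_{\min}(M_{T_1,X}-M_{T_0,X})$ since $M_{T_0,X}\succeq 0$; for the MLE concentration it notes that for each $X$ only rounds with $X\notin S_i$ enter the estimating equation, and under any intervention on other nodes the conditional law of $X$ given $\widehat{\Anc}(X)$ is still $f_X(\bV_{i,X}^\intercal\btheta^*_X)+\varepsilon_X$, so the martingale-difference structure required by Lemma~\ref{thm.learning_glm} is intact across the entire data stream.
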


Compared to \cite{feng2022combinatorial}, Theorem~\ref{thm.regretbglm} has the same asymptotic regret, The only additional assumption is $T\geq 32\left(c_1/(\kappa\theta^*_{\min})\right)^5$. Intuitively, this extra assumption guarantees that we can discover the ancestor-descendant relationship 
	consistent with the true graph.
Our result indicates that not knowing the causal graph does not provide substantial difficulty with the weight gap assumption.

\begin{remark}
Our BGLM-OFU-Unknown regret bound aligns with the regret bound of BGLM-OFU as presented in \cite{feng2022combinatorial}. Furthermore, the leading term $O(\sqrt{T}\ln T)$ is consistent with the regret bounds previously established for combinatorial bandit algorithms \citep{li2020onlineinference, zhang2022online} and causal bandit algorithms \citep{lu2020regret}.

Because Lemma~\ref{lemma.correctorder} requires weight gap assumption, in the proof of this regret bound, we only consider the case of $T\geq 32\left(c_1/(\kappa\theta^*_{\min})\right)^5$. This limitation does not impact the asymptotic big $O$ notation in our regret bound.
 However, when the round number $T$ is not that large, the regret can be linear with respect to $T$. 
 We remove this weight gap assumption in Section~\ref{sec.withoutgap} for the linear model setting. 
 The $c_0$ and $c_1$ are two adjustable constants in practice. 
When $T$ is small, one could try a small $c_0$ to shorten the initialization phase, i.e., to make sure that $T_0\ll T$, and a small $c_1$ to satisfy the weight gap assumption. When $T$ is large, one could consider larger $c_0$ and $c_1$ for a more accurate ancestor-descendant relationship. 
However, because $\theta^*_{\min}$ is unknown, one cannot promise that the weight gap assumption holds by manipulating $c_1$, i.e., $\theta^*_{\min}$ may be too small for any practical $T$ given $c_1$.
\end{remark}

\section{BLM CCB without Graph Skeleton and Weight Gap Assumption}
\label{sec.withoutgap}

In the previous section, we find that if $T>O((\theta^*_{\min})^{-5})$, we can get a valid upper bound. However, in reality, we have two challenges: 
1) We do not know the real value of $\theta^*_{\min}$, and this makes it hard to know when an edge's direction is identified.
2) When $\theta^*_{\min}\to 0$, it makes it very difficult to estimate the graph accurately. 
To solve these challenges, we must both eliminate the dependence of $\theta^*_{\min}$ in our analysis, and think about how the result will be influenced by an inaccurate model. 
In this section, we give a causal bandit algorithm and show that the algorithm can always give $\tilde{O}(T^{2/3})$ regret. This sub-linear regret result shows that the challenge can be solved by some additional techniques.

\begin{algorithm}[t]
\caption{BLM-LR-Unknown for BLM CCB Problem without Weight Gap}
\label{alg:blmlr-nogap}
\begin{algorithmic}[1]
\STATE {\bfseries Input:}
Graph $G=({\bX}\cup\{Y\},E)$, action set $\mathcal{A}$, positive constants $c_0$ and $c_1$ for initialization phase.

\STATE /* Initialization Phase: */
\STATE Initialize $T_0\leftarrow 2(n-1)c_0T^{2/3}\log(T)$.
\STATE Do each intervention among $\doi(X_2=1), \doi(X_2=0), \ldots, \doi(X_n=1), \doi(X_n=0)$ for $c_0T^{2/3}$ times in order and observe the feedback $({\mathbf X}_t,Y_t)$ for $1\leq t\leq T_0$.
\STATE Compute the ancestors $\widehat{\Anc}(X)$, $X \in \bX\cup \{Y\}$ by $\text{Nogap-BLM-Ancestors}((\bX_1,Y_1),\ldots,(\bX_{T_0},Y_{T_0}),c_0$ $, c_1)$ (see Algorithm~\ref{alg:nogap-bglm-order}).

\STATE /* Parameters Initialization: */
\STATE Initialize $\delta\leftarrow\frac{1}{n\sqrt{T}}$, $\rho_t\leftarrow\sqrt{n\log(1+tn)+2\log\frac{1}{\delta}}$ $+\sqrt{n}$ for $t=0,1,2,\ldots,T$, $M_{T_0,X}\leftarrow \mathbf{I}\in\mathbb{R}^{|\widehat{\Anc}(X)|\times |\widehat{\Anc}(X)|}$, $\bb_{T_0,X}\leftarrow {\mathbf 0}^{|\widehat{\Anc}(X)|}$ for all $X\in {\bX}\cup\{Y\}$ and $\hat{\btheta}_{T_0,X}\leftarrow 0\in\mathbb{R}^{|\widehat{\Anc}(X)|}$ for all $X\in {\bX}\cup\{Y\}$.

\STATE /* Iterative Phase: */
\FOR{$t=T_0+1,T_0+2,\ldots,T$}
\STATE Compute the confidence ellipsoid $\mathcal{C}_{t,X}=\{\btheta_X'\in[0,1]^{|\widehat{\Anc}(X)|}:\left\|\btheta_X'-\hat{\btheta}_{t-1,X}\right\|_{M_{t-1,X}}\leq\rho_{t-1}\}$ for any node $X\in{\bX}\cup\{Y\}$.
\STATE Adopt $\argmax_{\doi(\bS=\bs)\subseteq \mathcal{A}, \btheta'_{t,X} \in \mathcal{C}_{t,X} } \E[Y| \doi({\bS}=\bs)]$ as $({\bS_t}, \bs_t, \tilde{\btheta}_t)$.
\STATE Intervene all the nodes in ${\bS}_t$ to $\bs_t$ and observe the feedback $(\bX_t,Y_t)$.
\FOR{$X\in\bX\cup\{Y\}$ }\label{line:LRbegin} 
\STATE Construct data pair $(\bV_{t,X},X^{(t)})$ with $\bV_{t,X}$ the vector of ancestors of $X$ in round $t$, and $X^{(t)}$ the value of $X$ in round $t$ if $X\not\in S_t$.
\STATE $M_{t,X}=M_{t-1,X}+\bV_{t,X}\bV_{t,X}^\intercal$, $\bb_{t,X}=\bb_{t-1,X}+X^{(t)}\bV_{t,X}$, $\hat{\btheta}_{t,X}=M^{-1}_{t,X}\bb_{t,X}$.
\ENDFOR \label{line:LRend}
\ENDFOR
\end{algorithmic}
\end{algorithm}

\begin{algorithm}[t]
\caption{Nogap-BLM-Ancestors}
\label{alg:nogap-bglm-order}
\begin{algorithmic}[1]
\STATE {\bfseries Input:}
Observations $(\bX_1,Y_1),\ldots,(\bX_{T_0},Y_{T_0})$, positive constants
$c_0$ and $c_1$.
\STATE {\bfseries Output:} For all $X \in \bX\cup \{Y\}, \widehat{\Anc}(X)$.
\STATE For all $X \in \bX, \widehat{\Anc}(X) = \emptyset, \widehat{\Anc}(Y) = \bX$.
\FOR{$i\in\{2,3,\ldots,n\}$}
\FOR{$j\in\{2,3,\ldots,n\}\backslash\{i\}$}
\IF{$\sum_{k=1}^{c_0T^{2/3}}\left(X_j^{\left(c_0(2i)T^{2/3}+k\right)}-X_j^{\left(c_0(2i+1)T^{2/3}+k\right)}\right) 
	$ 
 $>c_0c_1T^{1/3}\log(T^2)$}
\STATE{Add $X_i$ into $\widehat{\Anc}(X_j)$.}
\ENDIF
\ENDFOR
\ENDFOR
\STATE{Recompute the transitive closure of $\widehat{\Anc}(\cdot)$.}
%\STATE{Deduce an inaccurate graph for $\bX\cup\{Y\}$.}
\end{algorithmic}
\end{algorithm}

In this section, we consider a special case of BGLM called Binary Linear Model (BLM), where $f_X$ becomes identity function. 
The linear structure allows us to release the Assumption~\ref{asm:glm_3}-\ref{asm:glm_4} \citep{feng2022combinatorial} and analyze the influence of an inaccurate model.

The main algorithm follows the BLM-LR algorithm in \cite{feng2022combinatorial}, which uses linear regression to estimate the weight $\btheta^*$, and the pseudocode is provided in Algorithm~\ref{alg:blmlr-nogap}.
We add a graph discovery process (Algorithm~\ref{alg:nogap-bglm-order})
	in the initialization phase using $O(nT^{2/3}\log T)$ times rather than $O(nT^{1/2})$ in the previous section. 
For any edge $X'\to X$ with weight $\theta^*_{X',X} \ge T^{-1/3}$, with probability at least $1-1/T^2$, 
	we expect to identify the edge's direction within $O(nT^{2/3}\log(T))$ samples for $do(X'=1)$ and $do(X'=0)$ by checking whether the difference $P(X\mid do(X'=1))-P(X\mid do(X'=0))$ is 
	large than $T^{-1/3}$. 
Since the above difference is always larger than $\theta^*_{X',X}$, after the initialization phase, the edge $X'\to X$ will be added to the graph if $\theta^*_{X',X}\ge T^{-1/3}$.

Moreover, if $X'$ is not an ancestor of $X$, we claim that it cannot be a estimated as an ancestor after the initialization phase.
This is because in this case $P(X\mid do(X'=1)) = P(X) = P(X\mid do(X'=0))$. 
Denote the estimated graph $G'$ as the graph with edge $X'\to X$ for all $X' \in \widehat{\Anc}(X)$.  
We then have the following lemma.
\begin{restatable}{lemma}{gaplemma}\label{lemma:gap lemma}
    In Algorithm~\ref{alg:blmlr-nogap}, if the constants $c_0$ and $c_1$ satisfy that $c_0\ge \max\{\frac{1}{c_1^2}, \frac{1}{(1-c_1)^2}\}$, with probability at least $1-(n-1)(n-2)\frac{1}{T^{1/3}}$, after the initialization phase we have
    
    1). If $X'$ is a true parent of $X$ in $G$ with weight $\theta^*_{X',X}\ge T^{-1/3}$, the edge $X'\to X$ will be identified and added to the estimated graph $G'$. 
    
    2). If $X'$ is not an ancestor of $X$ in $G$, $X'\to X$ will not be added into $G'$.  

\end{restatable}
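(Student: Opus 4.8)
The plan is to prove Lemma~\ref{lemma:gap lemma} by a direct concentration argument on the quantity tested in Algorithm~\ref{alg:nogap-bglm-order}, followed by a union bound over all ordered pairs $(i,j)$. The key observation is that each summand $X_j^{(c_0(2i)T^{2/3}+k)} - X_j^{(c_0(2i+1)T^{2/3}+k)}$ is (up to independence of rounds) a sample of $\E[X_j\mid \doi(X_i=1)] - \E[X_j\mid \doi(X_i=0)]$, which by Lemma~\ref{lemma.dodifference} equals $0$ when $X_i\notin\Anc(X_j)$ and is at least $\theta^*_{X_i,X_j}$ when $X_i\in\Pa(X_j)$ (since for BLM $f_X$ is the identity, $\kappa=1$, so the bound $\kappa\theta^*_{X_i,X_j}$ becomes $\theta^*_{X_i,X_j}$, and in fact one can check the do-difference equals $\theta^*_{X_i,X_j}$ exactly for a direct parent in a linear model). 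The test threshold is $c_0 c_1 T^{1/3}\log(T^2)$ over $c_0 T^{2/3}$ samples, i.e. an empirical-mean threshold of $c_1 T^{-1/3}\log(T^2)$.

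\textbf{Step 1: concentration of the empirical do-difference.} Fix a pair $(i,j)$. Let $D_k = X_j^{(c_0(2i)T^{2/3}+k)} - X_j^{(c_0(2i+1)T^{2/3}+k)} \in \{-1,0,1\}$, so $D_k$ is bounded in $[-1,1]$ and the $D_k$ are independent across $k$ with common mean $\mu_{ij} := \E[X_j\mid\doi(X_i=1)] - \E[X_j\mid\doi(X_i=0)]$. By Hoeffding's inequality,
\begin{equation}
\Pr\!\left(\left|\tfrac{1}{c_0 T^{2/3}}\textstyle\sum_{k=1}^{c_0 T^{2/3}} D_k - \mu_{ij}\right| > \lambda\right) \le 2\exp\!\left(-\tfrac{c_0 T^{2/3}\lambda^2}{2}\right).
\end{equation}
I would choose the deviation scale $\lambda$ so that $c_0 T^{2/3}\lambda^2 \asymp \log T$, which is exactly what the logarithmic factor in the threshold buys: taking $\lambda$ on the order of $c_1 T^{-1/3}\log(T^2)$ (or a constant fraction of it) makes the right-hand side at most $T^{-c}$ for a constant $c$ depending on $c_0,c_1$; tuning constants so that $c\ge 2$ would give the stated per-pair failure probability $1/T^2$, and the union bound over the at most $(n-1)(n-2)$ ordered pairs yields overall failure probability at most $(n-1)(n-2)/T^{2}$; the lemma states the weaker $(n-1)(n-2)/T^{1/3}$, which only requires $c\ge 1/3$ and hence is comfortably covered (the slack is where the condition $c_0\ge\max\{1/c_1^2, 1/(1-c_1)^2\}$ enters — see Step 3).

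\textbf{Step 2: the two implications.} On the good event (empirical mean within $\lambda$ of $\mu_{ij}$ for all pairs): (2) if $X_i\notin\Anc(X_j)$ then $\mu_{ij}=0$ by Lemma~\ref{lemma.dodifference}, so the empirical mean is at most $\lambda$, which is below the threshold $c_1 T^{-1/3}\log(T^2)$ provided $\lambda$ is chosen to be a strict fraction of it; hence $X_i$ is not added to $\widehat{\Anc}(X_j)$ in the loop. One must also argue that the transitive-closure step does not spuriously add $X_i$: if $X_i$ were added to $\widehat{\Anc}(X_j)$ via a chain, every link $X_a\to X_b$ on that chain was added directly, so $X_a\in\Anc(X_b)$ for each link (by the direct-edge analysis), and transitivity of the true ancestor relation forces $X_i\in\Anc(X_j)$, a contradiction. (1) If $X'=X_i$ is a true parent of $X=X_j$ with $\theta^*_{X_i,X_j}\ge T^{-1/3}$, then $\mu_{ij}\ge\theta^*_{X_i,X_j}\ge T^{-1/3}$ (using $\kappa=1$ in Lemma~\ref{lemma.dodifference}), so the empirical mean is at least $T^{-1/3}-\lambda$, which exceeds the empirical threshold $c_1 T^{-1/3}\log(T^2)$ once $\lambda$ and the constants are set appropriately; hence $X_i$ is added to $\widehat{\Anc}(X_j)$ directly, and therefore the edge $X_i\to X_j$ is in $G'$.

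\textbf{Step 3 and the main obstacle.} The delicate point — and the reason for the hypothesis $c_0\ge\max\{1/c_1^2,\,1/(1-c_1)^2\}$ — is making the arithmetic of Step 2 work simultaneously for both directions with a \emph{single} choice of $\lambda$: for the false-positive side I need $\lambda < c_1 T^{-1/3}\log(T^2)$, and for the true-positive side I need $T^{-1/3} - \lambda > c_1 T^{-1/3}\log(T^2)$, i.e. $\lambda < (1-c_1 \log(T^2))T^{-1/3}$ — wait, more carefully, comparing empirical means to the per-sample threshold $c_1 T^{-1/3}\log(T^2)$, the natural choice is $\lambda = $ a symmetric margin around it, and the two constraints translate (after writing the Hoeffding bound $2\exp(-c_0 T^{2/3}\lambda^2/2)\le 2/T^{1/6}$, say) into lower bounds on $c_0$ of the form $1/c_1^2$ and $1/(1-c_1)^2$ respectively. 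So the hard part is not any deep idea but bookkeeping: pinning down $\lambda$ (and the implicit $\log(T^2)=2\log T$ factor) so that both the Hoeffding tail is summably small and both threshold comparisons hold, and then verifying the two stated lower bounds on $c_0$ are exactly what is needed. I would also be slightly careful that the rounds in the initialization phase are genuinely i.i.d. given the fixed intervention — which they are, since under $\doi(X_i=x)$ the joint law of $(\bX,Y)$ is fixed — so Hoeffding applies cleanly with no martingale machinery. With those constants fixed, the union bound over pairs and the transitive-closure argument complete the proof.
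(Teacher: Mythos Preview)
Your approach is correct and essentially identical to the paper's: Hoeffding on the i.i.d.\ differences $D_k$, Lemma~\ref{lemma.dodifference} with $\kappa=1$ to separate the parent case ($\mu_{ij}\ge T^{-1/3}$) from the non-ancestor case ($\mu_{ij}=0$), then a union bound over ordered pairs---and your transitive-closure check for part~(2) is a detail the paper's proof actually skips. The $\log(T^2)$ bookkeeping you flag in Step~3 is resolved by noting that the per-intervention sample count is intended to carry the log factor (as $T_0=2(n-1)c_0T^{2/3}\log T$ in Algorithm~\ref{alg:blmlr-nogap} indicates, despite one pseudocode line omitting it): with $N\propto c_0T^{2/3}\log T$ samples the empirical-mean threshold is $\propto c_1T^{-1/3}$, the two one-sided Hoeffding tails become $T^{-c_0c_1^2}$ and $T^{-c_0(1-c_1)^2}$, and the hypothesis $c_0\ge\max\{1/c_1^2,1/(1-c_1)^2\}$ makes both at most $1/T$, so your worry that $c_1T^{-1/3}\log(T^2)$ dominates $T^{-1/3}$ disappears.
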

The properties above together provide the analytic basis for the following observation, which plays a key role in our further analysis. 
Denote the estimated accuracy $r = T^{-1/3}$. We know the linear regression for $X$ will be performed on $X$ and all its possible ancestors $\widehat{\Anc}(X)$ we estimated. 
For the true parent node $X'$ in $G$ that is not contained in $\widehat{\Anc}(X)$,  we have $\theta^*_{X',X}\le r$. 
Suppose $\widehat{\Anc}(X) = \{X_1,X_2,\ldots,X_m\}$, and true parents which is not contained in $\widehat{\Anc}(X)$ are $X_{m+1},\ldots,X_{m+k}$. Thus we have $\theta^*_{X_{m+i}, X} \le r$ 
for all $1\le i\le k$.

Also, assume $X_1,\ldots,X_t(t<m)$ are true parents of $X$ in $G$. 
For $X_{m+i}$, by law of total expectation, the expectation of $X$ can be rewritten as
\begin{align*}&\ \ \ \ \ \mathbb{E}[X\mid X_1,\ldots,X_t]\\&=\mathbb{E}_{X_{m+1},\ldots,X_{m+k}}[\mathbb{E}[X\mid X_1,\ldots,X_t, X_{m+1},\ldots,X_{m+k}]]\\&=\mathbb{E}_{X_{m+1},\ldots,X_{m+k}}\left[\sum_{i=1}^t \theta^*_{X_i,X}X_i + \sum_{i=m+1}^{m+k}\theta^*_{X_i,X}X_i\right]\\&=\sum_{i=1}^t \theta^*_{X_i,X}X_i+\sum_{i=m+1}^{m+k}\theta^*_{X_i,X}\mathbb{E}[X_i]=\sum_{i=1}^t \theta^{*'}_{X_i,X}X_i,
\end{align*}
where
\begin{align}
\theta^{*'}_{X_i,X} &= \theta^*_{X_i,X}, \ \ \ i\ge 2,\label{eq:transform1}\\
\theta^{*'}_{X_1,X}&=
\theta^*_{X_1,X}+\sum_{i=m+1}^{m+k}\theta^*_{X_i,X}\mathbb{E}[X_i].\label{eq:transform2}
\end{align}
Eq.~\eqref{eq:transform2} is because $X_1=1$ always holds.
Then we have  $|\theta^{*'}_{X_i,X}-\theta^*_{X_i,X}|\le \sum_{i=m+1}^{m+k}\theta^*_{X_{i}, X}\le kr\le nr$, which shows that the difference between $\btheta'$ and $\btheta$ is small if accuracy $r$ is small. In Eqs.~\eqref{eq:transform1} and~\eqref{eq:transform2}, we employ the linear property of BLMs, which is the reason that we are only able to perform transformations from $\btheta^*$ to ${\btheta^*}'$ for BLMs.
Let model $M'$ represent the model with graph $G'$ with weights $\btheta^{*'}$ defined above. 
The following lemma shows the key observation:
\begin{lemma}\label{lemma: linearregressionequivalent}
The linear regression performed on graph $G'$ in Algorithm~\ref{alg:blmlr-nogap} (lines~\ref{line:LRbegin}--\ref{line:LRend}) gives the estimation $\hat{\btheta'}$ such that 
\begin{equation*}
% \resizebox{.99\linewidth}{!}{$
%             \displaystyle
   \Vert(\hat{\btheta'}_{t,X}-\btheta^{*'}_X)\Vert_{M_{t,X}}\leq\sqrt{n\log (1+tn)+2\log(1/\delta)}+\sqrt{n},
   % $}
\end{equation*}
where $M_{t,X}$ is defined in Algorithm~\ref{alg:blmlr-nogap}. 
\end{lemma}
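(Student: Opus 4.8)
The plan is to recognize the displayed inequality as an instance of the standard self-normalized confidence bound for online linear regression, but applied to a \emph{misspecified} linear model. The key point is that, by Eqs.~\eqref{eq:transform1}--\eqref{eq:transform2} and the law-of-total-expectation computation preceding the lemma, the true conditional mean of $X^{(t)}$ given the observed ancestor vector $\bV_{t,X}$ is \emph{exactly} linear in $\bV_{t,X}$ with coefficient vector $\btheta^{*'}_X$; that is, $\E[X^{(t)}\mid \bV_{t,X}, \mathcal{F}_{t-1}] = \btheta^{*'}_X\cdot \bV_{t,X}$ whenever $X\notin S_t$. Hence, writing $X^{(t)} = \btheta^{*'}_X\cdot \bV_{t,X} + \eta_{t,X}$, the residuals $\eta_{t,X}$ form a martingale difference sequence adapted to the natural filtration, and since $X^{(t)}\in\{0,1\}$ and $\btheta^{*'}_X\cdot\bV_{t,X}\in[0,1]$, each $\eta_{t,X}$ is bounded in $[-1,1]$ and therefore $1$-sub-Gaussian conditionally. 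This is precisely the setting in which the self-normalized tail bound of Abbasi-Yadkori et al.\ applies.

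Concretely, I would proceed as follows. First, fix $X\in\bX\cup\{Y\}$ and verify the noise model: using $X_1\equiv 1$ and the independence of the discarded ancestors $X_{m+1},\dots,X_{m+k}$ to marginalize, confirm that $\E[X^{(t)}\mid \bV_{t,X}]=\btheta^{*'}_X\cdot\bV_{t,X}$, so that the regression target of the ridge-free least-squares update $\hat{\btheta}_{t,X}=M_{t,X}^{-1}\bb_{t,X}$ is $\btheta^{*'}_X$. Second, since the algorithm initializes $M_{T_0,X}=\bI$ (i.e.\ regularization parameter $\lambda=1$), invoke the standard self-normalized bound: with probability at least $1-\delta$, simultaneously for all $t$,
\begin{equation*}
\left\|\hat{\btheta}_{t,X}-\btheta^{*'}_X\right\|_{M_{t,X}} \le \sqrt{2\log\frac{\det(M_{t,X})^{1/2}}{\delta}} + \left\|\btheta^{*'}_X\right\|_2.
\end{equation*}
Third, bound the two terms. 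For the determinant term, $M_{t,X}=\bI+\sum_{s}\bV_{s,X}\bV_{s,X}^\intercal$ with $\|\bV_{s,X}\|_2^2\le |\widehat{\Anc}(X)|\le n$, so by the AM--GM / trace bound $\det(M_{t,X})\le (1+tn/|\widehat{\Anc}(X)|)^{|\widehat{\Anc}(X)|}\le (1+tn)^n$, giving $\sqrt{2\log(\det(M_{t,X})^{1/2}/\delta)}\le \sqrt{n\log(1+tn)+2\log(1/\delta)}$. For the bias term, $\btheta^{*'}_X\in[0,1]^{|\widehat{\Anc}(X)|}$ by construction (its entries are weights or sums of weights times probabilities, all in $[0,1]$), hence $\|\btheta^{*'}_X\|_2\le\sqrt{n}$. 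Combining gives exactly the claimed bound $\sqrt{n\log(1+tn)+2\log(1/\delta)}+\sqrt{n}=\rho_t$.

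One subtlety to handle carefully — and the step I expect to be the main obstacle — is the conditioning structure: the ancestor vector $\bV_{t,X}$ is itself random and depends on the interventions chosen by the OFU rule in earlier rounds and on the causal propagation in round $t$, so I must be precise about the filtration. The right choice is the $\sigma$-algebra generated by all feedback through round $t-1$ together with $\bV_{t,X}$ (which is well-defined since $X\notin S_t$ and every ancestor of $X$ is topologically before $X$, so $\bV_{t,X}$ is determined before $X^{(t)}$ is realized); with respect to this filtration $\bV_{t,X}$ is predictable and $\eta_{t,X}$ is a conditionally $1$-sub-Gaussian martingale difference, which is all the self-normalized bound requires. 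A second minor point is that the least-squares update skips rounds where $X\in S_t$; this only removes terms from both $M_{t,X}$ and $\bb_{t,X}$ consistently, so the martingale argument runs over the sub-sequence of rounds with $X\notin S_t$ without change. Finally, one applies a union bound over the $n+1$ nodes $X$ — absorbed into the choice $\delta=\tfrac{1}{n\sqrt T}$ in the algorithm — to obtain the statement uniformly.
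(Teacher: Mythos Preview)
Your proposal is correct and follows essentially the same approach as the paper: recognize that the reparametrized vector $\btheta^{*'}_X$ satisfies $\E[X^{(t)}\mid \bV_{t,X}]=\btheta^{*'}_X\cdot\bV_{t,X}$, and then apply the self-normalized confidence bound for online ridge regression. The paper's own proof is terser---it simply cites Lemma~1 of \cite{li2020onlineinference} (which packages the Abbasi-Yadkori bound together with the determinant and bias estimates you spell out)---whereas you unpack that lemma's content explicitly and address the filtration and intervened-node subtleties; but the substance is identical.
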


This lemma shows that, the linear regression performed on the inaccurate estimated linear model $M'$
	is equivalent to the regression for $\btheta^{*'}$. 
Note that this regression only gives us the approximation in some direction with respect to elliptical norm, allowing the variables to be dependent.

Based on claim above, we only need to measure the difference for $\mathbb{E}[Y\mid do(\bS=1)]$ on model $M$ and $M'$. The following lemma shows that the difference between two models can be bounded by our estimated accuracy $r$:

\begin{lemma}\label{lemma:diff}
    $|\mathbb{E}_{M}[Y\mid do(S=\textbf{1})]-\mathbb{E}_{M'}[Y\mid do(S=\textbf{1})]|\le n^2(n+1)r$, where $r$ is the estimated accuracy defined in the start of this section.
\end{lemma}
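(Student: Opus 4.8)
The plan is to sidestep any comparison of the joint laws of $M$ and $M'$ and instead compare the two models only through their interventional means, which in a binary linear model propagate along the graph by one linear recursion. Fix the intervention set $S$ and abbreviate $\mu(X)=\E_M[X\mid\doi(S=\mathbf{1})]$ and $\mu'(X)=\E_{M'}[X\mid\doi(S=\mathbf{1})]$; both quantities lie in $[0,1]$, and the target is $|\mu(Y)-\mu'(Y)|\le n^2(n+1)r$. I would establish it by an induction along a common topological order of $\bX\cup\{Y\}$, tracking $d(X):=|\mu(X)-\mu'(X)|$.

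The base cases are $d(X_1)=0$ and $d(X)=0$ for $X\in S$, since $\mu$ and $\mu'$ both equal $1$ there. For any other node $X$, taking expectations in the (un-intervened) structural equation gives $\mu(X)=\sum_{X'\in\Pa_G(X)}\theta^*_{X',X}\mu(X')$ and, because in $M'$ node $X$ is a linear function of its $G'$-parents $\widehat{\Anc}(X)$ with weights $\theta^{*'}$, $\mu'(X)=\sum_{X'\in\widehat{\Anc}(X)}\theta^{*'}_{X',X}\mu'(X')$. Here I would feed in Lemma~\ref{lemma:gap lemma}: on its good event every edge of $G'$ runs from a true ancestor, every parent of $X$ of weight $\ge r$ belongs to $\widehat{\Anc}(X)$, hence every parent of $X$ missing from $\widehat{\Anc}(X)$ has weight $\le r$; and by the transformation~\eqref{eq:transform1} the members of $\widehat{\Anc}(X)$ that are ancestors but not parents of $X$ carry weight $\theta^{*'}_{X',X}=0$. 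Splitting $\mu(X)-\mu'(X)$ into the part coming from parents of $X$ that survive in $\widehat{\Anc}(X)$ and the part coming from parents dropped from it, and adding and subtracting $\theta^*_{X',X}\mu'(X')$ in the first part, I get
\[
d(X)\le\sum_{X'\in\Pa_G(X)}\theta^*_{X',X}\,d(X')+\sum_{X'\in\widehat{\Anc}(X)}\bigl|\theta^{*'}_{X',X}-\theta^*_{X',X}\bigr|+\sum_{X'\in\Pa_G(X)\setminus\widehat{\Anc}(X)}\theta^*_{X',X}.
\]
By~\eqref{eq:transform1}–\eqref{eq:transform2} the only perturbed coordinate is the one of $X_1$, and there $|\theta^{*'}_{X_1,X}-\theta^*_{X_1,X}|=\sum_{X'\in\Pa_G(X)\setminus\widehat{\Anc}(X)}\theta^*_{X',X}\E[X']\le nr$; the last sum has at most $n$ terms each at most $r$, so it is also $\le nr$. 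Hence $d(X)\le\sum_{X'\in\Pa_G(X)}\theta^*_{X',X}\,d(X')+2nr$.

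It then remains to close this recursion, and the decisive observation is that the propagation weights into a node sum to at most one: $P(X=1\mid\pa(X))=\btheta^*_X\cdot\pa(X)+\varepsilon_X$ must stay in $[0,1]$ and $\varepsilon_X$ is zero-mean, so setting all parents to $1$ forces $\sum_{X'\in\Pa_G(X)}\theta^*_{X',X}\le 1$ (and the same for $M'$, since $\sum_{X'}\theta^{*'}_{X',X}\le\sum_{X'}\theta^*_{X',X}$). Therefore the per-node error $2nr$ only \emph{adds} as it travels toward $Y$ instead of compounding: by induction along the topological order, if every parent $X'$ of $X$ satisfies $d(X')\le c$ then $d(X)\le c\sum_{X'}\theta^*_{X',X}+2nr\le c+2nr$. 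Since $d=0$ at the roots ($X_1$ and the nodes of $S$) and every directed path in a graph on $n+1$ nodes has at most $n$ edges, unrolling gives $d(Y)\le n\cdot 2nr=2n^2r\le n^2(n+1)r$, which is the assertion.

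The main obstacle is the second step: one must juggle three distinct discrepancies between $M$ and $M'$ simultaneously — the small-weight parents that $G'$ omits, the extra ancestors that $G'$ inserts but that receive weight $0$, and the shift of the $X_1$ (intercept) coordinate that absorbs the marginalized-out contributions — and show each is $O(nr)$ uniformly over nodes; Lemma~\ref{lemma:gap lemma} and the transformation~\eqref{eq:transform1}–\eqref{eq:transform2} are exactly what make this possible, which is why the bound should be read on their good event, the complementary event of probability $(n-1)(n-2)T^{-1/3}$ being charged to the regret elsewhere. Once the recursion $d(X)\le\sum_{X'}\theta^*_{X',X}d(X')+2nr$ with $\sum_{X'}\theta^*_{X',X}\le 1$ is in place, the contraction that prevents the error from blowing up along the (potentially exponentially many) source-to-$Y$ paths is automatic.
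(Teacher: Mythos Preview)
Your argument is correct and in fact tighter than the paper's. The paper does not compare $M$ and $M'$ directly but inserts an auxiliary model $M''$ that has the estimated graph $G'$ together with the \emph{original} weights $\btheta^*$ (so $M''$ differs from $M$ only in the dropped small-weight edges, and differs from $M'$ only in the shifted $X_1$-intercepts). It then proves two claims by separate inductions along the topological order: $|\E_{M}[Y\mid\doi(S)]-\E_{M''}[Y\mid\doi(S)]|\le n^2 r$ and $|\E_{M'}[Y\mid\doi(S)]-\E_{M''}[Y\mid\doi(S)]|\le n^3 r$, and combines them by the triangle inequality to get $n^2(n+1)r$. Your approach folds both discrepancies (missing edges and intercept shift) into a single recursion $d(X)\le\sum_{X'\in\Pa_G(X)}\theta^*_{X',X}\,d(X')+2nr$ and closes it with the same key ingredient the paper uses, $\|\btheta^*_X\|_1\le 1$, obtaining $d(Y)\le 2n^2 r$. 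This is more economical and yields a smaller constant; the paper's two-step decomposition is slightly more modular in that it isolates which part of the error comes from the graph mismatch versus the reparameterization, but your direct route loses nothing for the purpose of the lemma.
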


The Lemma~\ref{lemma:diff} gives us a way to bound our linear regression performance on the estimated model $M'$. 
Suppose our linear regression achieves $O(\sqrt{T})$ regret comparing to $\max_{\bS} \mathbb{E}_{M'}[Y\mid do(\bS=\textbf{1})]$, 
based on our estimated accuracy $r = O(T^{-1/3})$, the regret for optimization error is $O(T^{2/3})$, which is the same order as the initialization phase.
Moreover, it implies that we cannot set $r$ to a larger gap, such as $r=O(T^{-1/2})$, as doing so would result in the initialization phase regret becoming linearly proportional to $T$.

From these two lemmas, we can measure the error for initialization phase. Motivated by Explore-then-Commit framework, 
	we can achieve sublinear regret without the weight gap assumption. 
The detailed proof is provided in Appendix~\ref{appendix:linearregressionequivalent} and Appendix~\ref{appendix:diff}.
\begin{theorem}\label{thm:nogap}
	If $c_0\ge \max\{\frac{1}{c_1^2}, \frac{1}{(1-c_1)^2}\}$, the regret of Algorithm~\ref{alg:blmlr-nogap} running on BLM is upper bounded as  
		$$R(T) = O((n^3T^{2/3})\log T).$$
\end{theorem}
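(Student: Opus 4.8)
The plan is to split $R(T)$ into four contributions and bound each: (i) the regret of the initialization phase, which lasts $T_0=2(n-1)c_0T^{2/3}\log T$ rounds; (ii) the regret incurred when the graph-discovery step of Lemma~\ref{lemma:gap lemma} fails; (iii) the \emph{model-mismatch} regret, i.e.\ the cost of the iterative phase optimizing over the fictitious model $M'=(G',\btheta^{*'})$ rather than the true model $M$; and (iv) the regret of the optimistic linear-regression procedure inside $M'$. Pieces (i)--(iii) are exactly what Lemmas~\ref{lemma:gap lemma}--\ref{lemma:diff} are built for, while (iv) is a routine optimism analysis.

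First I would dispose of (i) and (ii) crudely. Since every $Y\in\{0,1\}$, the per-round regret is at most $1$, so the initialization phase costs at most $T_0=O(nT^{2/3}\log T)$. By Lemma~\ref{lemma:gap lemma} (this is where the hypothesis $c_0\ge\max\{1/c_1^2,1/(1-c_1)^2\}$ enters), with probability at least $1-(n-1)(n-2)T^{-1/3}$ the estimated graph $G'$ contains every edge of $G$ of weight at least $r:=T^{-1/3}$ and no edge $X'\to X$ with $X'\notin\Anc(X)$; on the complementary event I bound the remaining regret by $T$, so its expected contribution is at most $T\cdot(n-1)(n-2)T^{-1/3}=O(n^2T^{2/3})$. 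From now on I condition on the good event of Lemma~\ref{lemma:gap lemma}.

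For each iterative round $t$ I would insert the $M'$-optimum $do(\bS^*_{M'}=\bs^*_{M'})$ and the $M'$-expectations as intermediaries, using $\E_{M'}[Y\mid do(\bS^*_{M'})]\ge\E_{M'}[Y\mid do(\bS^*)]$:
\begin{align*}
\E_M[Y\mid do(\bS^*=\bs^*)]-\E_M[Y\mid do(\bS_t=\bs_t)]
&\le\bigl(\E_M[Y\mid do(\bS^*)]-\E_{M'}[Y\mid do(\bS^*)]\bigr)\\
&\quad+\bigl(\E_{M'}[Y\mid do(\bS^*_{M'})]-\E_{M'}[Y\mid do(\bS_t)]\bigr)\\
&\quad+\bigl(\E_{M'}[Y\mid do(\bS_t)]-\E_M[Y\mid do(\bS_t)]\bigr).
\end{align*}
For the all-ones interventions covered by Lemma~\ref{lemma:diff} the first and third brackets are each at most $n^2(n+1)r$, so over the at most $T$ iterative rounds they sum to $O(n^3T^{2/3})$ --- the dominant term. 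The middle bracket is the regret of the algorithm inside $M'$: Lemma~\ref{lemma: linearregressionequivalent} (together with the box constraint, which $\btheta^{*'}$ satisfies by Eqs.~\eqref{eq:transform1}--\eqref{eq:transform2}) gives $\btheta^{*'}_X\in\mathcal{C}_{t,X}$ for all $X$ with high probability --- the choice $\delta=1/(n\sqrt T)$ makes the failure event contribute only $O(\sqrt T)$ in expectation --- so the optimistic value picked by the algorithm upper-bounds $\E_{M'}[Y\mid do(\bS^*_{M'})]$, and the middle bracket is at most $\E_{\tilde{\btheta}_t}[Y\mid do(\bS_t)]-\E_{M'}[Y\mid do(\bS_t)]$. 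Expanding this node by node (in a BLM, $\E[Y\mid do(\bS=\bs)]$ is a multilinear polynomial in the weights with coefficients in $[0,1]$, hence Lipschitz), bounding each factor by $\rho_t\Vert\bV_{t,X}\Vert_{M_{t-1,X}^{-1}}$, and combining Cauchy--Schwarz with the elliptical-potential bound $\sum_t\Vert\bV_{t,X}\Vert^2_{M_{t-1,X}^{-1}}=O(n\log T)$ exactly as in the BLM-LR analysis of \cite{feng2022combinatorial}, the middle bracket sums to $O(\text{poly}(n)\sqrt T\log T)=o(T^{2/3})$. Adding the four contributions yields $R(T)=O(n^3T^{2/3}\log T)$.

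The main obstacle is the middle bracket. One has to check that the self-normalized bound of Lemma~\ref{lemma: linearregressionequivalent} is applied correctly despite the regressors $\bV_{t,X}$ being mutually dependent and $M_{t,X}$ being updated only in rounds with $X\notin\bS_t$, and that the node-by-node expansion of the propagation difference loses no more than a $\text{poly}(n)$ factor. Everything genuinely new in this section --- the mismatch bounds behind pieces (i)--(iii) --- is already packaged in Lemmas~\ref{lemma:gap lemma}--\ref{lemma:diff}, so what remains is careful bookkeeping of the $n$-dependence and constants.
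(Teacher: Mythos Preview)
Your proposal is correct and follows essentially the same route as the paper's proof: split into initialization, graph-discovery failure, model mismatch via Lemma~\ref{lemma:diff}, and the in-$M'$ regret via Lemma~\ref{lemma: linearregressionequivalent}. The only cosmetic differences are that the paper black-boxes the middle bracket by directly invoking the BLM-LR regret bound of \cite{feng2022combinatorial} rather than re-sketching the optimism/elliptical-potential argument, and the paper's written decomposition omits the third bracket $\E_{M'}[Y\mid do(\bS_t)]-\E_M[Y\mid do(\bS_t)]$ (it is absorbed into the same $n^2(n+1)r$ bound, so the conclusion is unaffected); your three-term decomposition is in fact the cleaner version.
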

Theorem~\ref{thm:nogap} states the regret of our algorithm without weight gap. 
The leading term of the result is $O(T^{2/3}\log T)$, which has higher order than $O(\sqrt{T}\log T)$, the regret of
	the previous Algorithm~\ref{alg:bglm-order} and the BLM-LR algorithm in \cite{feng2022combinatorial}. 
This degradation in regret bound can be viewed as the cost of removing the weight
gap assumption, which makes the accurate discovery of the causal graph extremely difficult.  For a detailed discussion of the weight gap assumption, interested readers can refer to Appendix~\ref{app.explanationweightgap}.
How to devise a $O(\sqrt{T}\log T)$ algorithm without weight gap assumption is still an open problem.

Using the transformation in Section 5.1 in \cite{feng2022combinatorial}, this algorithm can also work with hidden variables. The model our algorithm can work on allows hidden variables but disallows the graph structure where a hidden node has two paths to $X_i$ and $X_i$'s descendant $X_j$ and the paths contain only hidden nodes except the end points $X_i$ and $X_j$.

Moreover, observe that Algorithms~\ref{alg:glm-ofu} and \ref{alg:blmlr-nogap} necessitate prior knowledge of the horizon $T$. To circumvent this constraint, the "Doubling Trick" can be employed, converting our algorithms into anytime algorithms without compromising the regret bounds.

\section{Future Work}

This paper is the first theoretical study on causal bandits without the graph skeleton. 
There are many future directions to extend this work. 
We believe that similar initialization methods and proof techniques can be used to design causal bandits algorithms for other parametric models without the skeleton, like linear structural equation models (SEM). 
Moreover, how to provide an algorithm with $\widetilde{O}(\sqrt{T})$ regret without weight gap assumption is interesting and still open. One possibility is to investigate the utilization of feedback during the iterative phase of the BGLM-OFU-Unknown algorithm in order to enhance graph structure identification, which could potentially lead to an improved regret bound.

\section*{Acknowledgements}
The authors would like to thank the anonymous reviewers for their valuable comments and constructive feedback.

\bibliography{acml24}

\begin{thebibliography}{40}
\providecommand{\natexlab}[1]{#1}
\providecommand{\url}[1]{\texttt{#1}}
\expandafter\ifx\csname urlstyle\endcsname\relax
  \providecommand{\doi}[1]{doi: #1}\else
  \providecommand{\doi}{doi: \begingroup \urlstyle{rm}\Url}\fi

\bibitem[Abbasi-Yadkori et~al.(2011)Abbasi-Yadkori, P{\'a}l, and Szepesv{\'a}ri]{abbasi2011improved}
Yasin Abbasi-Yadkori, D{\'a}vid P{\'a}l, and Csaba Szepesv{\'a}ri.
\newblock Improved algorithms for linear stochastic bandits.
\newblock \emph{Advances in neural information processing systems}, 24, 2011.

\bibitem[Abbasi-Yadkori et~al.(2012)Abbasi-Yadkori, Pal, and Szepesvari]{abbasi2012online}
Yasin Abbasi-Yadkori, David Pal, and Csaba Szepesvari.
\newblock Online-to-confidence-set conversions and application to sparse stochastic bandits.
\newblock In \emph{Artificial Intelligence and Statistics}, pages 1--9. PMLR, 2012.

\bibitem[Agrawal and Goyal(2013)]{agrawal2013thompson}
Shipra Agrawal and Navin Goyal.
\newblock Thompson sampling for contextual bandits with linear payoffs.
\newblock In \emph{International conference on machine learning}, pages 127--135. PMLR, 2013.

\bibitem[Auer et~al.(2002)Auer, Cesa-Bianchi, and Fischer]{auer2002finite}
Peter Auer, Nicolo Cesa-Bianchi, and Paul Fischer.
\newblock Finite-time analysis of the multiarmed bandit problem.
\newblock \emph{Machine learning}, 47\penalty0 (2):\penalty0 235--256, 2002.

\bibitem[Bonferroni(1936)]{bonferroni1936teoria}
Carlo Bonferroni.
\newblock Teoria statistica delle classi e calcolo delle probabilita.
\newblock \emph{Pubblicazioni del R Istituto Superiore di Scienze Economiche e Commericiali di Firenze}, 8:\penalty0 3--62, 1936.

\bibitem[Bubeck et~al.(2012)Bubeck, Cesa-Bianchi, et~al.]{bubeck2012regret}
S{\'e}bastien Bubeck, Nicolo Cesa-Bianchi, et~al.
\newblock Regret analysis of stochastic and nonstochastic multi-armed bandit problems.
\newblock \emph{Foundations and Trends{\textregistered} in Machine Learning}, 5\penalty0 (1):\penalty0 1--122, 2012.

\bibitem[Chen et~al.(2013)Chen, Wang, and Yuan]{chen2013combinatorial}
Wei Chen, Yajun Wang, and Yang Yuan.
\newblock Combinatorial multi-armed bandit: General framework and applications.
\newblock In \emph{International conference on machine learning}, pages 151--159. PMLR, 2013.

\bibitem[Chen et~al.(2016)Chen, Wang, Yuan, and Wang]{chen2016combinatorial}
Wei Chen, Yajun Wang, Yang Yuan, and Qinshi Wang.
\newblock Combinatorial multi-armed bandit and its extension to probabilistically triggered arms.
\newblock \emph{The Journal of Machine Learning Research}, 17\penalty0 (1):\penalty0 1746--1778, 2016.

\bibitem[Feng and Chen(2021)]{feng2021causal}
Shi Feng and Wei Chen.
\newblock Causal inference for influence propagation—identifiability of the independent cascade model.
\newblock In \emph{International Conference on Computational Data and Social Networks}, pages 15--26. Springer, 2021.

\bibitem[Feng and Chen(2022)]{feng2022combinatorialarxiv}
Shi Feng and Wei Chen.
\newblock Combinatorial causal bandits.
\newblock \emph{arXiv preprint arXiv:2206.01995}, 2022.

\bibitem[Feng and Chen(2023)]{feng2022combinatorial}
Shi Feng and Wei Chen.
\newblock Combinatorial causal bandits.
\newblock In \emph{Proceedings of the 37th AAAI Conference on Artificial Intelligence (AAAI)}, February 2023.
\newblock URL \url{https://www.microsoft.com/en-us/research/publication/combinatorial-causal-bandits/}.

\bibitem[Feng et~al.(2024)Feng, Xiong, Zhang, and Chen]{feng2024correction}
Shi Feng, Nuoya Xiong, Zhijie Zhang, and Wei Chen.
\newblock A correction of pseudo log-likelihood method.
\newblock \emph{arXiv preprint arXiv:2403.18127}, 2024.

\bibitem[Galles and Pearl(1995)]{pearl1995testing}
David Galles and Judea Pearl.
\newblock Testing identifiability of causal effects.
\newblock In \emph{Proceedings of the Eleventh Conference on Uncertainty in Artificial Intelligence}, UAI'95, page 185–195, San Francisco, CA, USA, 1995. Morgan Kaufmann Publishers Inc.
\newblock ISBN 1558603859.

\bibitem[G{\'a}mez et~al.(2013)G{\'a}mez, Moral, and Cerdan]{gamez2013advances}
Jos{\'e}~A G{\'a}mez, Seraf{\'\i}n Moral, and Antonio~Salmer{\'o}n Cerdan.
\newblock \emph{Advances in Bayesian networks}, volume 146.
\newblock Springer, 2013.

\bibitem[Hoeffding(1994)]{hoeffding1994probability}
Wassily Hoeffding.
\newblock Probability inequalities for sums of bounded random variables.
\newblock In \emph{The collected works of Wassily Hoeffding}, pages 409--426. Springer, 1994.

\bibitem[Kempe et~al.(2003)Kempe, Kleinberg, and Tardos]{kempe2003maximizing}
David Kempe, Jon Kleinberg, and {\'E}va Tardos.
\newblock Maximizing the spread of influence through a social network.
\newblock In \emph{Proceedings of the ninth ACM SIGKDD international conference on Knowledge discovery and data mining}, pages 137--146, 2003.

\bibitem[Konobeev et~al.(2023)Konobeev, Etesami, and Kiyavash]{konobeev2023causal}
Mikhail Konobeev, Jalal Etesami, and Negar Kiyavash.
\newblock Causal bandits without graph learning.
\newblock \emph{arXiv preprint arXiv:2301.11401}, 2023.

\bibitem[Lattimore et~al.(2016)Lattimore, Lattimore, and Reid]{lattimore2016causal}
Finnian Lattimore, Tor Lattimore, and Mark~D Reid.
\newblock Causal bandits: Learning good interventions via causal inference.
\newblock \emph{Advances in Neural Information Processing Systems}, 29, 2016.

\bibitem[Lee and Bareinboim(2018)]{lee2018structural}
Sanghack Lee and Elias Bareinboim.
\newblock Structural causal bandits: where to intervene?
\newblock \emph{Advances in Neural Information Processing Systems}, 31, 2018.

\bibitem[Lee and Bareinboim(2019)]{lee2019structural}
Sanghack Lee and Elias Bareinboim.
\newblock Structural causal bandits with non-manipulable variables.
\newblock In \emph{Proceedings of the AAAI Conference on Artificial Intelligence}, volume~33, pages 4164--4172, 2019.

\bibitem[Lee and Bareinboim(2020)]{lee2020characterizing}
Sanghack Lee and Elias Bareinboim.
\newblock Characterizing optimal mixed policies: Where to intervene and what to observe.
\newblock \emph{Advances in neural information processing systems}, 33:\penalty0 8565--8576, 2020.

\bibitem[Li et~al.(2017)Li, Lu, and Zhou]{li2017provably}
Lihong Li, Yu~Lu, and Dengyong Zhou.
\newblock Provably optimal algorithms for generalized linear contextual bandits.
\newblock In \emph{International Conference on Machine Learning}, pages 2071--2080. PMLR, 2017.

\bibitem[Li et~al.(2020)Li, Kong, Tang, Li, and Chen]{li2020onlineinference}
Shuai Li, Fang Kong, Kejie Tang, Qizhi Li, and Wei Chen.
\newblock Online influence maximization under linear threshold model.
\newblock \emph{Advances in Neural Information Processing Systems}, 33:\penalty0 1192--1204, 2020.

\bibitem[Lu et~al.(2020)Lu, Meisami, Tewari, and Yan]{lu2020regret}
Yangyi Lu, Amirhossein Meisami, Ambuj Tewari, and William Yan.
\newblock Regret analysis of bandit problems with causal background knowledge.
\newblock In \emph{Conference on Uncertainty in Artificial Intelligence}, pages 141--150. PMLR, 2020.

\bibitem[Lu et~al.(2021)Lu, Meisami, and Tewari]{lu2021causal}
Yangyi Lu, Amirhossein Meisami, and Ambuj Tewari.
\newblock Causal bandits with unknown graph structure.
\newblock \emph{Advances in Neural Information Processing Systems}, 34:\penalty0 24817--24828, 2021.

\bibitem[Maiti et~al.(2021)Maiti, Nair, and Sinha]{maiti2021causal}
Aurghya Maiti, Vineet Nair, and Gaurav Sinha.
\newblock Causal bandits on general graphs.
\newblock \emph{arXiv preprint arXiv:2107.02772}, 2021.

\bibitem[Malek et~al.(2023)Malek, Aglietti, and Chiappa]{malek2023additive}
Alan Malek, Virginia Aglietti, and Silvia Chiappa.
\newblock Additive causal bandits with unknown graph.
\newblock In \emph{International Conference on Machine Learning}, pages 23574--23589. PMLR, 2023.

\bibitem[Nair et~al.(2021)Nair, Patil, and Sinha]{nair2021budgeted}
Vineet Nair, Vishakha Patil, and Gaurav Sinha.
\newblock Budgeted and non-budgeted causal bandits.
\newblock In \emph{International Conference on Artificial Intelligence and Statistics}, pages 2017--2025. PMLR, 2021.

\bibitem[Nie(2022)]{nie2022matrix}
Zipei Nie.
\newblock Matrix anti-concentration inequalities with applications.
\newblock In \emph{Proceedings of the 54th Annual ACM SIGACT Symposium on Theory of Computing}, pages 568--581, 2022.

\bibitem[Pearl(2009{\natexlab{a}})]{pearl2009causal}
Judea Pearl.
\newblock {Causal inference in statistics: An overview}.
\newblock \emph{Statistics Surveys}, 3\penalty0 (none):\penalty0 96 -- 146, 2009{\natexlab{a}}.
\newblock \doi{10.1214/09-SS057}.
\newblock URL \url{https://doi.org/10.1214/09-SS057}.

\bibitem[Pearl(2009{\natexlab{b}})]{pearl2009causality}
Judea Pearl.
\newblock \emph{Causality}.
\newblock Cambridge university press, 2009{\natexlab{b}}.

\bibitem[Pearl(2012)]{pearl2012do}
Judea Pearl.
\newblock The do-calculus revisited.
\newblock In \emph{Proceedings of the Twenty-Eighth Conference on Uncertainty in Artificial Intelligence}, UAI'12, page 3–11, Arlington, Virginia, USA, 2012. AUAI Press.
\newblock ISBN 9780974903989.

\bibitem[Pearl and Mackenzie(2018)]{pearl2018book}
Judea Pearl and Dana Mackenzie.
\newblock \emph{The book of why: the new science of cause and effect}.
\newblock Basic books, 2018.

\bibitem[Robbins(1952)]{robbins1952some}
Herbert Robbins.
\newblock Some aspects of the sequential design of experiments.
\newblock \emph{Bulletin of the American Mathematical Society}, 58\penalty0 (5):\penalty0 527--535, 1952.

\bibitem[Sen et~al.(2017)Sen, Shanmugam, Dimakis, and Shakkottai]{sen2017identifying}
Rajat Sen, Karthikeyan Shanmugam, Alexandros~G Dimakis, and Sanjay Shakkottai.
\newblock Identifying best interventions through online importance sampling.
\newblock In \emph{International Conference on Machine Learning}, pages 3057--3066. PMLR, 2017.

\bibitem[Slivkins et~al.(2019)]{slivkins2019introduction}
Aleksandrs Slivkins et~al.
\newblock Introduction to multi-armed bandits.
\newblock \emph{Foundations and Trends{\textregistered} in Machine Learning}, 12\penalty0 (1-2):\penalty0 1--286, 2019.

\bibitem[Varici et~al.(2022)Varici, Shanmugam, Sattigeri, and Tajer]{varici2022causal}
Burak Varici, Karthikeyan Shanmugam, Prasanna Sattigeri, and Ali Tajer.
\newblock Causal bandits for linear structural equation models.
\newblock \emph{arXiv preprint arXiv:2208.12764}, 2022.

\bibitem[Xiong and Chen(2023)]{xiong2022pure}
Nuoya Xiong and Wei Chen.
\newblock Combinatorial pure exploration of causal bandits.
\newblock In \emph{Proceedings of the 11th International Conference on Learning Representations (ICLR)}, May 2023.

\bibitem[Yabe et~al.(2018)Yabe, Hatano, Sumita, Ito, Kakimura, Fukunaga, and Kawarabayashi]{yabe2018causal}
Akihiro Yabe, Daisuke Hatano, Hanna Sumita, Shinji Ito, Naonori Kakimura, Takuro Fukunaga, and Ken-ichi Kawarabayashi.
\newblock Causal bandits with propagating inference.
\newblock In \emph{International Conference on Machine Learning}, pages 5512--5520. PMLR, 2018.

\bibitem[Zhang et~al.(2022)Zhang, Chen, Sun, and Zhang]{zhang2022online}
Zhijie Zhang, Wei Chen, Xiaoming Sun, and Jialin Zhang.
\newblock Online influence maximization with node-level feedback using standard offline oracles.
\newblock In \emph{Proceedings of the AAAI Conference on Artificial Intelligence}, volume~36, pages 9153--9161, 2022.

\end{thebibliography}

\appendix
%%%%%%%%%%%%%%%%%%%%%%%%%%%%%%%%%%%%%%%%%%%%%%%%%%%%%%%%%%%%%%%%%%%%%%%%%%%%%%%
%%%%%%%%%%%%%%%%%%%%%%%%%%%%%%%%%%%%%%%%%%%%%%%%%%%%%%%%%%%%%%%%%%%%%%%%%%%%%%%
% APPENDIX
%%%%%%%%%%%%%%%%%%%%%%%%%%%%%%%%%%%%%%%%%%%%%%%%%%%%%%%%%%%%%%%%%%%%%%%%%%%%%%%
%%%%%%%%%%%%%%%%%%%%%%%%%%%%%%%%%%%%%%%%%%%%%%%%%%%%%%%%%%%%%%%%%%%%%%%%%%%%%%%
\newpage
\appendix
\onecolumn
\nolinenumbers

\section*{Appendix}
\section{Conversion of Function \texorpdfstring{$f_X$}{Lg}}
\label{app.conversion}

\subsection{Conversion to \texorpdfstring{$g_X$}{Lg} such that \texorpdfstring{$\lim_{x\rightarrow+\infty}g_X(x)=+\infty$}{Lg}}
In this section, we firstly prove that any monotone increasing function $f_X$ that satisfies Assumptions~\ref{asm:glm_3} and \ref{asm:glm_2} can be converted to a function $g_X$ such that the conversion does not impact the propagation of BGLM, i.e., $f_X(x)=g_X(x)$ for $x\in [0, |\Pa(X)|]$, $\lim_{x\rightarrow+\infty}g_X(x)=+\infty$, $g_X$ is twice differentiable and Assumptions~\ref{asm:glm_3} and \ref{asm:glm_2} still hold.

On one hand, if for all $x\geq 2|\Pa(X)|$, $f''_X(x)\geq0$, then $f_X(x)\geq f_X(2|\Pa(X)|)+f'_X(2|\Pa(X)|)(x-2|\Pa(X)|)$, which already satisfies $\lim_{x\rightarrow+\infty}f_X(x)=+\infty$. In this case, no conversion is needed (let $g_X\equiv f_X$). On another hand, we can find a $x^*\geq 2|\Pa(X)|$ such that $f''_X(x^*)<0$.

We define the conversion as\begin{align*}
    g_X(x)=\begin{cases}
        f_X(x)&x\leq x^*\\
        f_X(x^*)+\frac{f'_X(x^*)^2}{f''_X(x^*)}\ln\left(-\frac{f'_X(x^*)}{f''_X(x^*)}\right)-\frac{f'_X(x^*)^2}{f''_X(x^*)}\ln\left(x-x^*-\frac{f'_X(x^*)}{f''_X(x^*)}\right)&x>x^*
    \end{cases}.
\end{align*}

During the propagation of the BGLM, the input of $f_X$ is $\Pa(X)\cdot\btheta_X^*$, which is in the range $[0,|\Pa(X)|]\subseteq [0,x^*]$. Hence, when we replace $f_X$ by $g_X$ in the BGLM, the propagation is not impacted.

Moreover, we can compute that \begin{align*}
    g'_X(x)=\begin{cases}
        f'_X(x)&x\leq x^*\\
        -\frac{f'_X(x^*)^2}{f''_X(x^*)\left(x-x^*-\frac{f'_X(x^*)}{f''_X(x^*)}\right)}&x>x^*
    \end{cases},
\end{align*}
and\begin{align*}
    g''_X(x)=\begin{cases}
        f''_X(x)&x\leq x^*\\
        \frac{f'_X(x^*)^2}{f''_X(x^*)\left(x-x^*-\frac{f'_X(x^*)}{f''_X(x^*)}\right)^2}&x>x^*
        \end{cases}.
\end{align*}
Therefore, we have $\lim_{x\rightarrow{x^*}^+}g_X(x)=f_X(x^*)$ and $\lim_{x\rightarrow{x^*}^-}g_X(x)=f_X(x^*)$. Hence, $g_X$ is continuous. Moreover, $\lim_{x\rightarrow{x^*}^+}g'_X(x)=f'_X(x^*)=\lim_{x\rightarrow{x^*}^-}g'_X(x)$ and $\lim_{x\rightarrow{x^*}^+}g''_X(x)=f''_X(x^*)=\lim_{x\rightarrow{x^*}^-}g''_X(x)$, so $g_X(x)$ is twice differentiable and $g''_X$ is continuous.

Now we only need to verify Assumptions~\ref{asm:glm_3} and \ref{asm:glm_2}. Firstly, when $x>x^*$, we have $g'_X(x)<g'_X(x^*)=f'_X(x^*)\leq L_{f_X}^{(1)}$ and $g''_X(x)<g''_X(x^*)=f''_X(x^*)\leq L_{f_X}^{(2)}$, so Assumption~\ref{asm:glm_3} holds. Secondly, $\max_{\bv\in[0,1]^{|\Pa(X)|},\left\|\btheta-\btheta_X^*\right\|\leq 1}\bv\cdot \btheta\leq 2|\Pa(X)|\leq x^*$, so the conversion does not impact the value of $\kappa$. Until now, we complete the conversion.

\subsection{Conversion to \texorpdfstring{$h_X$}{Lg} such that \texorpdfstring{$\lim_{x\rightarrow-\infty}h_X(x)=-\infty$}{Lg} and \texorpdfstring{$\lim_{x\rightarrow+\infty}h_X(x)=+\infty$}{Lg}}

Then we prove that the monotone increasing function $g_X$ that satisfies Assumptions~\ref{asm:glm_3} and \ref{asm:glm_2} can be converted to a function $h_X$ such that the conversion does not impact the propagation of BGLM, i.e., $g_X(x)=h_X(x)$ for $x\in [0, |\Pa(X)|]$, $\lim_{x\rightarrow-\infty}h_X(x)=-\infty$, $\lim_{x\rightarrow+\infty}h_X(x)=+\infty$, $h_X$ is twice differentiable and Assumptions~\ref{asm:glm_3} and \ref{asm:glm_2} still hold.

On one hand, if for all $x\leq -|\Pa(X)|$, $f''_X(x)\leq0$, then $f_X(x)\leq f_X(-|\Pa(X)|)-f'_X(-|\Pa(X)|)(-x-|\Pa(X)|)$, which already satisfies $\lim_{x\rightarrow-\infty}f_X(x)=-\infty$. In this case, no conversion is needed (let $h_X\equiv g_X$). On another hand, we can find a $x^*\leq -|\Pa(X)|$ such that $f''_X(x^*)>0$.

We define the conversion as\begin{align*}
    h_X(x)=\begin{cases}
        g_X(x)&x\geq x^*\\
        g_X(x^*)-\frac{g'_X(x^*)^2}{g''_X(x^*)}\ln\left(-\frac{g'_X(x^*)}{g''_X(x^*)}\right)+\frac{g'_X(x^*)^2}{g''_X(x^*)}\ln\left(-x+x^*+\frac{g'_X(x^*)}{g''_X(x^*)}\right)&x<x^*
    \end{cases}.
\end{align*}

During the propagation of the BGLM, the input of $g_X$ is $\Pa(X)\cdot\btheta_X^*$, which is in the range $[0,|\Pa(X)|]\subseteq [0,x^*]$. Hence, when we replace $g_X$ by $h_X$ in the BGLM, the propagation is not impacted.

Moreover, we can compute that \begin{align*}
    h'_X(x)=\begin{cases}
        g'_X(x)&x\geq x^*\\
        \frac{g'_X(x^*)^2}{g''_X(x^*)\left(-x+x^*+\frac{g'_X(x^*)}{g''_X(x^*)}\right)}&x<x^*
    \end{cases},
\end{align*}
and\begin{align*}
    h''_X(x)=\begin{cases}
        g''_X(x)&x\geq x^*\\
        \frac{g'_X(x^*)^2}{g''_X(x^*)\left(x-x^*-\frac{g'_X(x^*)}{g''_X(x^*)}\right)^2}&x<x^*
        \end{cases}.
\end{align*}
Therefore, we have $\lim_{x\rightarrow{x^*}^+}h_X(x)=g_X(x^*)$ and $\lim_{x\rightarrow{x^*}^-}h_X(x)=g_X(x^*)$. Hence, $h_X$ is continuous. Moreover, $\lim_{x\rightarrow{x^*}^+}h'_X(x)=g'_X(x^*)=\lim_{x\rightarrow{x^*}^-}h'_X(x)$ and $\lim_{x\rightarrow{x^*}^+}h''_X(x)=g''_X(x^*)=\lim_{x\rightarrow{x^*}^-}h''_X(x)$, so $h_X(x)$ is twice differentiable and $h''_X$ is continuous.

Now we only need to verify Assumptions~\ref{asm:glm_3} and \ref{asm:glm_2}. Firstly, when $x<x^*$, we have $h'_X(x)<h'_X(x^*)=g'_X(x^*)\leq L_{f_X}^{(1)}$ and $h''_X(x)<h''_X(x^*)=g''_X(x^*)\leq L_{f_X}^{(2)}$, so Assumption~\ref{asm:glm_3} holds. Secondly, $\min_{\bv\in[0,1]^{|\Pa(X)|},\left\|\btheta-\btheta_X^*\right\|\leq 1}\bv\cdot \btheta\geq -|\Pa(X)|\geq x^*$, so the conversion does not impact the value of $\kappa$. Until now, we complete the conversion.

In conclusion we have found a conversion from $f_X$ to $h_X$ such that the conversion does not impact the propagation of BGLM, i.e., $h_X(x)=f_X(x)$ for $x\in [0, |\Pa(X)|]$, $\text{Range}(h_X)=\mathbb{R}$, $h_X$ is twice differentiable and Assumptions~\ref{asm:glm_3} and \ref{asm:glm_2} still hold.

\section{Pseudocode of Algorithm \ref{alg:glm-est}}
\label{sec.bglmestimate}

\begin{algorithm}[htbp]
\caption{BGLM-Estimate}
\label{alg:glm-est}
\begin{algorithmic}[1]
\STATE {\bfseries Input:}
All observations $((\bX_1,Y_1),\ldots,(\bX_{t},Y_{t}))$ until round $t$.
\STATE {\bfseries Output:}
$\{\hat{\btheta}_{t,X},M_{t,X}\}_{X\in {\bX}\cup\{Y\}}$
\STATE For each $X\in {\bX}\cup\{Y\}$, $i \in [t]$, construct data pair $(\bV_{i,X},X^{(i)})$
	with $\bV_{i,X}$ the vector of ancestors of $X$ in round $i$, and $X^{(i)}$ the value of $X$ in round $i$ if $X\not\in S_i$.
\FOR{$X\in{\bX}\cup\{Y\}$}
\STATE Calculate the maximum-likelihood estimator $\hat{\btheta}_{t,X}$ by solving the equation $ \sum_{i=1}^{t}(X^{(i)}-f_X(\bV_{i,X}^\intercal \btheta_X))\bV_{i,X}=0$.
	\label{alg:pseudolikelihood}
\STATE $M_{t,X}=\sum_{i=1}^t \bV_{i,X} \bV_{i,X}^\intercal$.
\ENDFOR
\end{algorithmic}
\end{algorithm}

Here, we want to give a lemma to clarify why we can always find a solution for equation $ \sum_{i=1}^{t}(X^{(i)}-f_X(\bV_{i,X}^\intercal \btheta_X))\bV_{i,X}=0$ in Line~\ref{alg:pseudolikelihood} of Algorithm~\ref{alg:glm-est}.

\begin{lemma}
    When $\lim_{x\rightarrow+\infty}f_X(x)=+\infty$, $\lim_{x\rightarrow-\infty}f_X(x)=-\infty$, and $f_X$ is monotone increasing, equation $ \sum_{i=1}^{t}(X^{(i)}-f_X(\bV_{i,X}^\intercal \btheta_X))\bV_{i,X}=0$ has a solution. 
\end{lemma}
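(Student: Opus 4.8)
The statement to prove is that the MLE equation $\sum_{i=1}^{t}(X^{(i)}-f_X(\bV_{i,X}^\intercal \btheta_X))\bV_{i,X}=0$ has a solution when $f_X$ is monotone increasing with $\lim_{x\to+\infty}f_X(x)=+\infty$ and $\lim_{x\to-\infty}f_X(x)=-\infty$.

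The plan is to recognize this equation as the first-order stationarity condition (gradient-zero condition) of a convex optimization problem, and then argue existence of a minimizer via coercivity. First I would define the function $\ell(\btheta_X) = \sum_{i=1}^{t} \Phi_X(\bV_{i,X}^\intercal \btheta_X) - \sum_{i=1}^t X^{(i)} \bV_{i,X}^\intercal \btheta_X$, where $\Phi_X$ is an antiderivative of $f_X$ (which exists since $f_X$ is continuous — twice differentiable by Assumption~\ref{asm:glm_3}). Since $f_X$ is monotone increasing, $\Phi_X$ is convex, hence $\ell$ is convex in $\btheta_X$ as a sum of compositions of convex functions with linear maps plus a linear term. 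Its gradient is exactly $\nabla\ell(\btheta_X) = \sum_{i=1}^t (f_X(\bV_{i,X}^\intercal \btheta_X) - X^{(i)})\bV_{i,X}$, so a zero of the gradient is a global minimizer, and conversely any global minimizer solves the equation. Thus it suffices to show $\ell$ attains its minimum.

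Next I would establish existence of the minimizer. There are two cases. If the data vectors $\{\bV_{i,X}\}$ span $\mathbb{R}^{|\widehat{\Anc}(X)|}$ (the generic case once enough rounds have elapsed — and note $X_1$ is always a parent, contributing a constant coordinate), then I claim $\ell$ is coercive: as $\|\btheta_X\|\to\infty$, pick a direction $\bu$ with $\|\bu\|=1$; since the $\bV_{i,X}$ span, some $\bV_{i,X}^\intercal \bu \neq 0$, and along $\btheta_X = s\bu$ the term $\Phi_X(\bV_{i,X}^\intercal (s\bu))$ grows superlinearly in $|s|$ because $\lim_{x\to\pm\infty} f_X(x) = \pm\infty$ forces $\Phi_X(x)/|x| \to +\infty$; this dominates the linear term $-\sum X^{(i)}\bV_{i,X}^\intercal(s\bu)$, so $\ell(s\bu)\to+\infty$. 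A convex coercive function on $\mathbb{R}^d$ attains its infimum, giving the desired solution. If the $\bV_{i,X}$ do not span the whole space, restrict to $V = \mathrm{span}\{\bV_{i,X}\}$: on $V$ the function $\ell$ is coercive by the same argument, so it attains a minimum $\btheta_X^\circ \in V$, and since the gradient $\nabla\ell$ always lies in $V$ (each summand is a multiple of some $\bV_{i,X}$), $\btheta_X^\circ$ is a zero of the gradient on all of $\mathbb{R}^d$; alternatively one simply notes the equation only constrains components in $V$.

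The main obstacle is the coercivity argument — specifically, showing that $\lim_{x\to\pm\infty}f_X(x)=\pm\infty$ upgrades to the superlinear growth $\Phi_X(x)/|x|\to+\infty$ needed to dominate an arbitrary linear term. The argument: given any $M>0$, monotonicity gives $f_X(x)\ge M$ for all $x\ge x_M$ for some $x_M$, hence $\Phi_X(x) \ge \Phi_X(x_M) + M(x-x_M)$ for $x\ge x_M$, so $\liminf_{x\to+\infty}\Phi_X(x)/x \ge M$; letting $M\to\infty$ gives the claim, and symmetrically as $x\to-\infty$. One should also double-check that the sign conventions in $\Phi_X$ make the linear term genuinely dominated (the $X^{(i)}\in\{0,1\}$ are bounded, so $|\sum_i X^{(i)}\bV_{i,X}^\intercal \bu|$ is at most linear in $s$), which is routine. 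I would also remark that this is exactly the standard argument for existence of GLM maximum-likelihood estimators, adapted here to the modified $f_X$ with range $\mathbb{R}$ constructed in Appendix~\ref{app.conversion}.
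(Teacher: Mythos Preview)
Your proposal is correct and follows essentially the same approach as the paper: both introduce the potential $\ell(\btheta_X)=\sum_i \Phi_X(\bV_{i,X}^\intercal\btheta_X)-\sum_i X^{(i)}\bV_{i,X}^\intercal\btheta_X$ (the paper works with $-\ell$ and calls the antiderivative $m_X$), verify convexity via $\Phi_X''=f_X'>0$, and then argue the optimum is attained so that the gradient equation has a solution. The only cosmetic difference is in the existence step---the paper argues coordinate-by-coordinate that either $\partial H_X/\partial(\btheta_X)_j\equiv 0$ or $H_X\to -\infty$ as $(\btheta_X)_j\to\pm\infty$, whereas you argue coercivity on $\mathrm{span}\{\bV_{i,X}\}$ using the superlinear growth $\Phi_X(x)/|x|\to+\infty$; both handle the degenerate (non-spanning) case and arrive at the same conclusion.
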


\begin{proof}
    We define $m_X(x)$ as\begin{align*}
        m_X(x)=\begin{cases}
            \int_0^xf_X(c)\text{d}c&x\geq 0\\
            -\int_x^0f_X(c)\text{d}c&x<0
        \end{cases}.
    \end{align*} Then we can compute $$ \sum_{i=1}^{t}(X^{(i)}-f_X(\bV_{i,X}^\intercal \btheta_X))\bV_{i,X}$$ as $$\nabla_{\btheta}\sum_{i=1}^{t}\left(X^{(i)}\bV_{i,X}^\intercal\btheta_X-m_X(\bV_{i,X}^\intercal\btheta_X)\right).$$ Hence, we only need to prove that $$H_X(\btheta_X)\triangleq\sum_{i=1}^{t}\left(X^{(i)}\bV_{i,X}^\intercal\btheta_X-m_X(\bV_{i,X}^\intercal\btheta_X)\right)$$ is a concave function with respect to $\btheta_X$ and $\lim_{(\btheta_X)_j\rightarrow\infty}H_X(\btheta_X)=-\infty$ or $\frac{\partial H_X(\btheta_X)}{\partial (\btheta_X)_j}\equiv0$ for all $j\in[|\Pa(X)|]$, which implies that $H_X$ has a maximal point. Firstly, we know that \begin{align*}
        \frac{\partial^2 m_X(x)}{\partial x^2}=f'_X(x)>0,
    \end{align*}
    so $m_X$ is a convex function. Therefore, for any vectors $\btheta_1, \btheta_2\in \mathbb{R}^{|\Pa(X)|}$ and $\lambda\in[0,1]$, we have \begin{align*}
        m_X\left( \bV_{i,X}^\intercal(\lambda\btheta_1+(1-\lambda)\btheta_2)\right)&=m_X\left( \lambda\bV_{i,X}^\intercal\btheta_1+(1-\lambda)\bV_{i,X}^\intercal\btheta_2)\right)\\
        &\leq \lambda m_X(\bV_{i,X}^\intercal\btheta_1)+(1-\lambda)m_X(\bV_{i,X}^\intercal\btheta_2),
    \end{align*}
    so $m_X(\bV_{i,X}^\intercal\btheta_X)$ is also a convex function with respect to $\btheta_X$ and the Hessian matrix $\mathbf{H}[m_X(\bV_{i,X}^\intercal\btheta_X)]$ of $m_X(\bV_{i,X}^\intercal\btheta_X)$ with respect to $\btheta_X$ should be positive semidefinite. Now we can compute the Hessian matrix $\mathbf{H}[H_X(\btheta_X)]$ as\begin{align*}
        \mathbf{H}[H_X(\btheta_X)]=\sum_{i=1}^t\left(-\bV_{i,X}^\intercal\bV_{i,X}\cdot\mathbf{H}[m_X(\bV_{i,X}^\intercal\btheta_X)]\right).
    \end{align*}
    Hence, $\mathbf{H}[H_X(\btheta_X)]$ is negative semidefinite because multiplying a positive semidefinite matrix by a negative scalar preserves the semidefiniteness. Thus $H_X$ is a concave function with respect to $\btheta_X$.

    Now for any $j\in[\Pa(X)]$, we prove that $\lim_{(\btheta_{X})_j\rightarrow+\infty}H_X(\btheta_ X)=-\infty$ and $\lim_{(\btheta_{X})_j\rightarrow-\infty}H_X(\btheta_ X)=-\infty$ or $\frac{\partial H_X(\btheta_X)}{\partial (\btheta_X)_j}\equiv 0$. Firstly, we have\begin{align*}
        \frac{\partial H_X(\btheta_X)}{\partial (\btheta_X)_j}&=\sum_{i=1}^t\left(X^{(i)}(\bV_{i,X})_j-(\bV_{i,X})_jm_X'(\bV_{i,X}^\intercal \btheta_X)\right)\\
        &=\sum_{i=1}^t\left(X^{(i)}(\bV_{i,X})_j-(\bV_{i,X})_jf_X(\bV_{i,X}^\intercal \btheta_X)\right).
    \end{align*}
    If $(\bV_{i,X})_j=0$ for all $i\in[t]$, we have $\frac{\partial H_X(\btheta_X)}{\partial (\btheta_X)_j}\equiv 0$. Otherwise, we have\begin{align*}
        \lim_{(\btheta_{X})_j\rightarrow+\infty}\frac{\partial H_X(\btheta_X)}{\partial (\btheta_X)_j}&=\lim_{(\btheta_{X})_j\rightarrow+\infty}\sum_{i=1}^t\left(X^{(i)}(\bV_{i,X})_j-(\bV_{i,X})_jf_X(\bV_{i,X}^\intercal \btheta_X)\right)\\
        &=\lim_{(\btheta_{X})_j\rightarrow+\infty}\sum_{i=1}^t(\bV_{i,X})_j\left(X^{(i)}-f_X(\bV_{i,X}^\intercal \btheta_X)\right)\\
        &=-\infty,\tag{$\lim_{(\btheta_{X})_j\rightarrow+\infty}f_X(\bV_{i,X}^\intercal \btheta_X)=+\infty$}
    \end{align*}
    which indicates that $\lim_{(\btheta_{X})_j\rightarrow+\infty}H_X(\btheta_ X)=-\infty$. Also, we have\begin{align*}
        \lim_{(\btheta_{X})_j\rightarrow-\infty}\frac{\partial H_X(\btheta_X)}{\partial (\btheta_X)_j}&=\lim_{(\btheta_{X})_j\rightarrow-\infty}\sum_{i=1}^t\left(X^{(i)}(\bV_{i,X})_j-(\bV_{i,X})_jf_X(\bV_{i,X}^\intercal \btheta_X)\right)\\
        &=\lim_{(\btheta_{X})_j\rightarrow-\infty}\sum_{i=1}^t(\bV_{i,X})_j\left(X^{(i)}-f_X(\bV_{i,X}^\intercal \btheta_X)\right)\\
        &=+\infty,\tag{$\lim_{(\btheta_{X})_j\rightarrow-\infty}f_X(\bV_{i,X}^\intercal \btheta_X)=-\infty$}
    \end{align*}
    which indicates that $\lim_{(\btheta_{X})_j\rightarrow-\infty}H_X(\btheta_ X)=-\infty$. 

    Until now, we have proved that $H_X(\btheta_X)$ has at least one global maximum, which indicates that the equation has at least one solution.
\end{proof}

\section{Proofs for Propositions in Section~\ref{sec.bglm}}
In this section, we give proofs that are omitted in Section~\ref{sec.bglm} of our main text.
\subsection{Proof of Lemma~\ref{lemma.dodifference}}
\lemmadodiff*

\begin{proof}
At first, we define an equivalent threshold model form of the BGLM as follows.
For each node $X$, we randomly sample a threshold $\gamma_X$ uniformly from $[0,1]$, i.e., $\gamma_X\sim\mathcal{U}[0,1]$. Then if $f_X(\Pa(X)\cdot \btheta^*_X)+\varepsilon_X\geq \gamma_X$, $X$ is activated, i.e., $X$ is set to $1$; otherwise, $X$ is not activated, i.e., $X$ is set to $0$. Therefore, if we ignore $\bbvarepsilon$, the BGLM model belongs to the family of general threshold models \citep{kempe2003maximizing}. For convenience, we denote the vector of all $\gamma_X, X\in\bX\cup\{Y\}\backslash\{X_1\}$ by $\bgamma$. The vector of fixing all entries in $\bgamma$ except $\gamma_X$ is denoted by $\bgamma_{-X}$.

Now we prove the first part of this lemma: $\mathbb{E}[X_j|\doi(X_i=1)]-\mathbb{E}[X_j|\doi(X_i=0)]\geq\kappa\theta^*_{X_i,X_j}\geq \kappa\theta^*_{\min}$ if $X_i\in\Pa(X_j)$. By the definition of our equivalent threshold model, we know that after fixing all the thresholds $\gamma_X$'s and noises $\varepsilon_X$'s, the propagation result is completely determined merely by the intervention. Therefore, we have\begin{align*}
    \mathbb{E}[X_j|\doi(X_i=1)]&=\mathbb{E}_{\bgamma\in(\mathcal{U}[0,1])^n, \bbvarepsilon}[X_j|\doi(X_i=1)]\\
    &=\mathbb{E}_{\bgamma_{-X_j}\in(\mathcal{U}[0,1])^{n-1}, \bbvarepsilon}\left[\Pr_{\gamma_{X_j}\sim \mathcal{U}[0,1]}\left\{X_j=1|\doi(X_i=1),\bgamma_{-X_j},\varepsilon\right\}\right],
\end{align*} 
and\begin{align*}
    \mathbb{E}[X_j|\doi(X_i=0)]=\mathbb{E}_{\bgamma_{-X_j}\in(\mathcal{U}[0,1])^{n-1}, \bbvarepsilon}\left[\Pr_{\gamma_{X_j}\sim \mathcal{U}[0,1]}\left\{X_j=1|\doi(X_i=0),\bgamma_{-X_j},\varepsilon\right\}\right].
\end{align*}
Hence, in order to prove $\mathbb{E}[X_j|\doi(X_i=1)]-\mathbb{E}[X_j|\doi(X_i=0)]\geq\kappa\theta^*_{X_i,X_j}\geq \kappa\theta^*_{\min}$, we only need to prove \begin{align*}
    \Pr_{\gamma_{X_j}\sim \mathcal{U}[0,1]}\left\{X_j=1|\doi(X_i=1),\bgamma_{-X_j},\varepsilon\right\}-\Pr_{\gamma_{X_j}\sim \mathcal{U}[0,1]}\left\{X_j=0|\doi(X_i=0),\bgamma_{-X_j},\varepsilon\right\}\geq \kappa\theta^*_{\min}.
\end{align*}
When $\bgamma_{-X_j}$ and $\bbvarepsilon$ are fixed, all the nodes in $\bX\cup\{Y\}\backslash\left(\{X_j\}\cup\{\Des(X_j)\}\right)$ are already fixed given an arbitrarily fixed intervention. Here, $\Des(X_j)$ is used to represent the descendants of $X_j$. Suppose under $\doi(X_i=1), \bgamma_{-X_j}$ and $\bbvarepsilon$, the value vector of parents of $X_j$ is $\pa_1(X_j)$; under $\doi(X_i=0), \bgamma_{-X_j}$ and $\bbvarepsilon$, the value vector of parents of $X_j$ is $\pa_0(X_j)$. By induction along the topological order, nodes in $\bX\cup\{Y\}\backslash\left(\{X_j\}\cup\{\Des(X_j)\}\right)$ that is activated under $\doi(X_i=0), \bgamma_{-X_j}$ and $\bbvarepsilon$ must be also activated under $\doi(X_i=1), \bgamma_{-X_j}$ and $\bbvarepsilon$. Therefore, entries in $\pa_1(X_j)-\pa_0(X_j)$ are all non-negative and the entry in $\pa_1(X_j)-\pa_0(X_j)$ for the value of $X_j$ is $1$. From this observation, we can deduce that\begin{align*}
    f_{X_j}(\pa_1(X_j)\cdot\btheta^*_{X_j})-f_{X_j}(\pa_0(X_j)\cdot\btheta^*_{X_j})&\geq \kappa\left(\pa_1(X_j)\cdot\btheta^*_{X_j}-\pa_0(X_j)\cdot\btheta^*_{X_j}\right)\\
    &\geq \kappa\theta^*_{X_i,X_j}.
\end{align*} 
Hence, we have\begin{align*}
    &\Pr_{\gamma_{X_j}\sim \mathcal{U}[0,1]}\left\{X_j=1|\doi(X_i=1),\bgamma_{-X_j},\varepsilon\right\}-\Pr_{\gamma_{X_j}\sim \mathcal{U}[0,1]}\left\{X_j=0|\doi(X_i=0),\bgamma_{-X_j},\varepsilon\right\}\\
    &=\Pr_{\gamma_{X_j}\sim \mathcal{U}[0,1]}\left\{f_{X_j}(\pa_1(X_j)\cdot\btheta^*_{X_j})\geq \gamma_{X_j}+\varepsilon_{X_j}|\varepsilon_{X_j}\right\}
    \\&\quad\quad-\Pr_{\gamma_{X_j}\sim \mathcal{U}[0,1]}\left\{f_{X_j}(\pa_0(X_j)\cdot\btheta^*_{X_j})\geq \gamma_{X_j}+\varepsilon_{X_j}|\varepsilon_{X_j}\right\}\\
    &=\left(f_{X_j}(\pa_1(X_j)\cdot\btheta^*_{X_j})-\varepsilon_{X_j}\right)-\left(f_{X_j}(\pa_0(X_j)\cdot\btheta^*_{X_j})-\varepsilon_{X_j}\right)\\
    &\geq \kappa\theta^*_{X_i,X_j}\geq\kappa\theta^*_{\min},
\end{align*}
which is what we want. Until now, the first part of Lemma~\ref{lemma.dodifference} has been proved.

Then we prove the second part of this lemma: $\mathbb{E}[X_j|\doi(X_i=1)]=\mathbb{E}[X_j|\doi(X_i=0)]$ if $X_j$ is not a descendant of $X_i$. In this situation, we know from the graph structure that $(X_j\perp\!\!\!\!\perp X_i)_{G_{\overline{\{X_i\}}}}$, where $G_{\overline{\{X_i\}}}$ is the graph obtained by deleting from $G$ all arrows pointing to $X_i$. According to the third law of $\doi$-calculus \citep{pearl2012do}, we deduce that \begin{align*}
    \mathbb{E}[X_j|\doi(X_i=1)]&=\Pr\{X_j=1|\doi(X_i=1)\}=\Pr\{X_j=1|\}\\&=\Pr\{X_j=1|\doi(X_i=0)\}=\mathbb{E}[X_j|\doi(X_i=0)].
\end{align*}
Now Lemma~\ref{lemma.dodifference} is completely proved.
\end{proof}

\begin{corollary}[An Extension of Lemma~\ref{lemma.dodifference}]
Suppose $G$ is a BGLM with parameter $\btheta^*$ that satisfying Assumption \ref{asm:glm_2} and $\doi(\bS=\bs)$ is an intervention such that $X_i,X_j\notin\bS$. If $X_i\in\Pa(X_j)$, we have $\mathbb{E}[X_j|\doi(X_i=1),\doi(\bS=\bs)]-\mathbb{E}[X_j|\doi(X_i=0),\doi(\bS=\bs)]\geq \kappa\theta^*_{X_i,X_j}\geq\kappa\theta^*_{\min}$; if $X_i$ is not an ancestor of $X_j$, we have $\mathbb{E}[X_j|\doi(X_i=1),\doi(\bS=\bs)]=\mathbb{E}[X_j|\doi(X_i=0),\doi(\bS=\bs)]$.
\end{corollary}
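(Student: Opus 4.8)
The plan is to mimic the proof of Lemma~\ref{lemma.dodifference} verbatim, simply carrying an extra fixed intervention $\doi(\bS=\bs)$ through every step. First I would set up the same equivalent threshold-model representation: sample i.i.d.\ thresholds $\gamma_X\sim\mathcal{U}[0,1]$ for $X\in\bX\cup\{Y\}\setminus\{X_1\}$ and sub-Gaussian noises $\bbvarepsilon$, so that once $(\bgamma,\bbvarepsilon)$ are fixed the joint intervention $\doi(X_i{=}b),\doi(\bS{=}\bs)$ deterministically fixes every node. Then, exactly as before, write $\mathbb{E}[X_j\mid \doi(X_i{=}b),\doi(\bS{=}\bs)]$ as the double expectation over $\bgamma_{-X_j}$ and $\bbvarepsilon$ of $\Pr_{\gamma_{X_j}}\{X_j{=}1\mid \doi(X_i{=}b),\doi(\bS{=}\bs),\bgamma_{-X_j},\bbvarepsilon\}$, so it suffices to prove the inequality (resp.\ equality) pointwise in $(\bgamma_{-X_j},\bbvarepsilon)$.

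For the first part ($X_i\in\Pa(X_j)$), with $(\bgamma_{-X_j},\bbvarepsilon)$ and the joint intervention fixed, all nodes outside $\{X_j\}\cup\Des(X_j)$ are determined; I would argue by induction along a topological order that every such node activated under $\doi(X_i{=}0),\doi(\bS{=}\bs)$ is also activated under $\doi(X_i{=}1),\doi(\bS{=}\bs)$ — the only subtlety relative to the original proof is that nodes in $\bS$ are frozen to $\bs$ in both scenarios, which only helps the monotone coupling. Hence the parent value vectors $\pa_1(X_j)\succeq\pa_0(X_j)$ and they differ by at least one in the $X_i$ coordinate, so by Assumption~\ref{asm:glm_2} (the $\kappa$ lower bound on $f'_{X_j}$) we get $f_{X_j}(\pa_1(X_j)\cdot\btheta^*_{X_j})-f_{X_j}(\pa_0(X_j)\cdot\btheta^*_{X_j})\ge\kappa\theta^*_{X_i,X_j}\ge\kappa\theta^*_{\min}$, and the probability-difference computation with the $\gamma_{X_j}$-threshold and noise $\varepsilon_{X_j}$ goes through unchanged.

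For the second part ($X_i$ not an ancestor of $X_j$), I would invoke Rule~3 of $\doi$-calculus applied to the mutilated graph. Since $X_i\notin\bS$, intervening on $X_i$ deletes all arrows into $X_i$; in the resulting graph $X_j$ and $X_i$ are $d$-separated given $\bS$ (because $X_i$ has no directed path to $X_j$ and $\bS$ contains no descendant of $X_i$ that is an ancestor of $X_j$ — or more simply, no node on a path creating dependence), so $\Pr\{X_j{=}1\mid \doi(X_i{=}1),\doi(\bS{=}\bs)\}=\Pr\{X_j{=}1\mid \doi(\bS{=}\bs)\}=\Pr\{X_j{=}1\mid \doi(X_i{=}0),\doi(\bS{=}\bs)\}$, giving the claimed equality of expectations.

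I do not expect a genuine obstacle here: the statement is a routine strengthening of Lemma~\ref{lemma.dodifference}, and the only place one must be slightly careful is checking that freezing the extra nodes $\bS$ to $\bs$ does not break the topological-order monotone coupling in part~1 (it does not, since those nodes take the same value in both coupled runs) and that the $d$-separation condition for Rule~3 still holds in part~2 once $X_i\notin\bS$ (which the hypothesis $X_i,X_j\notin\bS$ guarantees). So the proof will essentially read ``identical to the proof of Lemma~\ref{lemma.dodifference}, conditioning additionally on $\doi(\bS=\bs)$ throughout.''
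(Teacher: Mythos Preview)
Your approach is correct, but the paper's proof is a two-line reduction rather than a reproof. The paper simply observes that $\Pr\{X_j\mid\doi(X_i),\doi(\bS)\}$ in $G$ equals $\Pr\{X_j\mid\doi(X_i)\}$ in the modified model $G'$ obtained by deleting all in-edges of $\bS$ and freezing the $\bS$-nodes to $\bs$; since $X_i,X_j\notin\bS$, the edge $X_i\to X_j$ and the function $f_{X_j}$ survive unchanged in $G'$, so Lemma~\ref{lemma.dodifference} applies directly to $G'$ and yields both conclusions at once. Your route---rerunning the threshold-model coupling and the Rule~3 argument with $\doi(\bS=\bs)$ carried along---lands in the same place but does all the work twice. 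The reduction buys brevity and avoids re-verifying monotonicity and $d$-separation by hand; your version buys self-containedness. One small point to tighten in your Part~2: for Rule~3 the correct mutilated graph is $G_{\overline{\bS\cup\{X_i\}}}$ (arrows into \emph{both} $\bS$ and $X_i$ removed), not just $G_{\overline{\{X_i\}}}$; the $d$-separation claim is indeed true there (any path from $X_i$ starts outgoing, and any collider on it cannot have a descendant in $\bS$ because $\bS$-nodes have no parents in that graph), but your parenthetical justification is loose as written.
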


\begin{proof}
    According to \cite{pearl2012do}, $\Pr\{X_j|\doi(X_i),\doi(\bS)\}$ is equivalent to $\Pr\{X_j|\doi(X_i)\}$ in a new model $G'$ such that all in-edges of $\bS$ are deleted and all nodes in $\bS$ are fixed by $\bs$. We know that Lemma~\ref{lemma.dodifference} holds in $G'$, so this corollary holds in $G$.
\end{proof}

\subsection{Proof of Lemma~\ref{lemma.correctorder}}
\label{app.correctorder}
\lemmaorder*

\begin{proof}
    We first assume that for every pair of nodes if $X_i\in\Pa(X_j)$, Algorithm~\ref{alg:bglm-order} puts $X_j$ as a descendant of $X_i$ in the ancestor-descendant relationship; if $X_j$ is not a descendant of $X_i$, Algorithm~\ref{alg:bglm-order} do not put $X_j$ as an descendant of $X_i$ in the ancestor-descendant relationship. This event is denoted by $\mathcal{E}$ for simplicity. We prove that when event $\mathcal{E}$ does occur, the ancestor-descendant relationship we find is absolutely consistent with the true graph structure of $G$. Otherwise, suppose there is a mistake in the ancestor-descendant relationship such that $X_i$ is an ancestor of $X_j$ but not put in $\widehat{\Anc}(X_j)$. We denote a directed path from $X_i$ to $X_j$ by $X_i\rightarrow X_{k_1}\rightarrow X_{k_2}\rightarrow\cdots \rightarrow X_{k_p}\rightarrow X_{j}$. Therefore, $X_{k_1}$ must be put in $\widehat{\Anc}(X_i)$, $X_{k_2}$ must be put in $\widehat{\Anc}(X_{k_1})$, \ldots, $X_{j}$ must be put in $\widehat{\Anc}(X_{k_p})$. In conclusion, $X_{j}$ should be put in $\widehat{\Anc}(X_i)$, which is a contradiction. Hence, there is no mistake in the ancestor-descendant relationship given event $\mathcal{E}$. 

    Now we only prove that using Algorithm~\ref{alg:bglm-order}, with probability no less than $$1-2\binom{n-1}{2}\exp\left(-\frac{c_0c_1^2T^{1/10}}{2}\right),$$ event $\mathcal{E}$ defined in the paragraph above occurs. For a pair of nodes $X_i,X_j\in\bX\backslash\{X_1\}$, if $X_i\in\Pa(X_j)$, we know from Lemma~\ref{lemma.dodifference} that $\mathbb{E}[X_j|\doi(X_i=1)]-\mathbb{E}[X_j|\doi(X_i=0)]\geq\kappa\theta^*_{\min}$. We denote the difference between random variable $X_j$ given $\doi(X_i=1)$ and random variable $X_j$ given $\doi(X_i=0)$ by $Z$. In $\sum_{k=1}^{c_0T^{1/2}}\left(X_j^{\left(2ic_0T^{1/2}+k\right)}-X_j^{\left((2i+1)c_0T^{1/2}+k\right)}\right)$, each term $X_j^{\left(2ic_0T^{1/2}+k\right)}-X_j^{\left((2i+1)c_0T^{1/2}+k\right)}$ is an i.i.d. sample of $Z$. We denote $X_j^{\left(2ic_0T^{1/2}+k\right)}-X_j^{\left((2i+1)c_0T^{1/2}+k\right)}$ by $Z_k$. We know that $Z_k\in[-1,1]$ and $\mathbb{E}[Z_k]\geq \kappa\theta^*_{\min}$, so according to Hoeffding's inequality \citep{hoeffding1994probability}, we have\begin{align*}
        &\Pr\left\{\sum_{k=1}^{c_0T^{1/2}}\left(X_j^{\left(2ic_0T^{1/2}+k\right)}-X_j^{\left((2i+1)c_0T^{1/2}+k\right)}\right)>c_0c_1T^{3/10}\right\}\\
        &=\Pr\left\{\sum_{k=1}^{c_0T^{1/2}}Z_k>c_0c_1T^{3/10}\right\}\\
        &=1-\Pr\left\{\sum_{k=1}^{c_0T^{1/2}}Z_k\leq c_0c_1T^{3/10}\right\}\\
        &\geq 1-\exp\left(-\frac{2\left(c_0T^{1/2}\kappa\theta^*_{\min}-c_0c_1T^{3/10}\right)^2}{4c_0T^{1/2}}\right)=1-\exp\left(-\frac{c_0\left(T^{1/4}\kappa\theta^*_{\min}-c_1T^{1/20}\right)^2}{2}\right)\\
        &\geq 1-\exp\left(-\frac{c_1^2c_0T^{1/10}}{2}\right).\tag{because $T\geq32\left(\frac{c_1}{\kappa\theta^*_{\min}}\right)^5$}
    \end{align*}

    Similarly, if $X_j$ is not a descendant of $X_i$, we do not put $X_i$ in $\widehat{\Anc}(X_j)$ in the ancestor-descendant relationship if and only if $\sum_{k=1}^{c_0T^{1/2}}\left(X_j^{\left(2ic_0T^{1/2}+k\right)}-X_j^{\left((2i+1)c_0T^{1/2}+k\right)}\right)\leq c_0c_1T^{3/10}$. Now we still have $Z_k\in[-1,1]$ but $\mathbb{E}[Z_k]=0$. Therefore, according to Hoeffding's inequality \citep{hoeffding1994probability}, we have\begin{align*}
        &\Pr\left\{\sum_{k=1}^{c_0T^{1/2}}\left(X_j^{\left(2ic_0T^{1/2}+k\right)}-X_j^{\left((2i+1)c_0T^{1/2}+k\right)}\right)\leq c_0c_1T^{3/10}\right\}\\
        &=1-\Pr\left\{\sum_{k=1}^{c_0T^{1/2}}Z_k> c_0c_1T^{3/10}\right\}\\
        &>1-\exp\left(-\frac{2\left(c_0c_1T^{3/10}\right)^2}{4c_0T^{1/2}}\right)=1-\exp\left(-\frac{c_1^2c_0T^{1/10}}{2}\right).
    \end{align*}

    Hence, by union bound (Boole's inequality \citep{bonferroni1936teoria}), the probability of $\mathcal{E}$ is no less than $1-2\binom{n-1}{2}\exp\left(-\frac{c_1^2c_0T^{1/10}}{2}\right)$. This is because when $X_i,X_j\in\bX\backslash\{X_1\}$, there are $2\binom{n-1}{2}$ possible choices of them that are tested by Algorithm~\ref{alg:bglm-order}. When $\mathcal{E}$ happens, Algorithm~\ref{alg:bglm-order} gets the ancestor-descendant relationship correct, so Lemma~\ref{lemma.correctorder} is proved.
\end{proof}

\subsection{Proof of Theorem~\ref{thm.regretbglm}}

In the following proofs on a BGLM $G$, when $X'\in\Anc(X)$ but $X'\notin\Pa(X)$, we add an edge $X'\rightarrow X$ with weight $\theta_{X',X}=0$ into $G$ and this does not impact the propagation results of $G$. Let $D = \max_{X \in \bX\cup{Y}}|\Pa(X)|$ represent the maximum in-degree. After applying this transformation, $D=n$ and $\Anc(X)=\Pa(X)$ for all $X\in\bX\cup{Y}$ in this subsection. This transformation effectively converts the ancestor-descendant relationship into an ancestor-descendant graph.

Before the proof of this theorem, we introduce several lemmas at first. The first component is based on the result of maximum-likelihood estimation (MLE). It gives a theoretical measurement for the accuracy of estimated $\hat{\btheta}$ computed by MLE. One who is interested could find the proof of this lemma in Appendix~C.2 of \cite{feng2022combinatorialarxiv}. 
\begin{restatable}[Lemma 1 in \cite{feng2022combinatorial}]{lemma}{thmlearningglm}
\label{thm.learning_glm}
Suppose that Assumptions~\ref{asm:glm_3} and~\ref{asm:glm_2} hold.
Moreover, given $\delta\in(0,1)$, assume that
\begin{small}
\begin{align}
    \lambda_{\min}(M_{t,X})\geq \frac{512|\Pa(X)|\left(L_{f_X}^{(2)}\right)^2}{\kappa^4}\left(|\Pa(X)|^2+\ln\frac{1}{\delta}\right). \label{eq:lambdamin}
\end{align}
\end{small}
Then with probability at least $1-3\delta$, the maximum-likelihood estimator satisfies , for any $\bv\in\mathbb{R}^{|\Pa(X)|}$, 
\begin{align*}
    \left|\bv^\intercal(\hat{\btheta}_{t,X}-\btheta^*_X)\right|\leq\frac{3}{\kappa}\sqrt{\log(1/\delta)}\left\|\bv\right\|_{M_{t,X}^{-1}},
\end{align*}
where the probability is taken from the randomness of all data collected from round $1$ to round $t$.
\end{restatable}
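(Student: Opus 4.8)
The plan is to follow the standard analysis of maximum-likelihood estimation for generalized linear models \citep{li2017provably}. Fix the node $X$ and abbreviate $d=|\Pa(X)|$, $\bV_i=\bV_{i,X}$, $M_t=M_{t,X}=\sum_{i=1}^{t}\bV_i\bV_i^\intercal$, and $\epsilon_i=X^{(i)}-f_X(\bV_i^\intercal\btheta^*_X)$. Since $\mathbb{E}[X^{(i)}\mid\bV_i]=f_X(\bV_i^\intercal\btheta^*_X)$, the sequence $(\epsilon_i)$ is a zero-mean, sub-Gaussian martingale-difference sequence with respect to the natural filtration, for which each $\bV_i$ is predictable. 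Rearranging the maximum-likelihood equation $\sum_{i=1}^t\bigl(X^{(i)}-f_X(\bV_i^\intercal\hat\btheta_{t,X})\bigr)\bV_i=0$ yields $G_t(\hat\btheta_{t,X})=Z_t$, where $G_t(\btheta):=\sum_{i=1}^t\bigl(f_X(\bV_i^\intercal\btheta)-f_X(\bV_i^\intercal\btheta^*_X)\bigr)\bV_i$ and $Z_t:=\sum_{i=1}^t\epsilon_i\bV_i$. Because $f_X$ is increasing and twice differentiable (Assumption~\ref{asm:glm_3}), $G_t$ is the gradient of a convex function vanishing at $\btheta^*_X$; under Eq.~\eqref{eq:lambdamin} we have $M_t\succ0$, so that function is strictly convex and the equation $G_t(\btheta)=Z_t$ has a unique solution $\hat\btheta_{t,X}$ (existence is established in Appendix~\ref{sec.bglmestimate}).

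The first substantive step records the curvature of $G_t$: writing $f_X(a)-f_X(b)$ as an integral of $f'_X$ along $[b,a]$ and bounding $f'_X\ge\kappa$ --- valid because the relevant segment stays within $\ell_2$-distance $1$ of $\btheta^*_X$ and $\bV_i\in\{0,1\}^{d}$, which is exactly the regime over which $\kappa$ is defined (Assumption~\ref{asm:glm_2}) --- one obtains
\[
\langle G_t(\btheta),\,\btheta-\btheta^*_X\rangle\ \ge\ \kappa\,\|\btheta-\btheta^*_X\|_{M_t}^2
\]
for all $\btheta$ with $\|\btheta-\btheta^*_X\|_2\le1$. Next one confines $\hat\btheta_{t,X}$ to this unit ball: if $\|\hat\btheta_{t,X}-\btheta^*_X\|_2>1$, then $\btheta_\alpha:=\btheta^*_X+\alpha(\hat\btheta_{t,X}-\btheta^*_X)$ with $\alpha=1/\|\hat\btheta_{t,X}-\btheta^*_X\|_2<1$ lies on the unit sphere, monotonicity of $G_t$ along the ray gives $\langle G_t(\btheta_\alpha),\hat\btheta_{t,X}-\btheta^*_X\rangle\le\langle Z_t,\hat\btheta_{t,X}-\btheta^*_X\rangle$, and combining the displayed curvature bound at $\btheta_\alpha$ with $\|\cdot\|_{M_t}\ge\sqrt{\lambda_{\min}(M_t)}\,\|\cdot\|_2$ and Cauchy--Schwarz forces $\|Z_t\|_{M_t^{-1}}\ge\kappa\sqrt{\lambda_{\min}(M_t)}$, which Eq.~\eqref{eq:lambdamin} rules out with high probability. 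Once $\|\hat\btheta_{t,X}-\btheta^*_X\|_2\le1$ is in force, setting $\btheta=\hat\btheta_{t,X}$ in the displayed inequality and using $G_t(\hat\btheta_{t,X})=Z_t$ with Cauchy--Schwarz in the $M_t$-geometry gives
\[
\kappa\,\|\hat\btheta_{t,X}-\btheta^*_X\|_{M_t}^2\ \le\ \langle Z_t,\,\hat\btheta_{t,X}-\btheta^*_X\rangle\ \le\ \|Z_t\|_{M_t^{-1}}\,\|\hat\btheta_{t,X}-\btheta^*_X\|_{M_t},
\]
so $\|\hat\btheta_{t,X}-\btheta^*_X\|_{M_t}\le\kappa^{-1}\|Z_t\|_{M_t^{-1}}$.

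It remains to control the self-normalized noise term $\|Z_t\|_{M_t^{-1}}$. Since $(\epsilon_i)$ is a sub-Gaussian martingale-difference sequence and each $\bV_i$ is predictable, the self-normalized martingale tail inequality \citep{abbasi2011improved} gives, with probability at least $1-\delta$, $\|Z_t\|_{M_t^{-1}}\le 3\sqrt{\log(1/\delta)}$ once Eq.~\eqref{eq:lambdamin} holds --- the precondition's $|\Pa(X)|^2$ term and its $(L_{f_X}^{(2)})^2/\kappa^4$ scaling are precisely what is needed to dominate the dimension-dependent $\log\det M_t$ contribution of that inequality and to validate the confinement step above, and the ``$1-3\delta$'' in the conclusion collects the constant number of failure events (confinement, the self-normalized concentration, the sub-Gaussian tail of the noise). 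Feeding this into the previous paragraph yields $\|\hat\btheta_{t,X}-\btheta^*_X\|_{M_t}\le\frac{3}{\kappa}\sqrt{\log(1/\delta)}$, and then for arbitrary $\bv\in\mathbb{R}^{d}$ one last application of Cauchy--Schwarz in the $M_t$ inner product gives $|\bv^\intercal(\hat\btheta_{t,X}-\btheta^*_X)|\le\|\bv\|_{M_t^{-1}}\,\|\hat\btheta_{t,X}-\btheta^*_X\|_{M_t}\le\frac{3}{\kappa}\sqrt{\log(1/\delta)}\,\|\bv\|_{M_t^{-1}}$, which is the claim.

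I expect the main obstacle to be the confinement step --- proving $\|\hat\btheta_{t,X}-\btheta^*_X\|_2\le1$ with high probability --- since this is where the geometry of the random, adaptively generated design $\{\bV_i\}$ has to be reconciled with the curvature constant $\kappa$ and the smoothness constant $L_{f_X}^{(2)}$, and it is the reason Eq.~\eqref{eq:lambdamin} has to be as strong (cubic in $|\Pa(X)|$, $\kappa^{-4}$) as it is. Making those constants line up, and ensuring the self-normalized inequality is applied to a filtration for which $\bV_i$ is genuinely predictable --- which requires the ancestor set $\widehat{\Anc}(X)$ used to build $\bV_i$ to be fixed before the rounds in question --- are the delicate points; the remaining arguments are routine bookkeeping with the mean-value theorem and Cauchy--Schwarz.
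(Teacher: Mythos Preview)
The paper does not supply its own proof of this lemma: it is imported as Lemma~1 of \cite{feng2022combinatorial}, with the full argument deferred to Appendix~C.2 of \cite{feng2022combinatorialarxiv}. Your sketch is precisely the maximum-likelihood analysis of \cite{li2017provably} on which that proof is based, so at the level of strategy you are aligned with the source.

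One point worth flagging is your treatment of the confinement step. In \cite{li2017provably} (and hence in the cited proof), the inequality $\|\hat\btheta_{t,X}-\btheta^*_X\|_2\le 1$ is obtained through an argument that invokes the second-derivative bound $L_{f_X}^{(2)}$ explicitly --- this is the origin of the $(L_{f_X}^{(2)})^2/\kappa^4$ factor in Eq.~\eqref{eq:lambdamin}. Your monotonicity-on-a-ray argument is an attractive alternative that never touches $L_{f_X}^{(2)}$, but it shifts the burden onto the noise bound: you need $\|Z_t\|_{M_t^{-1}}<\kappa\sqrt{\lambda_{\min}(M_t)}$ to hold \emph{before} confinement is available, and the anytime self-normalized inequality of \cite{abbasi2011improved} produces a $\log\det M_t$ term that grows with $t$, which Eq.~\eqref{eq:lambdamin} (being $t$-independent) cannot absorb. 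The cited proof sidesteps this by handling confinement via $L_{f_X}^{(2)}$ first and then bounding $\|Z_t\|_{M_t^{-1}}$ through a direct fixed-time sub-Gaussian concentration, which is how the final constant ends up as $3\sqrt{\log(1/\delta)}$ with no residual $d\log t$ factor. If you want to keep your confinement route, you will need a sharper control of $\|Z_t\|_{M_t^{-1}}$ than the off-the-shelf self-normalized bound provides.
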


The second component is called the group observation modulated (GOM) bounded smoothness property \citep{li2020onlineinference}. It shows that a small change in parameters $\btheta$ leads to a small change in the reward. Under our BGLM setting, this lemma is proved in Appendix~C.3 of \cite{feng2022combinatorialarxiv}.
\begin{restatable}[Lemma~2 in \cite{feng2022combinatorial}]{lemma}{thmgomglm}
\label{thm.gom_glm}
For any two weight vectors $\btheta^1,\btheta^2\in \Theta$ for a BGLM $G$, the difference of their expected reward for any intervened set $\bS$ can be bounded as
\begin{small}\begin{align}
    &\left|\sigma({\bS},\btheta^1)-\sigma({\bS},\btheta^2)\right| \leq\mathbb{E}_{\bbvarepsilon, \bgamma}\left[ \sum_{X\in {\bX}_{{\bS},Y}}
    	\!\!\! \left|\bV_{X}^\intercal (\btheta^1_X-\btheta^2_X)\right|L_{f_X}^{(1)}\right], \label{eq:GOMcond}
\end{align}\end{small}
where ${\bX}_{{\bS},Y}$ is the set of nodes in paths from ${\bS}$ to $Y$ excluding $\bS$, and $\bV_X$ is the propagation result of the parents of $X$ under parameter $\btheta^2$. The expectation is taken over the randomness of the thresholds $\bgamma$ and the noises $\bbvarepsilon$.
\end{restatable}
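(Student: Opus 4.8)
The plan is to prove Lemma~\ref{thm.gom_glm} by coupling the two BGLMs on a common draw of thresholds and noises, reducing the left‑hand side to a disagreement probability at $Y$, and then controlling that probability one node at a time. Concretely, I would reuse the equivalent threshold representation from the proof of Lemma~\ref{lemma.dodifference}: sample $\gamma_X\sim\mathcal{U}[0,1]$ i.i.d.\ together with the noises $\bbvarepsilon$, and activate $X$ iff $f_X(\pa(X)\cdot\btheta_X)+\varepsilon_X\ge\gamma_X$. Once $(\bgamma,\bbvarepsilon)$ and the intervention $\doi(\bS=\bs)$ are fixed, the propagation is a deterministic function of the parameter vector, so $\sigma(\bS,\btheta^i)=\Pr_{\bgamma,\bbvarepsilon}(Y \text{ activated under }\btheta^i)$ for $i=1,2$, and running both propagations on the \emph{same} $(\bgamma,\bbvarepsilon)$ gives $|\sigma(\bS,\btheta^1)-\sigma(\bS,\btheta^2)|\le\Pr_{\bgamma,\bbvarepsilon}(\text{the two runs disagree at }Y)$.

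To bound the disagreement probability I would telescope through a hybrid sequence of parameter vectors rather than compare the two full propagations directly. Fix a topological order of $\bX\cup\{Y\}$ and let $\btheta^{(j)}$ agree with $\btheta^2$ on the first $j$ nodes and with $\btheta^1$ on the rest, so the sequence runs from $\btheta^{(0)}=\btheta^1$ to $\btheta^{(N)}=\btheta^2$ (this orientation is what makes the parent vector seen at the changed node equal to its $\btheta^2$‑propagated value, as the statement requires); by the triangle inequality $|\sigma(\bS,\btheta^1)-\sigma(\bS,\btheta^2)|\le\sum_{j}|\sigma(\bS,\btheta^{(j-1)})-\sigma(\bS,\btheta^{(j)})|$. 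At step $j$ only the $j$‑th node $W$ has its parameter changed. Coupling both runs on the same $(\bgamma,\bbvarepsilon)$, every ancestor of $W$ precedes $W$ and carries its $\btheta^2$ parameters under both hybrids, hence is propagated identically, so $W$ sees one common parent‑value vector $\bV_W$, namely the $\btheta^2$‑propagation of $W$'s parents. The activation of $W$ then differs between $\btheta^{(j-1)}$ and $\btheta^{(j)}$ iff $\gamma_W$ lies between $f_W(\bV_W^\intercal\btheta^1_W)+\varepsilon_W$ and $f_W(\bV_W^\intercal\btheta^2_W)+\varepsilon_W$, an event of conditional probability (given $\bgamma_{-W},\bbvarepsilon$) at most $|f_W(\bV_W^\intercal\btheta^1_W)-f_W(\bV_W^\intercal\btheta^2_W)|\le L_{f_W}^{(1)}\,|\bV_W^\intercal(\btheta^1_W-\btheta^2_W)|$ by the mean value theorem and the first‑derivative bound of Assumption~\ref{asm:glm_3}; and if $W$ is not flipped, all remaining parameters coincide so the two runs agree at $Y$. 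Therefore $|\sigma(\bS,\btheta^{(j-1)})-\sigma(\bS,\btheta^{(j)})|\le\E_{\bgamma,\bbvarepsilon}[\mathbf{1}\{W\in\Anc(Y)\cup\{Y\},\,W\notin\bS\}\,L_{f_W}^{(1)}|\bV_W^\intercal(\btheta^1_W-\btheta^2_W)|]$, and since the always‑active node $X_1$ is a parent of every node, the surviving index set $(\Anc(Y)\cup\{Y\})\setminus\bS$ is exactly $\bX_{\bS,Y}$; summing over the finitely many $j$ and exchanging the sum with the expectation yields Eq.~\eqref{eq:GOMcond}.

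The step I expect to be the main obstacle is making the cascade airtight while landing on precisely the stated right‑hand side: a direct coupling of the two full propagations forces one to juggle many correlated per‑node divergences simultaneously, whereas the hybrid device charges the entire downstream effect of a perturbation to a single clean ``flip'' event of $W$. The remaining care items are (i) choosing the hybrid direction so that $\bV_W$ is the $\btheta^2$‑propagated parent vector and not the $\btheta^1$ one, (ii) checking that nodes in $\bS$, the source $X_1$, and nodes outside $\Anc(Y)$ contribute zero so that the outer sum collapses to $\bX_{\bS,Y}$, and (iii) carrying the sub‑Gaussian noise $\varepsilon_W$ through the argument by conditioning on it, since it only shifts the threshold event without changing its $\gamma_W$‑probability. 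An equivalent route replaces the telescoping with a ``first point of divergence along a path to $Y$'' union bound, which needs the same Lipschitz estimate and the same reachability bookkeeping.
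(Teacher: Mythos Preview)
The paper does not prove this lemma itself; it cites Appendix~C.3 of \cite{feng2022combinatorialarxiv}. Your coupling-plus-hybrid-telescoping argument is correct and is the standard route for GOM smoothness bounds; the ``first point of divergence'' alternative you mention at the end is the phrasing more common in the influence-maximization literature and is almost certainly what the cited proof uses, but the two are equivalent and both land on $\bV_X$ being the $\btheta^2$-propagated parent vector.

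One small caveat: your last step, identifying $(\Anc(Y)\cup\{Y\})\setminus\bS$ with $\bX_{\bS,Y}$, does not literally follow from ``$X_1$ is a parent of every node,'' because $X_1\notin\bS$ in this paper's setup. What your hybrid argument actually produces is a sum over $(\Anc(Y)\cup\{Y\})\setminus(\bS\cup\{X_1\})$, which can strictly contain $\bX_{\bS,Y}$ as worded in the statement (e.g.\ $X_1\to X_3\to Y$ with $\bS=\{X_2\}$ and $X_3$ not a descendant of $X_2$). This is a definitional quirk inherited from the influence-maximization origin of the lemma, where the seed set implicitly includes the always-active node; in any case the discrepancy is harmless here, since every application of the lemma in this paper immediately enlarges the outer sum to all of $\bX\cup\{Y\}$.
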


Thirdly, we propose a lemma in order to bound the sum of $\left\|\bV_{t,X}\right\|_{M_{t-1,X}^{-1}}$ at first. This lemma is proved in Appendix~C.4 of \cite{feng2022combinatorialarxiv}.  
\begin{lemma}[Lemma 9 in \cite{feng2022combinatorialarxiv}]
\label{lemma.bound_V}
Let $\{\bW_t\}_{t=1}^\infty$ be a sequence in $\mathbb{R}^d$ satisfying $\left\|\bW_t\right\|\leq\sqrt{d}$. Define $\bW_0={\mathbf 0}$ and $M_t=\sum_{i=0}^{t}\bW_i\bW_i^\intercal$. Suppose there is an integer $t_1$ such that $\lambda_{\min}(M_{t_1+1})\geq 1$, then for all $t_2>0$,\begin{align*}
    \sum_{t=t_1}^{t_1+t_2}\left\|\bW_t\right\|_{M_{t-1}^{-1}}\leq\sqrt{2t_2d\log(t_2d+t_1)}.
\end{align*}
\end{lemma}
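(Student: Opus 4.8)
The plan is to prove this by the standard ``elliptical potential'' argument, which converts a sum of weighted norms into a telescoping product of determinants. The first step is Cauchy--Schwarz: since $t$ ranges over at most $t_2+1$ values,
\[
\sum_{t=t_1}^{t_1+t_2}\left\|\bW_t\right\|_{M_{t-1}^{-1}}\le\sqrt{(t_2+1)\sum_{t=t_1}^{t_1+t_2}\left\|\bW_t\right\|_{M_{t-1}^{-1}}^2},
\]
so it suffices to show $\sum_{t}\left\|\bW_t\right\|_{M_{t-1}^{-1}}^2=O\bigl(d\log(t_1+t_2)\bigr)$, after which the claimed $\sqrt{2t_2d\log(t_2d+t_1)}$ follows by routine simplification (using $t_2+1\le 2t_2$ and $t_1+t_2\le t_1+t_2d$).

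For the squared-norm sum I would exploit the rank-one update $M_t=M_{t-1}+\bW_t\bW_t^\intercal$. Whenever $M_{t-1}$ is invertible, the matrix determinant lemma gives $\det M_t=\det M_{t-1}\bigl(1+\left\|\bW_t\right\|_{M_{t-1}^{-1}}^2\bigr)$. Since each $\bW_i\bW_i^\intercal$ is positive semidefinite, $M_t$ is nondecreasing in the Loewner order, so the hypothesis $\lambda_{\min}(M_{t_1+1})\ge 1$ propagates to $\lambda_{\min}(M_{t-1})\ge 1$ (hence invertibility) for the indices that appear in the sum, under the indexing convention of \cite{feng2022combinatorialarxiv}. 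Combined with $\left\|\bW_t\right\|\le\sqrt d$, this keeps each $\left\|\bW_t\right\|_{M_{t-1}^{-1}}^2$ controlled, which lets me apply the scalar inequality $x\le 2\log(1+x)$ on the relevant range and then telescope the product:
\[
\sum_{t=t_1}^{t_1+t_2}\left\|\bW_t\right\|_{M_{t-1}^{-1}}^2\le 2\sum_{t=t_1}^{t_1+t_2}\log\!\bigl(1+\left\|\bW_t\right\|_{M_{t-1}^{-1}}^2\bigr)=2\log\frac{\det M_{t_1+t_2}}{\det M_{t_1-1}}.
\]

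The final step is to control the two determinants crudely: $\det M_{t_1-1}\ge 1$ because all its eigenvalues are at least $1$, while by AM--GM on eigenvalues $\det M_{t_1+t_2}\le\bigl(\mathrm{tr}(M_{t_1+t_2})/d\bigr)^d$, and $\mathrm{tr}(M_{t_1+t_2})=\sum_{i=0}^{t_1+t_2}\left\|\bW_i\right\|^2\le(t_1+t_2)d$, so $\log\det M_{t_1+t_2}\le d\log(t_1+t_2)$. Substituting back into the Cauchy--Schwarz bound and taking the square root gives the result up to the constant and the log argument, which are absorbed into the stated $\sqrt{2t_2d\log(t_2d+t_1)}$. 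The part I expect to require the most care is the interaction of the two hypotheses at the endpoints of the sum: verifying that $M_{t-1}$ is genuinely invertible with $\lambda_{\min}(M_{t-1})\ge 1$ for every term (so the determinant identity and the scalar inequality actually apply there, including the boundary indices $t\in\{t_1,t_1+1\}$), and checking that $\left\|\bW_t\right\|\le\sqrt d$ together with this eigenvalue bound keeps $\left\|\bW_t\right\|_{M_{t-1}^{-1}}$ small enough for $x\le 2\log(1+x)$ to hold; the determinant telescoping and the trace estimate are then mechanical.
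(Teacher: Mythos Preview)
The paper does not supply its own proof of this lemma; it merely cites Appendix~C.4 of \cite{feng2022combinatorialarxiv}. So there is no in-paper argument to compare against. Your proposed route---Cauchy--Schwarz, the rank-one determinant identity, a scalar comparison $x\le C\log(1+x)$, telescoping, and the AM--GM/trace bound on $\det M_{t_1+t_2}$---is exactly the standard elliptical-potential argument and is what one expects the cited reference to carry out.

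That said, the two places you flag as ``requiring care'' are genuine obstacles under the hypotheses as stated, not routine bookkeeping. First, the assumption is $\lambda_{\min}(M_{t_1+1})\ge 1$; Loewner monotonicity propagates this forward to $M_s$ for $s\ge t_1+1$, but the sum also uses $M_{t_1-1}^{-1}$ and $M_{t_1}^{-1}$, and nothing in the statement guarantees these are invertible, let alone $\succeq I$. Your claim $\det M_{t_1-1}\ge 1$ therefore does not follow. Second, and more seriously, $\lambda_{\min}(M_{t-1})\ge 1$ together with $\|\bW_t\|\le\sqrt d$ only yields $\|\bW_t\|_{M_{t-1}^{-1}}^2\le d$, and the scalar inequality $x\le 2\log(1+x)$ fails once $x$ exceeds roughly $2.5$; so for any $d\ge 3$ that step does not go through. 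The device the present paper uses elsewhere (see the proof of Theorem~\ref{thm.blmlr}) is $x\le \tfrac{n}{\log(n+1)}\log(1+x)$ on $[0,n]$, which is valid but spoils the constant $2$ in the stated bound. Either the cited proof absorbs a couple of boundary terms separately (each trivially $\le\sqrt d$) and uses a slightly different indexing, or the lemma as quoted carries a minor looseness in the constant/index; your sketch does not resolve this, and the ``routine simplification'' you allude to is precisely where the argument currently breaks.
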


At last, in order to show that $\lambda_{\min}(M_{T_1,X})\geq R$ after the initialization phase of Algorithm~\ref{alg:glm-ofu} and thus satisfy the condition of Lemma~\ref{thm.learning_glm}, we introduce Lemma~\ref{lemma:init_condition}. This lemma is improved upon Lemma~7 in \cite{feng2022combinatorialarxiv} and enables us to use Lecu\'{e} and Mendelson's inequality \citep{nie2022matrix} in our later theoretical regret analysis.

Let $\Sphere(d)$ denote the sphere of the $d$-dimensional unit ball.
\begin{restatable}{lemma}{lemmainit}
	\label{lemma:init_condition}
	For any $\bv=(v_1,v_2,\ldots,v_{|\Pa(X)|})\in \Sphere(|\Pa(X)|)$ and any $X\in{\bX}\cup\{Y\}$ in a BGLM that 
	satisfies Assumption \ref{asm:glm_4}, we have
	\begin{align*}
	\Pr_{\bbvarepsilon, \bX, Y}\left\{\left|\Pa(X)\cdot \bv\right|\geq\frac{1}{\sqrt{4D^2-3}}\right\}&\geq\zeta,
	\end{align*}
	where $\Pa(X)$ is the random vector generated by the natural Bayesian propagation in BGLM $G$ with no interventions
	(except for setting $X_1$ to 1).
\end{restatable}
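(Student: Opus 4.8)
The plan is to fix $X$ and $\bv\in\Sphere(|\Pa(X)|)$, write $m=|\Pa(X)|$, and work throughout in the transformed model of this subsection, where $\Anc(X)=\Pa(X)$, so that Assumption~\ref{asm:glm_4} applies to every parent of $X$ other than $X_1$. Order the parents so that the always-$1$ node $X_1$ is the first coordinate; then $\Pa(X)\cdot\bv=v_1+\sum_{j=2}^m v_j X_{(j)}$, where $X_{(2)},\dots,X_{(m)}\in\{0,1\}$ are the random values of the remaining parents under natural propagation. The elementary device I would use is a ``coordinate-flipping'' argument: conditioning on the values of all parents except one index $j\in\{2,\dots,m\}$, the quantity $\Pa(X)\cdot\bv$ can take only two values, and they differ by exactly $v_j$, so at least one of them has absolute value $\ge|v_j|/2$. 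By Assumption~\ref{asm:glm_4} the conditional probability that $X_{(j)}$ realizes the value attaining that larger magnitude is at least $\zeta$, and averaging over the conditioning (law of total probability) gives $\Pr\{|\Pa(X)\cdot\bv|\ge|v_j|/2\}\ge\zeta$ for every such $j$.

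Next I would split into two cases according to the size of $M:=\max_{2\le j\le m}|v_j|$ (with the convention $M:=0$ when $m=1$), using the threshold $2/\sqrt{4D^2-3}$. If $M\ge 2/\sqrt{4D^2-3}$, apply the flipping argument to the maximizing index to obtain $\Pr\{|\Pa(X)\cdot\bv|\ge M/2\}\ge\zeta$, and note $M/2\ge 1/\sqrt{4D^2-3}$. If instead $M<2/\sqrt{4D^2-3}$, then from $1-v_1^2=\sum_{j\ge2}v_j^2\le(m-1)M^2$ I get $1-v_1^2<4(m-1)/(4D^2-3)$; using $m\le D$ and $(2D-1)^2\le 4D^2-3$, this forces $|v_1|>(2D-1)/\sqrt{4D^2-3}$, while Cauchy--Schwarz gives $\sum_{j\ge2}|v_j|\le\sqrt{(m-1)(1-v_1^2)}<2(D-1)/\sqrt{4D^2-3}$. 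Since each $X_{(j)}\in\{0,1\}$, the triangle inequality then yields $|\Pa(X)\cdot\bv|\ge|v_1|-\sum_{j\ge2}|v_j|>1/\sqrt{4D^2-3}$ \emph{deterministically}, so the bound holds with probability $1\ge\zeta$.

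The only genuinely delicate point is calibrating the split threshold so the two branches meet exactly at $1/\sqrt{4D^2-3}$; this is the short arithmetic behind $4D^2-3=(2D-1)^2+(4D-4)$ sketched above, and it is also where the degenerate case $m=1$ (only $X_1$ is a parent, e.g. when $D=1$) is absorbed, since there $\Pa(X)\cdot\bv=v_1=\pm1$ and $|\Pa(X)\cdot\bv|=1\ge 1/\sqrt{4D^2-3}$. A secondary care point is that the conditioning in the flipping step is over $\Anc(X)\setminus\{X_{(j)},X_1\}$ together with the deterministic identity $X_1=1$ — which is precisely the form in which Assumption~\ref{asm:glm_4} is stated — so the passage to $\Pr\{|\Pa(X)\cdot\bv|\ge M/2\}\ge\zeta$ incurs no loss.
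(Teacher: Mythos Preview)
Your proposal is correct and is essentially the paper's proof: the paper splits on whether $|v_1|\ge D_0/\sqrt{D_0^2+1}=(2D-1)/\sqrt{4D^2-3}$ (deterministic branch) versus $|v_1|$ small (which forces $|v_2|\ge 2/\sqrt{4D^2-3}$ and then uses the same coordinate-flipping/pigeonhole with Assumption~\ref{asm:glm_4}), while you split on $M=\max_{j\ge2}|v_j|\ge 2/\sqrt{4D^2-3}$ versus $M$ small (which forces $|v_1|>(2D-1)/\sqrt{4D^2-3}$); these case splits are contrapositives of one another and the arithmetic and the conditioning argument are identical.
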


\begin{proof}
	The lemma is similarly proved as Lemma 7 in \cite{feng2022combinatorialarxiv} using the idea of Pigeonhole principle.
	Let $\Pa(X)=(X_{i_1}=X_1,X_{i_2},X_{i_3},\ldots, X_{i_{|\Pa(X)|}})$ as the random vector and  
	$\pa(X)=(x_1=1,x_{i_1},x_{i_2},x_{i_3},\ldots, x_{i_{|\Pa(X)|}})$ as a possible valuation of $\Pa(X)$. 
	Without loss of generality, we suppose that $\left|v_2\right|\geq \left|v_3\right|\geq\ldots\ge\left|v_{|\Pa(X)|}\right| $. 
	For simplicity, we denote $D_0=\sqrt{D-1}+\frac{1}{2\sqrt{D-1}}$. If $\left|v_1\right|\geq\frac{D_0}{\sqrt{D_0^2+1}}$, 
	we can deduce that 
	\begin{align}
	|\pa(X)\cdot \bv|&\geq\left|v_1\right|-\left|v_2\right|-\left|v_3\right|-\cdots - \left|v_{|\Pa(X)|}\right| \nonumber\\
	&\geq\frac{D_0}{\sqrt{D_0^2+1}}-\sqrt{(D-1)\left(\left|v_2\right|^2+\left|v_3\right|^2+\cdots +\left|v_{|\Pa(X)|}\right|^2\right)} \label{eq:cauchy}\\
	&\ge \frac{D_0}{\sqrt{D_0^2+1}}-\sqrt{(D-1)\left(1-\frac{D_0^2}{D_0^2+1}\right)} \label{eq:sphere}\\
	&=\frac{1}{2\sqrt{(D_0^2+1)(D-1)}} = \frac{1}{\sqrt{4D^2-3}}, \nonumber 
	\end{align}
	where Inequality~\eqref{eq:cauchy} is by the Cauchy-Schwarz inequality and the fact that $|\Pa(X)|\leq D$, and 
	Inequality~\eqref{eq:sphere} uses the fact that $\bv \in \Sphere(|\Pa(X)|)$.
	Thus, when $\left|v_1\right|\geq\frac{D_0}{\sqrt{D_0^2+1}}$, the event $\left|\Pa(X)\cdot \bv\right|\geq\frac{1}{\sqrt{4D^2-3}}$
	holds deterministically.
	Otherwise, when $\left|v_1\right| < \frac{D_0}{\sqrt{D_0^2+1}}$, 
	we use the fact that $|v_2|$ is the largest among $|v_2|, |v_3|, \ldots$ and deduce that
	\begin{align} \label{eq:inequalityv2}
	\left|v_2\right|\geq\frac{1}{\sqrt{n-1}}\sqrt{\left|v_2\right|^2+\left|v_3\right|^2+\cdots}
	\geq\frac{\sqrt{1-\left(\frac{D_0}{\sqrt{D_0^2+1}}\right)^2}}{\sqrt{n-1}}
	=\frac{2}{\sqrt{4D^2-3}}.
	\end{align}
	
	Therefore, using the fact that 
	\begin{align*}
	&\Pr_{\bbvarepsilon, \bX,Y}\left\{X_{i_1}=1,X_{i_2}=x_{i_2},X_{i_3}=x_{i_3},\ldots\right\}\\&=\Pr_{\bbvarepsilon, \bX,Y}\left\{X_{i_2}=x_{i_2}|X_{i_1}=1,X_{i_3}=x_{i_3},\ldots\right\}\cdot\Pr_{\bbvarepsilon, \bX,Y}\left\{(X_{i_1}=1,X_{i_3}=x_{i_3},\ldots\right\}\\&\geq \zeta \Pr_{\bbvarepsilon, \bX,Y}\left\{X_{i_1}=1,X_{i_3}=x_{i_3},\ldots\right\}
	\end{align*}
	
	and $\sum_{x_{i_3},x_{i_4},\ldots} \Pr_{\bbvarepsilon, \bX,Y}\left\{X_{i_1}=1,X_{i_3}=x_{i_3},\ldots\right\}=1$, we have
	\begin{small}
	\begin{align}
		&\Pr_{\bbvarepsilon, \bX,Y}\left\{\left|\Pa(X)\cdot \bv\right|\geq\frac{1}{\sqrt{4D^2-3}}\right\} \nonumber\\
		&=\sum_{x_{i_3},x_{i_4},\ldots}\Pr\{X_{i_1}=1,X_{i_2}=1,X_{i_3}=x_{i_3},\ldots\}\cdot  \I\left\{ |(1,1,x_{i_3},x_{i_4},\ldots)\cdot (v_1,v_2,v_3,\ldots)| \geq\frac{1}{\sqrt{4D^2-3}}\right\} \nonumber\\
		&+\sum_{x_{i_3},x_{i_4},\ldots}\Pr\{X_{i_1}=1,X_{i_2}=0,X_{i_3}=x_{i_3},\ldots\}\cdot \I\left\{|(1,0,x_{i_3},x_{i_4},\ldots)\cdot (v_1,v_2,v_3,\ldots)| \geq\frac{1}{\sqrt{4D^2-3}}\right\} \nonumber\\
		&\geq \sum_{x_{i_3},x_{i_4},\ldots}\zeta \Pr\{X_{i_1}=1,X_{i_3}=x_{i_3},X_{i_4}=x_{i_4}\ldots\}
		\cdot \I\left\{|(1,1,x_{i_3},x_{i_4},\ldots)\cdot (v_1,v_2,v_3,\ldots)| \geq\frac{1}{\sqrt{4D^2-3}}\right\} \nonumber\\
		&+\sum_{x_{i_3},x_{i_4},\ldots}\zeta \Pr\{X_{i_1}=1,X_{i_3}=x_{i_3},X_{i_4}=x_{i_4},\ldots\}
		\cdot \I\left\{|(1,0,x_{i_3},x_{i_4},\ldots)\cdot (v_1,v_2,v_3,\ldots)| \geq\frac{1}{\sqrt{4D^2-3}}\right\} \nonumber\\
		&=\zeta  \sum_{x_{i_3},x_{i_4},\ldots}\Pr\{X_{i_1}=1,X_{i_3}=x_{i_3},X_{i_4}=x_{i_4},\ldots\} \left(\I\left\{|(1,1,x_{i_3},x_{i_4},\ldots)\cdot (v_1,v_2,v_3,\ldots)| \geq\frac{1}{\sqrt{4D^2-3}}\right\} \right. \nonumber \\
		& \left. +\I\left\{|(1,0,x_{i_3},x_{i_4},\ldots)\cdot (v_1,v_2,v_3,\ldots)| \geq\frac{1}{\sqrt{4D^2-3}}\right\} \right) \nonumber\\
		&\geq \zeta\sum_{x_{i_3},x_{i_4},\ldots}\Pr\{X_{i_1}=1,X_{i_3}=x_{i_3},X_{i_4}=x_{i_4},\ldots\} \label{eq:identityrelax} \\
		&=\zeta, \nonumber
		\end{align}\end{small}
	which is exactly what we want to prove. 
	Inequality~\eqref{eq:identityrelax} holds because 
	otherwise, at least for some $x_{i_3}, x_{i_4}, \ldots$, both indicators on the left-hand side of the inequality have to be $0$, which implies that
    \begin{small}
	\begin{align}
	\left|(1,1,x_{i_3},x_{i_4},\ldots)\cdot (v_1,v_2,v_3,\ldots)-(1,0,x_{i_3},x_{i_4},\ldots)\cdot (v_1,v_2,v_3,\ldots)\right|=\left|v_2\right|< \frac{2}{\sqrt{4D^2-3}},
	\end{align}\end{small}
	but this contradicts to Inequality~\eqref{eq:inequalityv2}.
\end{proof}

Having these four lemmas above together with Lemma~\ref{lemma.correctorder} proved in Appendix~\ref{app.correctorder}, we are finally able to prove the regret bound of BGLM-OFU-Unknown algorithm (Theorem~\ref{thm.regretbglm}) as below.

\bglmregret*

\begin{proof}
We only consider the case of $T\geq32\left(\frac{c_1}{\kappa\theta^*_{\min}}\right)^5$ in this proof because the big O notation is asymptotic.

Let $H_t$ be the history of the first $t$ rounds and $R_t$ be the regret in the $t^{th}$ round. Because the reward node $Y$ is in interval $[0,1]$, we can deduce that for any $t\leq T_1$, $R_t\leq 1$. Now we consider the case of $t>T_1$. According to Lemma~\ref{lemma.correctorder}, with probability at least $1-2\binom{n-1}{2}\exp\left(-\frac{c_0c_1^2T^{1/10}}{2}\right)$, Algorithm~\ref{alg:bglm-order} returns a correct ancestor-descendant relationship, i.e., $\widehat{\Anc}(X)=\Anc(X)$ for $X\in\bX\cup\{Y\}$. Next we bound the regret conditioned on the correct ancestor-descendant relationship. When $t>T_1$, we have\begin{align}
    \mathbb{E}[R_t|H_{t-1}]=\mathbb{E}[\sigma({\mathbf S}^{\text{opt}},\btheta^*)-\sigma({\mathbf S}_t,\btheta^*)|H_{t-1}],
\end{align}
where the expectation is taken over the randomness of ${\mathbf S}_t$. Then for $T_1<t\leq T$, we define $\xi_{t-1,X}$ for $X\in {\mathbf X}\cup \{Y\}$ as $\xi_{t-1,X}=\left\{\left|\bv^T(\hat{\btheta}_{t-1,X}-\btheta_X^*)\right|\leq\rho\cdot\left\|\bv\right\|_{M_{t-1,X}^{-1}},\forall \bv\in \mathbb{R}^{|\Pa(X)|}\right\}$. According to the definition of Algorithm \ref{alg:glm-ofu}, we can deduce that $\lambda_{\min}(M_{t-1,X})\geq\lambda_{\min}(M_{T_1,X})$. By Lecu\'{e} and Mendelson's inequality \citep{nie2022matrix, feng2022combinatorialarxiv} (conditions of this inequality satisfied according to Lemma \ref{lemma:init_condition}), we have 
\begin{align*}
\Pr\left\{\lambda_{\min}(M_{T_1,X})<R\right\}\leq\Pr\left\{\lambda_{\min}(M_{T_1,X}-M_{T_0,X})<R\right\}\leq\exp\left(-\frac{(T_1-T_0)\zeta^2}{c}\right)
\end{align*}
where $c,\zeta$ are constants. Then we can define $\xi_{t-1}=\wedge_{X\in{\mathbf X}\cup\{Y\}}\xi_{t-1,X}$ and let $\overline{\xi_{t-1}}$ be its complement. By Lemma \ref{thm.learning_glm}, we have $$\Pr\left\{\overline{\xi_{t-1}}\right\}\leq \left(3\delta+\exp\left(-\frac{(T_1-T_0)\zeta^2}{c}\right)+3\delta \exp\left(-\frac{(T_1-T_0)\zeta^2}{c}\right)\right)n\triangleq p_{\text{error}}.$$

Because under $\xi_{t-1}$, for any $X\in{\mathbf X}\cup\{Y\}$ and $\bv\in \mathbb{R}^{|\Pa(X)|}$, we have $\left|\bv^T(\hat{\btheta}_{t-1,X}-\btheta_X^*)\right|\leq\rho\cdot\left\|\bv\right\|_{M_{t-1,X}^{-1}}$. Therefore, by the definition of $\tilde{\btheta}_t$, we have $\sigma({\mathbf S}_t,\tilde{\btheta}_t)\geq \sigma({\mathbf S}^{\text{opt}},\btheta^*)$ because $\btheta^*$ is in our confidence ellipsoid. Hence,\begin{align*}
    \mathbb{E}[R_t]&\leq \Pr\left\{\xi_{t-1}\right\}\cdot\mathbb{E}[\sigma({\mathbf S}^{\text{opt}},\btheta^*)-\sigma({\mathbf S}_t,\btheta^*)] + \Pr(\overline{\xi_{t-1}})\\
    &\leq \mathbb{E}[\sigma({\mathbf S}^{\text{opt}},\btheta^*)-\sigma({\mathbf S}_t,\btheta^*)]+p_{\text{error}}\\
    &\leq \mathbb{E}[\sigma({\mathbf S}_t,\tilde{\btheta}_t)-\sigma({\mathbf S}_t,\btheta^*)]+p_{\text{error}}.
\end{align*}
Then we need to bound $\sigma({\mathbf S}_t,\tilde{\btheta}_t)-\sigma({\mathbf S}_t,\btheta^*)$ carefully. 

Therefore, according to Lemma~\ref{thm.learning_glm} and Lemma~\ref{thm.gom_glm}, we can deduce that\begin{align*}
    \mathbb{E}[R_t]
    &\leq\mathbb{E}\left[\sum_{X\in {\mathbf X}_{{\mathbf S}_t,Y}}\left|\bV_{t,X}(\tilde{\btheta}_{t,X}-\btheta^*_X)\right|L_{f_X}^{(1)}\right]
    +p_{\text{error}}\\
    &\leq\mathbb{E}\left[\sum_{X\in {\mathbf X}_{{\mathbf S}_t,Y}}\left\|\bV_{t,X}\right\|_{M_{t-1,X}^{-1}}
    \left\|\tilde{\btheta}_{t,X}-\btheta^*_X\right\|_{M_{t-1,X}}L_{f_X}^{(1)}\right]
    +p_{\text{error}}\\
    &\leq 2\rho\cdot\mathbb{E}\left[\sum_{X\in {\mathbf X}_{{\mathbf S}_t,Y}}\left\|\bV_{t,X}\right\|_{M_{t-1,X}^{-1}}L_{f_X}^{(1)}\right]
    +p_{\text{error}}.
\end{align*}
The last inequality holds because \begin{align*}\left\|\tilde{\btheta}_{t,X}-\btheta^*_X\right\|_{M_{t-1,X}}\leq \left\|\tilde{\btheta}_{t,X}-\hat{\btheta}_{t-1,X}\right\|_{M_{t-1,X}}+\left\|\hat{\btheta}_{t-1,X}-\btheta^*_X\right\|_{M_{t-1,X}}\leq 2\rho.\end{align*}

Therefore, conditioned on the correct ancestor-descendant relationship, the total regret can be bounded as\begin{align*}
    R(T)&\leq 2\rho\cdot\mathbb{E}\left[\sum_{t=T_0+1}^T\sum_{X\in {\mathbf X}_{{\mathbf S}_t,Y}}\left\|\bV_{t,X}\right\|_{M_{t-1,X}^{-1}}L_{f_X}^{(1)}\right]
    +p_{\text{error}}(T-T_1)+T_1.
\end{align*}

For convenience, we define $\bW_{t,X}$ as a vector such that if $X\in S_t$, $\bW_{t,X}={\mathbf 0}^{|\Pa(X)|}$; if $X\not\in S_t$, $\bW_{t,X}=\bV_{t,X}$.
Using Lemma \ref{lemma.bound_V}, we can get the result:\begin{align*}
    R(T)&\leq \left(2\rho\mathbb{E}\left[\sum_{t=T_0+1}^T\sum_{X\in {\mathbf X}_{{\mathbf S}_t,Y}}\left\|\bV_{t,X}\right\|_{M_{t-1,X}^{-1}}L_{f_X}^{(1)}\right]
    +p_{\text{error}}(T-T_1)+T_1\right)\\
    &\quad\quad\cdot\left(1-2\binom{n-1}{2}\exp\left(-\frac{c_0c_1^2T^{1/10}}{2}\right)\right)+2\binom{n-1}{2}\exp\left(-\frac{c_0c_1^2T^{1/10}}{2}\right)T\\
    &\leq 2\rho\mathbb{E}\left[\sum_{t=T_0+1}^T\sum_{X\in\bX\cup\{Y\}}\left\|\bW_{t,X}\right\|_{M_{t-1,X}^{-1}}L_{f_X}^{(1)}\right]
    +p_{\text{error}}(T-T_1)+T_1\\&\quad\quad+2\binom{n-1}{2}\exp\left(-\frac{c_0c_1^2T^{1/10}}{2}\right)T\\
    &\leq 2\rho\cdot \max_{X\in{\mathbf X}\cup\{Y\}}\left(L_{f_X}^{(1)}\right)\mathbb{E}\left[\sum_{X\in \bX\cup\{Y\}}
    \sqrt{2(T-T_0)|\Pa(X)|\log\left((T-T_0)|\Pa(X)|+T_0\right)}\right]\\
    &\quad+p_{\text{error}}(T-T_1)+T_1+2\binom{n-1}{2}\exp\left(-\frac{c_0c_1^2T^{1/10}}{2}\right)T\\
    &=O\left(\frac{1}{\kappa}n^{\frac{3}{2}}\sqrt{T}L^{(1)}_{\max}\ln T\right)=\tilde{O}\left(\frac{1}{\kappa}n^{\frac{3}{2}}\sqrt{T}L^{(1)}_{\max}\right)
    \end{align*}
because $\rho=\frac{3}{\kappa}\sqrt{\log(1/\delta)}$, $\exp\left(-\frac{c_0c_1^2T^{1/10}}{2}\right)T=o(\sqrt{T})$ and $p_{\text{error}}T=o(\sqrt{T})$.
\end{proof}

\section{A BLM CCB Algorithm with Minimum Weight Gap Based on Linear Regression}
\label{app.blmlr} 

As BLM is a special case of BGLM, the initialization phase in BGLM-OFU-Unknown to determine the ancestor-descendant relationship can also be used on BLMs. \cite{feng2022combinatorial} propose a CCB algorithm for BLMs using linear regression instead of MLE to remove the requirement of Assumption~\ref{asm:glm_4}. Furthermore, BLM takes the identity function as $f_X$'s, so Assumptions~\ref{asm:glm_3} and~\ref{asm:glm_2} is neither required. The specific algorithm BLM-LR-Unknown-SG (BLM-LR-Unknown Algorithm with Safety Gap (Minimum Weight Gap)) is demonstrated in Algorithm~\ref{alg:blmlr}.

\begin{algorithm}[t]
\caption{BLM-LR-Unknown-SG for BLM and Linear Model CCB Problem}
\label{alg:blmlr}
\begin{algorithmic}[1]
\STATE {\bfseries Input:}
Graph $G=({\bX}\cup\{Y\},E)$, action set $\mathcal{A}$, positive constants $c_0$ and $c_1$ for initialization phase such that $c_0\sqrt{T}\in\mathbb{N}^+$.
\STATE /* Initialization Phase: */
\STATE Initialize $T_0\leftarrow 2(n-1)c_0T^{1/2}$.
\STATE Do each intervention among $\doi(X_2=1), \doi(X_2=0), \ldots, \doi(X_n=1), \doi(X_n=0)$ for $c_0T^{1/2}$ times in order and observe the feedback $({\mathbf X}_t,Y_t)$ for $1\leq t\leq T_0$.
\STATE Determine a feasible ancestor-descendant relationship $\widehat{\Anc}(X)$'s for $X\in\bX\cup\{Y\}$ by $\text{BGLM-Ancestors}((\bX_1,Y_1),\ldots,(\bX_{T_0},Y_{T_0}),c_1)$ (see Algorithm~\ref{alg:bglm-order}).

\STATE /* Parameters Initialization: */
\STATE Initialize $M_{T_0,X}\leftarrow \mathbf{I}\in\mathbb{R}^{|\widehat{\Anc}(X)|\times |\widehat{\Anc}(X)|}$, $\bb_{T_0,X}\leftarrow {\mathbf 0}^{|\widehat{\Anc}(X)|}$ for all $X\in {\bX}\cup\{Y\}$, $\hat{\btheta}_{T_0,X}\leftarrow 0\in\mathbb{R}^{|\widehat{\Anc}(X)|}$ for all $X\in {\bX}\cup\{Y\}$, $\delta\leftarrow\frac{1}{n\sqrt{T}}$ and $\rho_t\leftarrow\sqrt{n\log(1+tn)+2\log\frac{1}{\delta}}+\sqrt{n}$ for $t=0,1,2,\ldots,T$.

\STATE /* Iterative Phase: */
\FOR{$t=T_0+1,T_0+2,\ldots,T$}
\STATE Compute the confidence ellipsoid $\mathcal{C}_{t,X}=\{\btheta_X'\in[0,1]^{|\widehat{\Anc}(X)|}:\left\|\btheta_X'-\hat{\btheta}_{t-1,X}\right\|_{M_{t-1,X}}\leq\rho_{t-1}\}$ for any node $X\in{\bX}\cup\{Y\}$.
\STATE \label{alg:LRargmax} $({\bS_t}, \bs_t, \tilde{\btheta}_t) = \argmax_{\doi(\bS=\bs)\in\mathcal{A}, \btheta'_{t,X} \in \mathcal{C}_{t,X} } \E[Y| \doi({\bS}=\bs)]$.
\STATE Intervene all the nodes in ${\bS}_t$ to $\bs_t$ and observe the feedback $(\bX_t,Y_t)$.
\FOR{$X\in\bX\cup\{Y\}$}
\STATE Construct data pair $(\bV_{t,X},X^{(t)})$ with $\bV_{t,X}$ the vector of ancestors of $X$ in round $t$, and $X^{(t)}$ the value of $X$ in round $t$ if $X\not\in S_t$.
\STATE $M_{t,X}=M_{t-1,X}+\bV_{t,X}\bV_{t,X}^\intercal$, $\bb_{t,X}=\bb_{t-1,X}+X^{(t)}\bV_{t,X}$, $\hat{\btheta}_{t,X}=M^{-1}_{t,X}\bb_{t,X}$.
\ENDFOR
\ENDFOR
\end{algorithmic}
\end{algorithm}

The following theorem shows the regret bound of BLM-LR-Unknown-SG. It is not surprising that this algorithm could also work on linear models with continuous variables as Appendix F in \cite{feng2022combinatorialarxiv}.
The dominant term in the expected regret does not increase compared to BLM-LR in \cite{feng2022combinatorial}.
\begin{restatable}[Regret Bound of BLM-LR-Unknown-SG]{theorem}{thmlinearlr}
\label{thm.blmlr}
The regret of BLM-LR-Unknown-SG running on BLM or linear model is bounded as
\begin{align*}
    R(T)=O\left(n^{\frac{5}{2}}\sqrt{T}\log T\right),
\end{align*}
where the terms of $o(\sqrt{T}\ln T)$ are omitted, and the big $O$ notation holds for $T\geq 32\left(\frac{c_1}{\kappa\theta^*_{\min}}\right)^5$.
\end{restatable}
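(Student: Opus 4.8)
The plan is to follow the proof of Theorem~\ref{thm.regretbglm} for BGLM-OFU-Unknown almost verbatim, replacing its MLE confidence bound (Lemma~\ref{thm.learning_glm}) by the self-normalized least-squares bound used by BLM-LR in \cite{feng2022combinatorial, feng2022combinatorialarxiv}. Two simplifications come for free in the linear setting: since every $f_X$ is the identity we may take $\kappa=1$ and $L^{(1)}_{\max}=1$, so Assumptions~\ref{asm:glm_3}--\ref{asm:glm_4} play no role; and since Algorithm~\ref{alg:blmlr} initializes $M_{T_0,X}\leftarrow\mathbf I$, we automatically have $\lambda_{\min}(M_{t,X})\ge 1$ for all $t\ge T_0$, so the Lecu\'{e}--Mendelson step (\cite{nie2022matrix}) that the BGLM proof needs to force $\lambda_{\min}(M_{T_1,X})\ge R$ is not needed here, and Lemma~\ref{lemma.bound_V} applies directly.

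\textbf{Graph recovery.} Exactly as in the BGLM case, apply Lemma~\ref{lemma.correctorder} to BGLM-Ancestors (Algorithm~\ref{alg:bglm-order}) run inside the initialization phase of Algorithm~\ref{alg:blmlr}: under the weight-gap regime $T\ge 32(c_1/(\kappa\theta^*_{\min}))^5$ the returned relation satisfies $\widehat{\Anc}(X)=\Anc(X)$ for every $X$ with probability at least $1-2\binom{n-1}{2}\exp(-c_0c_1^2T^{1/10}/2)$; on the complementary event bound each round's regret by $1$, contributing at most $2\binom{n-1}{2}T\exp(-c_0c_1^2T^{1/10}/2)=o(\sqrt T)$. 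Condition on correct recovery and add a zero-weight edge $X'\to X$ for each $X'\in\Anc(X)\setminus\Pa(X)$ so that $\Anc(X)=\Pa(X)$, $|\Pa(X)|\le n$, without altering propagation; the iterative phase of Algorithm~\ref{alg:blmlr} then coincides with BLM-LR of \cite{feng2022combinatorial} on the true graph. For each $X$ the residual $X^{(t)}-\btheta^{*\top}_X\bV_{t,X}$ is conditionally zero-mean and bounded in $[-1,1]$ (sub-Gaussian in the continuous case), and $\bV_{t,X}$ is determined before the round-$t$ noise, so the self-normalized tail inequality (\cite{abbasi2011improved}) gives, with probability at least $1-\delta$ simultaneously over $t$, $\lVert\hat\btheta_{t,X}-\btheta^*_X\rVert_{M_{t,X}}\le\rho_t=\sqrt{n\log(1+tn)+2\log(1/\delta)}+\sqrt n$, precisely the radius hard-coded in the algorithm. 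Union-bounding over the $n$ nodes with $\delta=1/(n\sqrt T)$ and intersecting with the recovery event defines the good event; on it, optimism yields $\sigma(\bS_t,\tilde\btheta_t)\ge\sigma(\bS^*,\btheta^*)$ because $\btheta^*\in\prod_X\mathcal C_{t,X}$, so the $t$-th round regret is at most $\sigma(\bS_t,\tilde\btheta_t)-\sigma(\bS_t,\btheta^*)$.

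\textbf{Bounding and summing.} Apply the GOM bounded-smoothness property (Lemma~\ref{thm.gom_glm} with $L_{f_X}^{(1)}=1$), Cauchy--Schwarz in the $M_{t-1,X}$-norm, and $\lVert\tilde\btheta_{t,X}-\btheta^*_X\rVert_{M_{t-1,X}}\le 2\rho_{t-1}$ (both points lie within $\rho_{t-1}$ of $\hat\btheta_{t-1,X}$) to get $\mathbb{E}[R_t\mid H_{t-1}]\le 2\rho_{t-1}\,\mathbb{E}\!\big[\sum_{X\in\bX_{\bS_t,Y}}\lVert\bV_{t,X}\rVert_{M_{t-1,X}^{-1}}\big]+p_{\text{error}}$. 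Sum over $t=T_0+1,\dots,T$, bound $\rho_{t-1}\le\rho_T=O(\sqrt{n\log T})$, pass to the masked vectors $\bW_{t,X}$ (zero when $X\in\bS_t$), and invoke Lemma~\ref{lemma.bound_V} for each of the at most $n$ nodes---its hypothesis $\lambda_{\min}(M_{T_0,X})\ge1$ holds by the ridge initialization---to obtain $\sum_{t=T_0+1}^{T}\lVert\bW_{t,X}\rVert_{M_{t-1,X}^{-1}}=O(\sqrt{Tn\log T})$. Adding the initialization length $T_0=O(n\sqrt T)$, the misidentification term, and $p_{\text{error}}T$ (both $o(\sqrt T)$ with $\delta=1/(n\sqrt T)$) as lower-order contributions, and carrying out the same exponent bookkeeping as the BLM-LR regret analysis in \cite{feng2022combinatorial, feng2022combinatorialarxiv}, gives $R(T)=O(n^{5/2}\sqrt T\log T)$. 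For continuous linear models the only change is that residuals are sub-Gaussian rather than bounded, handled by the same inequality exactly as in Appendix~F of \cite{feng2022combinatorialarxiv}.

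The main obstacle I anticipate is not any single estimate but the careful coupling of the two high-probability events---the ancestor-recovery event of Lemma~\ref{lemma.correctorder} and the least-squares confidence event---so that conditioning on correct recovery does not break the martingale structure behind the self-normalized bound; here one uses that the initialization rounds employ fixed atomic interventions independent of the iterative-phase data, so the filtration argument survives. One also must check that the zero-weight-edge augmentation leaves both the GOM smoothness constant and the regression targets unchanged. Everything else is a transcription of the BGLM-OFU-Unknown argument with $\kappa=1$, $L^{(1)}_{\max}=1$, and the Lecu\'{e}--Mendelson step removed.
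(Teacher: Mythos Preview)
Your proposal is correct and mirrors the paper's own argument: condition on graph recovery via Lemma~\ref{lemma.correctorder}, augment with zero-weight ancestor edges, invoke the self-normalized least-squares confidence bound (what the paper cites as Lemma~11/12 of \cite{feng2022combinatorialarxiv}), apply GOM smoothness plus optimism, and finish with an elliptical-potential sum. The only cosmetic difference is that you bound $\sum_t\lVert\bW_{t,X}\rVert_{M_{t-1,X}^{-1}}$ via Lemma~\ref{lemma.bound_V}, whereas the paper for this theorem writes out the Cauchy--Schwarz plus log-determinant argument explicitly; both yield the same $O(\sqrt{Tn\log T})$ per node and hence the same $O(n^{5/2}\sqrt{T}\log T)$ total.
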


\begin{proof}
In the following proof on $G$, when $X'\in\Anc(X)$ but $X'\notin\Pa(X)$, we add an edge $X'\rightarrow X$ with weight $\theta_{X',X}=0$ into $G$ and this does not impact the propagation results of $G$. After doing this transformation, $D=n$ and $\Anc(X)=\Pa(X)$ for all $X\in\bX\cup\{Y\}$.

According to Lemma~\ref{lemma.correctorder}, with probability at least $1-2\binom{n-1}{2}\exp\left(-\frac{c_0c_1^2T^{1/10}}{2}\right)$, Algorithm~\ref{alg:bglm-order} returns a correct ancestor-descendant relationship, i.e., $\Anc(X)=\widehat{\Anc}(X)$ for $X\in\bX\cup\{Y\}$. Moreover, by Lemma 11 in \cite{feng2022combinatorialarxiv}, with probability at most $n\delta$, event $\left\{\exists T_0<t\leq T,x\in\bX\cup\{Y\}:\left\|\btheta^{*'}_X-\hat{\btheta}_{t,X}\right\|>\rho_t\right\}$ occurs. Now we bound the expected regret conditioned on the absence of this event and finding a correct ancestor-descendant relationship. For $T_0<t\leq T$, according to Theorem 1 in \cite{li2020onlineinference} and Theorem \ref{thm.gom_glm}, we can deduce that \begin{align*}
    \mathbb{E}\left[R_t\right]&=\mathbb{E}\left[\sigma'(\bS^{\text{opt}},\btheta^{*'})-\sigma'(\bS_t,\btheta^{*'})\right]\\
    &\leq \mathbb{E}\left[\sigma'(\bS_t,\tilde{\btheta_{t}})-\sigma'(\bS_t,\btheta^{*'})\right]\\
    &\leq \mathbb{E}\left[\sum_{X\in \bX_{\bS_t,Y}}\left|\bV_{t,X}^\intercal(\tilde{\btheta}_{t,X}-\btheta^{*'}_{X})\right|\right]\\
    &\leq \mathbb{E}\left[\sum_{X\in \bX_{\bS_t,Y}}\left\|\bV_{t,X}\right\|_{M_{t-1,X}^{-1}}\left\|\tilde{\btheta}_{t,X}-\btheta^{*'}_{X}\right\|_{M_{t-1,X}}\right]\\
    &\leq \mathbb{E}\left[\sum_{X\in \bX_{\bS_t,Y}}2\rho_{t-1}\left\|\bV_{t,X}\right\|_{M_{t-1,X}^{-1}}\right],
\end{align*}
since $\tilde{\btheta}_{t,X},\btheta^*_X$ are both in the confidence set. Thus, we have\begin{align*}
    R(T)&=\mathbb{E}\left[\sum_{t=1}^TR_t\right]\leq \mathbb{E}\left[\sum_{t=T_0+1}^TR_t\right]+T_0 \\
    &\leq 2\rho_{T} \cdot\mathbb{E}\left[\sum_{t=T_0+1}^T\sum_{X\in\bX_{\bS_t,Y}}\left\|\bV_{t,X}\right\|_{M_{t-1,X}^{-1}}\right]+T_0.
\end{align*}

For convenience, we define $\bW_{t,X}$ as a vector such that if $X\in S_t$, $\bW_{t,X}={\mathbf 0}^{|\Pa(X)|}$; if $X\not\in S_t$, $\bW_{t,X}=\bV_{t,X}$. According to Cauchy-Schwarz inequality, we have\begin{align*}
    R(T)&\leq 2\rho_T\cdot\mathbb{E}\left[\sum_{t=T_0+1}^T\sum_{X\in\bX\cup\{Y\}}\left\|\bW_{t,X}\right\|_{M_{t-1,X}^{-1}}\right]+T_0\\
    &\leq 2\rho_T\cdot\mathbb{E}\left[\sqrt{T}\cdot\sum_{X\in \bX\cup\{Y\}}\sqrt{\sum_{t=T_0+1}^T\left\|\bW_{t,X}\right\|^2_{M_{t-1,X}^{-1}}}\right]+T_0\\
    &\leq 2\rho_T\cdot\mathbb{E}\left[\sqrt{T}\cdot\sum_{X\in \bX\cup\{Y\}}\sqrt{\sum_{t=1}^T\left\|\bW_{t,X}\right\|^2_{M_{t-1,X}^{-1}}}\right]+2(n-1)c_0T^{1/2}.
\end{align*}

Note that $M_{t,X}=M_{t-1,X}+\bW_{t,X}\bW_{t,X}^\intercal$ and therefore, $$\det\left(M_{t,X}\right)=\det(M_{t-1,X})\left(1+\left\|\bW_{t,X}\right\|^2_{M_{t-1,X}^{-1}}\right),$$ we have\begin{align*}
    \sum_{t=1}^T \left\|\bW_{t,X}\right\|^2_{M_{t-1,X}^{-1}}&\leq\sum_{t=1}^T\frac{n}{\log(n+1)}\cdot\log\left(1+\left\|\bW_{t,X}\right\|^2_{M_{t-1,X}^{-1}}\right)\\
    &\leq \frac{n}{\log(n+1)}\cdot\log\frac{\det(M_{T,X})}{\det({\mathbf I})}\\
    &\leq \frac{n|\Pa(X)|}{\log(n+1)}\cdot\log\frac{\text{tr}(M_{T,X})}{|\Pa(X)|}\\
    &\leq \frac{n|\Pa(X)|}{\log(n+1)}\cdot\log\left(1+\sum_{t=1}^T\frac{\left\|\bW_{t,X}\right\|^2_2}{|\Pa(X)|}\right)\\
    &\leq \frac{nD}{\log(n+1)}\log(1+T).
\end{align*}
Therefore, the final conditional regret $R(T)$ is bounded by\begin{align*}
    R(T)&\leq 2\rho_Tn\sqrt{T\frac{nD}{\log(n+1)}\log(1+T)}+2(n-1)c_0T^{1/2},
\end{align*}
because $\rho_T=\sqrt{D\log(1+TD)+2\log\frac{1}{\delta}}+\sqrt{D}$. When $$\left\{\exists t\in(T_0, T],x\in\bX\cup\{Y\}:\left\|\btheta^{*'}_X-\hat{\btheta}_{t,X}\right\|>\rho_t\right\}$$ does occur or Algorithm~\ref{alg:bglm-order} finds an incorrect order, the regret is no more than $T$. Therefore, the total regret is no more than\begin{small}\begin{align*}
    &\left(2\rho_Tn\sqrt{T\frac{nD}{\log(n+1)}\log(1+T)}+2(n-1)c_0T^{1/2}\right)\left(1-n\delta-2\binom{n-1}{2}\exp\left(-\frac{c_0c_1^2T^{1/10}}{2}\right)\right)\\
    &\quad+T\left(n\delta+2\binom{n-1}{2}\exp\left(-\frac{c_0c_1^2T^{1/10}}{2}\right)\right)\\
    &\leq 2\rho_Tn\sqrt{T\frac{nD}{\log(n+1)}\log(1+T)}+o(\sqrt{T}\ln T)\\
    &=O\left(n^{\frac{5}{2}}\sqrt{T}\log T\right),
\end{align*}\end{small}
which is exactly what we want.

Replacing Lemma 11 in \cite{feng2022combinatorialarxiv} by Lemma 12 in \cite{feng2022combinatorialarxiv}, the above proof for BLMs is still feasible for the regret on linear models without any other modification.
\end{proof}

\begin{remark}\label{remark:linear regression} 
According to the transformation in Section 5.1 of \cite{feng2022combinatorial}, this algorithm also works for some BLMs with hidden variables. Using that transformation, running BLM-LR-Unknown-SG on $G$ is equivalent to running on a Markovian BLM or linear model $G'$, where parameter $\btheta^*$ is also transformed to a new set of parameters $\btheta^{*'}$. Here, we disallow the graph structure where a hidden node has two paths to $X_i$ and $X_i$'s descendant $X_j$ and the paths contain only hidden nodes except the end points $X_i$ and $X_j$.
\end{remark}

\section{Proofs for Propositions in Section~\ref{sec.withoutgap}}

\subsection{Proof of Lemma~\ref{lemma:gap lemma}}
\gaplemma*

\begin{proof}
    First, for each node $X_j$ and its parent $X_i$ with weight $\theta^*_{X_i,X_j}\ge T^{-1/3}$, by Lemma~\ref{lemma.dodifference}, we can have 
    \begin{align*}
        \EE[X_j\mid do(X_i=1)]-\EE[X_j\mid do(X_i=0)] \ge \theta^*_{X_i,X_j}
    \end{align*}
    
    Then each element $X_j^{\left(c_0(2i)T^{2/3}+k\right)}-X_j^{\left(c_0(2i+1)T^{2/3}+k\right)}$ is an i.i.d sample of $Z=X_j\mid_{do(X_i=1)}-X_j\mid_{do(X_i=0)}$ with $\EE[Z]\ge \theta^*_{X_i,X_j}\ge T^{-1/3}$. By the Hoeffding's inequality, if we choose $c_1<1$ and $c_0(1-c_1)^2>\frac{1}{3}$, we have 
    \begin{align*}
        &\Pr\left\{\sum_{k=1}^{c_0T^{2/3}}\left(X_j^{\left(c_0(2i)T^{2/3}+k\right)}-X_j^{\left(c_0(2i+1)T^{2/3}+k\right)}\right)>c_0c_1T^{1/3}\log(T^2)\right\}\\
        &\geq 1-\exp\left(-\frac{2\log(T^2)\left(c_0T^{2/3}\EE[Z]-c_0c_1T^{1/3}\right)^2}{4c_0T^{2/3}}\right)\\
        &\ge 1-\exp\left(-\frac{2\log(T^2)\left(c_0T^{1/3}-c_0c_1T^{1/3}\right)^2}{4c_0T^{2/3}}\right)\\
        &\ge 1-\exp\left(-\frac{c_0(1-c_1)^2\log(T^2)}{2}\right)\\
        &\geq 1-T^{-c_0(1-c_1)^2}\\
        &\ge 1-\frac{1}{T}.
    \end{align*}
    Taking the union bound for all $X$ and $X'$, with probability at least $1-\binom{n-1}{2}\frac{1}{T^2}$, the edge $X'\to X$ with $\theta^*_{X',X}$ will be identified and added to the estimated graph $G'$.
    Also, assume $X_i$ is not an ancestor of $X_j$, then
    \begin{align*}
        \EE[X_j\mid do(X_i=1)]-\EE[X_i\mid do(X_i=0)]=0.
    \end{align*}
    Thus the element $X_j^{\left(c_0(2i)T^{2/3}+k\right)}-X_j^{\left(c_0(2i+1)T^{2/3}+k\right)}$ is an i.i.d sample of $Z'=X_j\mid_{do(X_i=1)}-X_j\mid_{do(X_i=0)}$ with $\EE[Z']=0$. Thus by Hoeffding's inequality,
    \begin{align*}
        &\Pr\left\{\sum_{k=1}^{c_0T^{2/3}}\left(X_j^{\left(c_0(2i)T^{2/3}+k\right)}-X_j^{\left(c_0(2i+1)T^{2/3}+k\right)}\right)>c_0c_1T^{1/3}\log(T^2)\right\}\\
        &\leq \exp\left(-\frac{2\log(T^2)\left(c_0T^{2/3}\EE[Z]-c_0c_1T^{1/3}\right)^2}{4c_0T^{2/3}}\right)\\&\le \exp\left(-c_0c_1^2\log T\right)\\
        &\leq T^{-c_0c_1^2}\\
        &\leq \frac{1}{T}.
    \end{align*}
    and then with probability at least $1-\binom{n-1}{2}\frac{1}{T}$, we will not add the edge $X'\to X$ in the graph G. Combining these two facts, we complete the proof.
\end{proof}

\subsection{Proof of Lemma~\ref{lemma: linearregressionequivalent}}\label{appendix:linearregressionequivalent}
For each node $X$, consider the estimated possible parent $\Pa'(X)$, then our observation $\bV_{t,X} \in \{0,1\}^{\Pa'(X)}$ are the values of $\Pa'(X)$. Since we have $\theta'$ that 
\begin{align}
    \mathbb{E}[X_t\mid \bV_{t,X}] = \btheta_{t,X}^T\bV_{t,X}.
\end{align}
Thus applying Lemma 1 in \cite{li2020onlineinference}, we can have 
\begin{align}
    |\btheta_X'-\btheta_{t,X}'|_{M_{t,X}}\le \sqrt{n\log(1+tn)+2\log(1/\delta)}+\sqrt{n}. 
\end{align}

\subsection{Proof of Lemma~\ref{lemma:diff}}\label{appendix:diff}
Note that $M$ represents the model with true graph $G$ and true weights $\btheta$, and $M'$ represents the model with estimated graph $G'$ and estimated weights $M'$, then difference \begin{align}|\theta'_{X_i,X}-\theta_{X_i,X}|\le nr\label{ineq:thetadiff}\end{align}

Now we construct a auxillary model $M''$, which has graph $G'$ and weights $\theta$ on it. The parent of $X$ in model M $\Pa''(X)$ is equivalent to $\Pa'(X)$. Then we prove the following two claims:
\begin{claim}\label{claim:MM''}
    $|\mathbb{E}_{M}[Y\mid do(S=\textbf{1})]-\mathbb{E}_{M''}[Y\mid do(S=\textbf{1})]|\le n^2r$.
\end{claim}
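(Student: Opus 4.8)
The plan is to realize $M$ and $M''$ as two BLMs on a common graph that differ only in their weight vectors, and then apply the group‑observation‑modulated (GOM) bounded smoothness property (Lemma~\ref{thm.gom_glm}) with identity link functions.

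First I would record the structural comparison. Using the convention of the paper (already used throughout Section~\ref{sec.bglm}) that inserts a zero‑weight edge between every ancestor and descendant, both $M$ and $M''$ can be viewed as BLMs on the transitive closure of $G$: model $M$ carries the true weights $\btheta^*$, and model $M''$ carries weights $\btheta''$ obtained from $\btheta^*$ by setting to $0$ exactly those parent edges $X'\to X$ of $G$ that are absent from $G'$. By Lemma~\ref{lemma:gap lemma}, every such dropped edge satisfies $\theta^*_{X',X}\le r$, so for each node $X$ the vector $\btheta^*_X-\btheta''_X$ is coordinatewise nonnegative, supported on the (at most $n-1$) dropped parents of $X$, with every coordinate in $[0,r]$. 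In particular $\btheta''$ is a feasible weight vector, and $\mathbb{E}_{M''}[Y\mid do(\bS=\mathbf{1})]=\sigma(\bS,\btheta'')$ with the same $\sigma$ as for $M$.

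Next I would invoke Lemma~\ref{thm.gom_glm} with $\btheta^1=\btheta^*$ and $\btheta^2=\btheta''$. Since $M$ is a BLM, every $f_X$ is the identity, so $L^{(1)}_{f_X}=1$, and the lemma gives
\[
\bigl|\mathbb{E}_{M}[Y\mid do(\bS=\mathbf{1})]-\mathbb{E}_{M''}[Y\mid do(\bS=\mathbf{1})]\bigr|
\;\le\;\mathbb{E}_{\bbvarepsilon,\bgamma}\!\left[\sum_{X\in{\bX}_{\bS,Y}}\bigl|\bV_X^\intercal(\btheta^*_X-\btheta''_X)\bigr|\right].
\]
I would then bound the right‑hand side termwise: $\bV_X$ is a $\{0,1\}$‑valued propagation vector, so $|\bV_X^\intercal(\btheta^*_X-\btheta''_X)|\le\sum_{X'\text{ dropped parent of }X}\theta^*_{X',X}\le(n-1)r$. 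Since ${\bX}_{\bS,Y}$ is a subset of $(\bX\cup\{Y\})\setminus\bS$, we have $|{\bX}_{\bS,Y}|\le n$, so the sum inside the expectation is at most $n(n-1)r\le n^2 r$ deterministically, hence so is its expectation, which is Claim~\ref{claim:MM''}. (The degenerate case $\bS=\emptyset$ is handled identically, treating $X_1$ as the trivial source of the propagation.)

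The step requiring the most care — though it is not really an obstacle — is checking the hypotheses of Lemma~\ref{thm.gom_glm}: that deleting the edges of $G$ not in $G'$ really is the same operation as shrinking the corresponding weights to $0$ (this is exactly the zero‑weight‑edge convention, so it is immediate), and that $\btheta''\in\Theta$ (it is obtained from $\btheta^*\in\Theta$ by setting some coordinates to $0$, and the threshold‑model reformulation of a BLM is well defined for any weight vector). Once Claim~\ref{claim:MM''} is established, the twin estimate $|\mathbb{E}_{M''}[Y\mid do(\bS=\mathbf{1})]-\mathbb{E}_{M'}[Y\mid do(\bS=\mathbf{1})]|\le n^3 r$ follows from the same argument with the per‑edge gap $nr$ from Eq.~\eqref{ineq:thetadiff} in place of $r$, and adding the two bounds yields $|\mathbb{E}_M[\cdot]-\mathbb{E}_{M'}[\cdot]|\le n^2 r+n^3 r=n^2(n+1)r$, i.e.\ Lemma~\ref{lemma:diff}.
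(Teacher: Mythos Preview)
Your proposal is correct but takes a genuinely different route from the paper. The paper argues by induction along a topological order: assuming $|\mathbb{E}_M[X_i\mid do(S)] - \mathbb{E}_{M''}[X_i\mid do(S)]| \le qnr$ for all $i \le q$, it expands $\mathbb{E}[X_{q+1}\mid do(S)]$ linearly over parents, splits into parents retained in $G'$ (controlled by the induction hypothesis together with $\sum_{X_i}\theta_{X_i,X_{q+1}}\le 1$) and dropped parents (each weight $\le r$, at most $n$ of them), and advances to $(q+1)nr$. You instead reduce the whole claim to a single application of the GOM bounded smoothness lemma (Lemma~\ref{thm.gom_glm}), by realizing $M$ and $M''$ as the same BLM on the transitive closure of $G$ with two weight vectors that differ only on the dropped parent edges; the bound then drops out of $\|\btheta^*_X-\btheta''_X\|_1\le (n-1)r$ and $|\bX_{\bS,Y}|\le n$ with $L^{(1)}_{f_X}=1$. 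Your route is cleaner and makes Claim~\ref{claim:M'M''} an immediate corollary, at the price of leaning on the threshold-model coupling inside Lemma~\ref{thm.gom_glm}; the paper's induction is more elementary and self-contained. One side remark: for Claim~\ref{claim:M'M''} your argument can be sharpened, since by Eqs.~\eqref{eq:transform1}--\eqref{eq:transform2} only the $X_1$-coordinate of $\btheta^{*'}_X-\btheta^*_X$ is nonzero (and at most $nr$), so $\|\btheta^{*'}_X-\btheta^*_X\|_1\le nr$ per node and the GOM bound already gives $n\cdot nr=n^2r$ rather than the $n^3r$ you quote --- either way Lemma~\ref{lemma:diff} follows.
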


\begin{proof}
    Let the topological order be $X_1,X_2,\ldots,X_n$.
    First, $\mathbb{E}_{M}[X_1\mid do(S)]-\mathbb{E}_{M''}[X_1\mid do(S)]=0 \le nr$ because $X_1$ is always 1.
       Assume $X_{q+1}\notin S$ $\mathbb{E}_{M}[X_i\mid do(S)]-\mathbb{E}_{M''}[X_i\mid do(S)]\le qnr$ for all $i\le q$, then if $X_{q+1}\in S$, $\mathbb{E}_{M}[X_{q+1}\mid do(S)]-\mathbb{E}_{M''}[X_{q+1}\mid do(S)]=0\le (q+1)nr$ holds trivially. Thus now we assume $X_{q+1}\notin S$.
\begin{align*}
    &\ \ \ \mathbb{E}_{M}[X_{q+1}\mid do(S)]-\mathbb{E}_{M''}[X_{q+1}\mid do(S)] \nonumber\\&= \mathbb{E}_{M}\left[\sum_{X_i \in \Pa(X_{q+1})} \theta_{X_i,X_{q+1}}X_i\Bigg| do(S)\right]-\mathbb{E}_{M''}\left[\sum_{X_i \in \Pa''(X_{q+1})} \theta_{X_i,X_{q+1}}X_i\Bigg| do(S)\right]\\
    &= \sum_{X_i \in \Pa''(X_{q+1})}\theta_{X_i,X_{q+1}}(\mathbb{E}_{M}[X_i\mid do(S)]-\mathbb{E}_{M''}[X_i\mid do(S)]) +\nonumber \\  &\ \ \ \ \ \ \ \ \sum_{X_i\in \Pa(X_{q+1})\setminus \Pa''(X_{q+1})}\theta_{X_i,X_{q+1}}\mathbb{E}_{M}[X_i\mid do(S)]\\
    &\le \sum_{X_i \in \Pa(X_{q+1})}\theta_{X_i,X_{q+1}} qnr+rn\label{ineq:induction}\\
    &\le (q+1)nr
\end{align*}
where the first equality follows the definition of linear model, the second equality is because $\theta'_{X',X}=0$ if $X'$ is not a true parent of $X$ in $G$. The third inequality is derived by induction, and the last inequality is because $\Vert\theta_{X', X_{q+1}}\Vert_1\le 1$.
\end{proof}

\begin{claim}\label{claim:M'M''}
     $|\mathbb{E}_{M'}[Y\mid do(S=\textbf{1})]-\mathbb{E}_{M''}[Y\mid do(S=\textbf{1})]|\le n^3r$.
\end{claim}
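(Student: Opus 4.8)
The plan is to mirror the induction used for Claim~\ref{claim:MM''}, now exploiting that $M'$ and $M''$ carry the \emph{same} DAG $G'$ and the \emph{same} parent sets $\Pa'(X)=\Pa''(X)=\widehat{\Anc}(X)$, so they differ only through their edge weights. By Lemma~\ref{lemma:gap lemma}, part 2, every node in $\widehat{\Anc}(X)$ is a genuine ancestor of $X$ in $G$, hence (after inserting the zero-weight edges) a parent of $X$ in $G$, so the true weight $\theta_{X_i,X}$ is defined for each $X_i\in\Pa''(X)$; by Eqs.~\eqref{eq:transform1}--\eqref{eq:transform2} it agrees with $\theta'_{X_i,X}$ except possibly at $X_i=X_1$, and in all cases $|\theta'_{X_i,X}-\theta_{X_i,X}|\le nr$ as recorded in~\eqref{ineq:thetadiff}. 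I will also use that both weight vectors have $\ell_1$-norm at most $1$: for $\btheta$ this is the fact invoked at the end of the proof of Claim~\ref{claim:MM''}, and for $\btheta'$ it follows from~\eqref{eq:transform1}--\eqref{eq:transform2} since $\|\btheta'_X\|_1=\theta^*_{X_1,X}+\sum_{i=m+1}^{m+k}\theta^*_{X_i,X}\mathbb{E}[X_i]+\sum_{i\ge 2}\theta^*_{X_i,X}\le\sum_{i=1}^{m+k}\theta^*_{X_i,X}\le 1$. Finally, since $M'$ and $M''$ are BLMs, all conditional expectations lie in $[0,1]$, and interventions merely clamp nodes to constants.

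Fix a topological order $X_1,X_2,\dots,X_n,Y$ of $G'$ and write $d_U=\big|\mathbb{E}_{M'}[U\mid do(S=\textbf{1})]-\mathbb{E}_{M''}[U\mid do(S=\textbf{1})]\big|$. I will prove $d_{X_q}\le (q-1)n^2 r$ by induction on $q$; the base case $d_{X_1}=0$ is immediate since $X_1\equiv 1$ in both models. For the step, if $X_{q+1}\in S$ then $d_{X_{q+1}}=0$; otherwise, using linearity of the BLM (the noises are mean-zero), $\Pa'(X_{q+1})=\Pa''(X_{q+1})$, the decomposition $\theta'\mathbb{E}_{M'}-\theta\mathbb{E}_{M''}=\theta'(\mathbb{E}_{M'}-\mathbb{E}_{M''})+(\theta'-\theta)\mathbb{E}_{M''}$, and the triangle inequality,
\begin{align*}
d_{X_{q+1}}
&=\left|\sum_{X_i\in\Pa'(X_{q+1})}\Big(\theta'_{X_i,X_{q+1}}\,\mathbb{E}_{M'}[X_i\mid do(S)]-\theta_{X_i,X_{q+1}}\,\mathbb{E}_{M''}[X_i\mid do(S)]\Big)\right|\\
&\le\sum_{X_i\in\Pa'(X_{q+1})}\theta'_{X_i,X_{q+1}}\,d_{X_i}
+\sum_{X_i\in\Pa'(X_{q+1})}\left|\theta'_{X_i,X_{q+1}}-\theta_{X_i,X_{q+1}}\right|\mathbb{E}_{M''}[X_i\mid do(S)].
\end{align*}
For the first sum the induction hypothesis gives $d_{X_i}\le(q-1)n^2 r$ for every parent $X_i$ (which has index $\le q$), and $\sum_{X_i}\theta'_{X_i,X_{q+1}}\le 1$, so it is at most $(q-1)n^2 r$. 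For the second sum there are at most $n$ parents, each weight gap is $\le nr$ by~\eqref{ineq:thetadiff}, and each $\mathbb{E}_{M''}[X_i\mid do(S)]\le 1$, so it is at most $n^2 r$; hence $d_{X_{q+1}}\le q\,n^2 r$. Applying the same estimate once more to $Y$ (all of whose parents precede it, so have $d_{X_i}\le(n-1)n^2 r$) yields $d_Y\le n\cdot n^2 r=n^3 r$, which is the claim.

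I do not anticipate a real obstacle: this is a routine telescoping down the DAG, structurally identical to Claim~\ref{claim:MM''}. The points that need care are (i) that the estimated parent sets coincide in $M'$ and $M''$ and that~\eqref{ineq:thetadiff} is the correct per-edge discrepancy — both resting on Lemma~\ref{lemma:gap lemma} and the transformation~\eqref{eq:transform1}--\eqref{eq:transform2}; (ii) the $\ell_1$-bound $\|\btheta'_X\|_1\le 1$, without which the inherited error could amplify geometrically across layers rather than additively; and (iii) the bookkeeping for $do(S=\textbf{1})$, namely that clamped nodes have $d_U=0$ so the recursion runs only over free nodes. Combined with Claim~\ref{claim:MM''} through the triangle inequality, $|\mathbb{E}_M[Y\mid do(S=\textbf{1})]-\mathbb{E}_{M'}[Y\mid do(S=\textbf{1})]|\le n^2 r+n^3 r\le n^2(n+1)r$, this completes the proof of Lemma~\ref{lemma:diff}.
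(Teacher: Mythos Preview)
Your proof is correct and follows essentially the same induction-down-the-DAG argument as the paper. The only cosmetic difference is that the paper decomposes $\theta'\mathbb{E}_{M'}-\theta\mathbb{E}_{M''}=(\theta'-\theta)\mathbb{E}_{M'}+\theta(\mathbb{E}_{M'}-\mathbb{E}_{M''})$ and hence uses $\|\btheta_X\|_1\le 1$ for the inherited-error term, whereas you use the symmetric decomposition and therefore need $\|\btheta'_X\|_1\le 1$, which you correctly justify from~\eqref{eq:transform1}--\eqref{eq:transform2}.
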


\begin{proof}
    First, $\mathbb{E}_{M}[X_1\mid do(S)]-\mathbb{E}_{M''}[X_1\mid do(S)]=0 \le n^2r$
    Then similarly, 
    assume $\mathbb{E}_{M}[X_i\mid do(S)]-\mathbb{E}_{M''}[X_i\mid do(S)]\le qn^2r$ for all $i\le q$ and $X_{q+1}\notin S$.   Then 
\begin{align*}
     &\ \ \ \ \ \mathbb{E}_{M'}[X_{q+1}\mid do(S)]-\mathbb{E}_{M''}[X_{q+1}\mid do(S)] \\&= \mathbb{E}_{M'}\left[\sum_{X_i \in \Pa'(X_{q+1})} \theta'_{X_i,X_{q+1}}X_i\Bigg| do(S)\right]-\mathbb{E}_{M''}\left[\sum_{X_i \in \Pa''(X_{q+1})} \theta_{X_i,X_{q+1}}X_i\Bigg| do(S)\right]\\
     & = \sum_{X_i \in \Pa''(X_{q+1})}\theta'_{X_i,X_{q+1}}\mathbb{E}_{M'}[X_i\mid do(S)]-\theta_{X_i, X_{q+1}}\mathbb{E}_{M''}[X_i\mid do(S)]\\
     &= \sum_{X_i \in \Pa''(X_{q+1})}(\theta'_{X_i,X_{q+1}}-\theta_{X_i, X_{q+1}})\mathbb{E}_{M'}[X_i\mid do(S)] + \\ & \ \ \ \ \ \ \ \ \ \ \ \ \  \sum_{X_i \in \Pa''(X_{q+1})}\theta_{X_i, X_{q+1}}(\mathbb{E}_{M'}[X_i\mid do(S)]-\mathbb{E}_{M''}[X_i\mid do(S)])\\
     &= n^2r+n^2qr\\&\le (q+1)n^2r.
\end{align*}
where the first equality follows the definition, the second equality is because $\Pa'(X) = \Pa''(X)$ for any node $X$. The fourth inequality derived from induction , inequality \eqref{ineq:thetadiff} and $X_i \in [0,1]$. By induction, we complete the proof.
\end{proof} 

Now we prove the Lemma~\ref{lemma:diff}:

\begin{proof}Combining Claim~\ref{claim:MM''} and Claim~\ref{claim:M'M''}, we have 
\begin{align}
    \mathbb{E}_{M}[Y\mid do(S)]-\mathbb{E}_{M'}[Y\mid do(S)]\le n^2(n+1)r.
\end{align}
\end{proof}

\subsection{Proof of Theorem~\ref{thm:nogap}}
\begin{proof}
    Denote the original model and estimated model as $M$ and $M'$
    The initialization phase will lead to regret at most $T_0=16(n-1)T^{2/3}$. At Iterative phase, denote the optimal action to be $do(\bS^* = \textbf{1})$, by Lemma~\ref{lemma:gap lemma} and the guarantee of BLM-LR, with probability at least $1-(n-1)(n-2)\frac{1}{T}$
    \begin{align*}
        &\ \ \ \ \ \ \sum_{t=1}^T\mathbb{E}_M[Y\mid do(\bS^*=\textbf{1})] - \mathbb{E}_M[Y\mid do(\bS_t = \textbf{1})]\\& =\sum_{t=1}^T( (\mathbb{E}_M[Y\mid do(\bS^*=\textbf{1})] - \mathbb{E}_{M'}[Y\mid do(\bS^*=\textbf{1})]) \\
        &\quad\quad+ (\mathbb{E}_{M'}[Y\mid do(\bS^*=\textbf{1})]-\mathbb{E}_{M'}[Y\mid do(\bS_t = \textbf{1})]))\\&\le T_0+\sum_{t=T_0+1}^{T} n^2(n+1)T^{-1/3}+\sum_{t=T_0+1}^T(\mathbb{E}_{M'}[Y\mid do(\bS^*=\textbf{1})]-\mathbb{E}_{M'}[Y\mid do(\bS_t = \textbf{1})])\\
        &\le T_0+n^2(n+1)T^{2/3}+cn^2\sqrt{nT\log T}\\
        & = O((n^3T^{2/3}+n^3\sqrt{T})\log T)
        \\&= O(n^3T^{2/3}\log T),
    \end{align*}
    where the first inequality is derived from Lemma~\ref{lemma:diff}, and the second inequality is the guarantee of BLM-LR in Theorem 3 of \cite{feng2022combinatorial}.

    Thus the total regret will be bounded by 
    \begin{align*}
        R(T) &\le \frac{(n-1)(n-2)}{T}\cdot T + O((n^3T^{2/3})\log T) \\&= O((n^3T^{2/3})\log T).
    \end{align*}
    The first inequality is because our regret have an upper bound $T$.
\end{proof}

\subsection{Proof of Theorem~\ref{thm:binarylowerbound}}\label{appendix:binarylowerbound}
\begin{proof}
Consider the causal bandit instances $\mathcal{T}_i$ with parallel graph $(E = \{X_i\to Y, 1\le i\le n\}.)$ and $\bA = \{do(), do(X=x), do(\bX = \bx)\}$ for all node $X$, $x \in \{0,1\}$, $\bx \in \{0,1\}^n$ be all observation, atomic intervention and actions that intervene all nodes.

For $\mathcal{T}_1$, we assume $X_i$ are independent with each other and $P(X_i=1)=P(X_i=0)=0.5$. Define 
\begin{align*}
    P(Y=1)=\left\{\begin{aligned}
        &0.5+\Delta\ \ \ \ \ \ \ \ \ \ \ \ &\mbox{if}\  X_1=X_2=\cdots=X_n=0\\
        &0.5 \ \ \ &\mbox{otherwise}
    \end{aligned}\right.
\end{align*}

Then for $\mathcal{T}_i, 2\le i\le 2^n$, consider the binary representation of $i-1$ as $\overline{b_1b_2\ldots b_n}$. Then assume $X_i$ are independent with each other and $P(X_i=1)=0.5$, and define 
\begin{align*}
    P(Y=1)=\left\{\begin{aligned}
        &0.5+\Delta\ \ \ \ \ \ \ \ \ \ \ \ &\mbox{if}\  X_1=X_2=\cdots=X_n=0\\
        &0.5+2\Delta \ \ \ &\mbox{if}\ X_j=b_j\ \mbox{for all}\ 1\le j\le n\\
        &0.5  & \mbox{otherwise}
    \end{aligned}\right.
\end{align*}

Now in $\mathcal{T}_i$, $do\left(\bX = \overline{b_1b_2\ldots b_n}\right)$ is the best action, and other actions will lead to at least $\Delta$ regret.

Denote $T_a(t)$ for action $a \in \bA$ as the number of times taking $a$ until time $t$. To simplify the notation, we denote $a_i$ as $do(\bX = \bx)$, where $\bx$ is the binary representation of $i-1$, $\{b_1,b_2,\ldots,b_n\}$. Then for instances $\mathcal{T}_1$ and $\mathcal{T}_i$, we have 
\begin{align*}
    \mathbb{E}_{\mathcal{T}_1}[R(t)]\ge \mathbb{P}_{\mathcal{T}_1}(T_{a_1}(t)\le t/2)\frac{t\Delta}{2}, \ \
     \ \mathbb{E}_{\mathcal{T}_i}[R(t)]
    \ge \mathbb{P}_{\mathcal{T}_i}(T_{a_1}(t)>t/2)\frac{t\Delta}{2}.
\end{align*}
Thus 
\begin{align*}
    \mathbb{E}_{\mathcal{T}_1}[R(t)]+\mathbb{E}_{\mathcal{T}_i}[R(t)]&>\frac{t\Delta}{2}(\mathbb{P}_{\mathcal{T}_1}(T_{a_1}(t)\le t/2)+\mathbb{P}_{\mathcal{T}_i}(T_{a_1}(t)>t/2))\\&\ge \frac{t\Delta}{4}\exp\left(-\mbox{KL}(\mathbb{P}_{\mathcal{T}_1}, \mathbb{P}_{\mathcal{T}_i})\right).
\end{align*}
Now we need to bound $\mbox{KL}(\mathbb{P}_{\mathcal{T}_i}, \mathbb{P}_{\mathcal{T}_1})$.

\begin{small}
\begin{align}
    \mbox{KL}(\mathbb{P}_{\mathcal{T}_1}, \mathbb{P}_{\mathcal{T}_i}) &\le \sum_{a \in \bA}\mathbb{E}_{\mathcal{T}_1}[T_{a}(t)]\mbox{KL}(\mathbb{P}_{\mathcal{T}_1}(\bX, Y\mid a) \Vert \mathbb{P}_{\mathcal{T}_i}(\bX, Y\mid a))\\
    &= \sum_{a \in \bA}\mathbb{E}_{\mathcal{T}_1}[T_{a}(t)]\mbox{KL}(\mathbb{P}_{\mathcal{T}_1}(Y\mid a) \Vert \mathbb{P}_{\mathcal{T}_i}(Y\mid a))\\
    &\le \mathbb{E}_{\mathcal{T}_1}[T_{a_i}(t)]\cdot \mbox{KL}(0.5\Vert 0.5+2\Delta) + \sum_{a = do(X_i=x), do()}\mathbb{E}_{\mathcal{T}_1}[T_{a}(t)]\cdot\mbox{KL}(0.5\Vert 0.5+\frac{\Delta}{2^{n-2}})\label{ineq:keyKL}\\
    &\le \mathbb{E}_{\mathcal{T}_1}[T_{a_i}(n)]\cdot 2\Delta^2 + t\cdot \frac{\Delta^2}{2^{2n-3}}\label{ineq:KLbound},
\end{align}
\end{small}
where \eqref{ineq:keyKL} is because for $a = do(X_i=x)$ or $a = do()$, $P(Y\mid do(a))\ge 0.5$ in $\mathcal{T}_1$, and $P(Y\mid do(a))\le 0.5+\frac{2\Delta}{2^{n-1}} = 0.5+\frac{\Delta}{2^{n-2}}$ in $\mathcal{T}_i$. Now we choose 
\begin{align}
    i = \argmin_{j>1}\mathbb{E}_{\mathcal{T}_1}[T_{a_j}(t)],
\end{align}
then we have 
\begin{align}
    \mathbb{E}_{\mathcal{T}_1}[T_{a_i}(t)]\le \frac{T}{2^n-1}. \label{ineq:minbound}
\end{align}
Then by \eqref{ineq:KLbound}, choosing $\Delta = \sqrt{\frac{2^n-1}{3t}}$, we have 
\begin{align}
    \mbox{KL}(\mathbb{P}_{\mathcal{T}_1}, \mathbb{P}_{\mathcal{T}_i})&\le \frac{2t\Delta^2}{2^n-1}+\frac{t\Delta^2}{2^{2n-3}}\le t\Delta^2\cdot \frac{3}{2^n-1} = 1
\end{align}
Thus 
\begin{align*}
    \mathbb{E}_{\mathcal{T}_1}[R(t)]+\mathbb{E}_{\mathcal{T}_i}[R(t)]&\ge \frac{t\Delta}{4}\exp\left(-\mbox{KL}(\mathbb{P}_{\mathcal{T}_1}, \mathbb{P}_{\mathcal{T}_i})\right)\\
    &\ge \frac{t\Delta}{4e}\\
    & \ge \frac{\sqrt{(2^n-1)t}}{4\sqrt{3}e}\\
    &\ge \frac{\sqrt{2^nt}}{8e}.
\end{align*}
Then $\max\{\mathbb{E}_{\mathcal{T}_1}[R(t)], \mathbb{E}_{\mathcal{T}_i}[R(t)]\}\ge \frac{\sqrt{2^nt}}{16e}$. We complete the proof when $t\ge \frac{16(2^n-1)}{3}$.

Now suppose $t\le \frac{16(2^n-1)}{3}$, choose $\Delta = \frac{1}{4}$, then based on \eqref{ineq:KLbound} and \eqref{ineq:minbound}, we have 
\begin{align*}
    \mbox{KL}(\mathbb{P}_{\mathcal{T}_1}, \mathbb{P}_{\mathcal{T}_i})&\le \frac{t}{8(2^n-1)}+\frac{t}{2^{2n+1}}\\
    &\le \frac{2}{3}+\frac{16}{3}\cdot \frac{2^n-1}{2^{2n+1}}\\
    &\le 1.
\end{align*}
Then we have 
\begin{align*}
    \mathbb{E}_{\mathcal{T}_1}[R(t)]+\mathbb{E}_{\mathcal{T}_i}[R(t)]&\ge \frac{t\Delta}{4}\exp\left(-\mbox{KL}(\mathbb{P}_{\mathcal{T}_1}, \mathbb{P}_{\mathcal{T}_i})\right)\\
    &\ge \frac{t\Delta}{4e}\\
    & \ge \frac{t}{16e},
\end{align*}
and $\max\{\mathbb{E}_{\mathcal{T}_1}[R(t)], \mathbb{E}_{\mathcal{T}_i}[R(t)]\}\ge \frac{t}{32e}$.
\end{proof}

\section{An Explanation of Weight Gap Assumption}
\label{app.explanationweightgap}

The weight gap assumption states that the parameter $\theta_{\min}$ is larger than a term relative to $T$. In Lemma 2, the parameter $\theta_{\min}$ represents the minimum difference between $\mathbb{E}[X_j\mid \doi(X_i=1)]$ and $\mathbb{E}[X_j\mid \doi(X_i=0)]$, where $X_i$ and $X_j$ form a causal edge. Intuitively, this assumption suggests that the causal relationship represented by each edge is sufficiently significant, making it a stronger version of the causal faithfulness assumption. If the causal relationship is too weak to be observed, it may indicate the presence of intermediate factors not accounted for in practice. In such cases, one could address the issue by collecting and observing additional intermediate factors.

Furthermore, it is important to note that the weight gap assumption on $\theta^*_{\min}$ depends on $T$. Therefore, if the weight gap assumption is not satisfied and the intermediate factors are unobservable, the user has two options. The first is to increase the number of rounds until $\theta^*_{\min}\geq 2c_1\kappa^{-1}T^{-1/5}$. Alternatively, BGLM-OFU-Unknown can guarantee an $O(\sqrt{T})$ regret bound for $T\geq 32\left(\frac{c_1}{\kappa\theta^*_{\min}}\right)^5$. The second option is to use BLM-LR-Unknown if $T$ cannot be increased. In this case, a theoretical regret bound of $O(T^{\frac{2}{3}})$ can be achieved.

Therefore, our results account for both scenarios, whether the weight gap assumption is satisfied or not.

\section{Experiments}
\label{sec.experiments}

\subsection{Experiment Results}

We conduct our experiments on a parallel BLM consisting of $7$ nodes, $X_1,\ldots,X_6$, and $Y$, with $X_1$ being the unique always-$1$ node. To simplify the analysis, we apply Algorithms~\ref{alg:glm-ofu} and~\ref{alg:blmlr-nogap} solely to identify the edges between $X_2,\ldots,X_6$ and $Y$. As per the definition of our algorithms, if a node $X_i, 2\leq i\leq 6$ is not a parent of $Y$, it will never be selected for interventions. We set $\mathcal{A}$ to be all interventions budgeted by $2$ nodes. The parameters are set as follows:\begin{align*}
&\theta^*_{X_1,X_2}=\theta^*_{X_1,X_3}=0.3, \theta^*_{X_1,X_4}=\theta^*_{X_1,X_5}=\theta^*_{X_1,X_6}=0.2,\\
&\theta^*_{X_2,Y}=\theta^*_{X_3,Y}=0.3, \theta^*_{X_4,Y}=\theta^*_{X_5,Y}=\theta^*_{X_6,Y}=0.13.
\end{align*}

We run BGLM-OFU-Unknown and BLM-LR-Unknown on this BLM and compare them to the standard Upper Confidence Bound (UCB) algorithm and the $\epsilon$-greedy algorithm ($\epsilon=0.02$) as baseline methods. Additional implementation details can be found in the Appendix~\ref{app.experiment}. Due to computational resource constraints, we run these $4$ algorithms on this BLM for $T=10000, 20000, 40000, 80000$, each executed $50$ times, and compute the average regrets as follows.

\begin{figure}[H]
\centering

\subfigure[$T=10000$]{
\begin{minipage}[t]{0.35\linewidth}
\centering
\includegraphics[width=\linewidth]{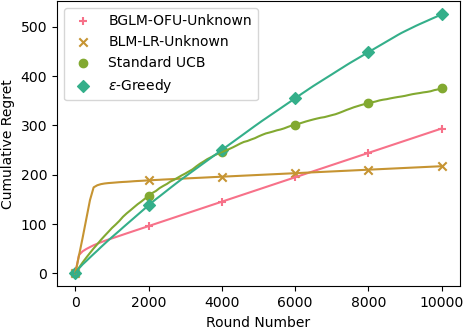}
\end{minipage}%
}%
\subfigure[$T=20000$]{
\begin{minipage}[t]{0.35\linewidth}
\centering
\includegraphics[width=\linewidth]{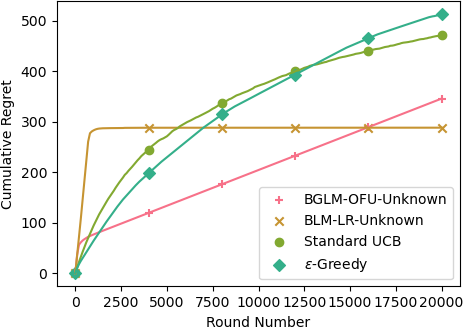}
\end{minipage}%
}%
                 
\subfigure[$T=40000$]{
\begin{minipage}[t]{0.35\linewidth}
\centering
\includegraphics[width=\linewidth]{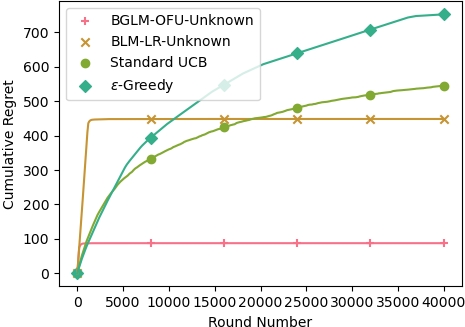}
\end{minipage}
}%
\subfigure[$T=80000$]{
\begin{minipage}[t]{0.35\linewidth}
\centering
\includegraphics[width=\linewidth]{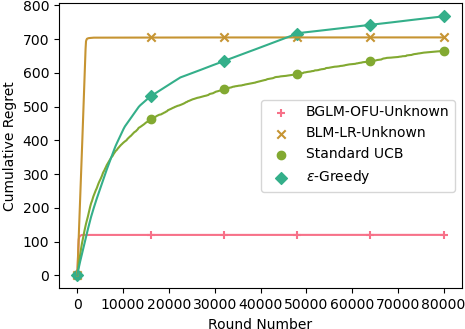}
\end{minipage}
}%
\centering
\label{fig.experiment}
\end{figure}

We can observe from the results that when $T$ is small, BGLM-OFU-Unknown struggles to accurately learn the graph structure, leading to a significant regret. In contrast, BLM-LR-Unknown performs well under these conditions. However, when $T$ is sufficiently large, BGLM-OFU-Unknown is able to consistently identify the correct graph structure, resulting in superior performance compared to all other algorithms.

\subsection{Experiment Settings}
\label{app.experiment}

Due to the limited number of rounds, we adjust $\rho_t$ and $\rho$ to be $\frac{1}{10}$ of our original parameter settings for BGLM-OFU-Unknown and BLM-LR-Unknown. Both algorithms have constants $c_0$ and $c_1$ set to $0.1$. We employ the pair-oracle implementation as described in Appendix H.1 of \cite{feng2022combinatorialarxiv}. When BGLM degenerates to BLM, we remove the second initialization phase (line~\ref{alg:secondinit} of Algorithm~\ref{alg:glm-ofu}) of BGLM-OFU-Unknown by setting $T_1=T_0$. This is because the second-order derivative of a linear function is $0$, making $L_{f_X}^{(2)}$ and $R$ in BGLM-OFU-Unknown arbitrarily small; thus, the minimum eigenvalues of $M_{t,X}$'s should satisfy Lemma~\ref{thm.learning_glm}'s condition after $T_0$ rounds. Additionally, for completeness, we provide the specific BLM used to test our algorithms in Fig.~\ref{fig.experimentblm}.

\begin{figure}[htbp]
\centering
\includegraphics[scale=0.24]{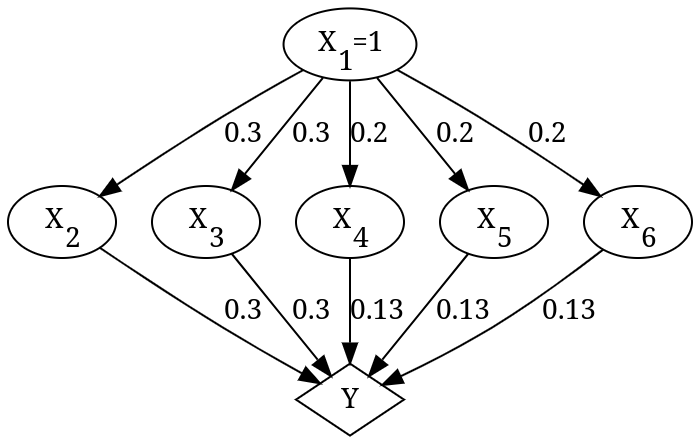}
\caption{The BLM Employed for Evaluating Algorithms~\ref{alg:glm-ofu} and~\ref{alg:blmlr-nogap}}
\label{fig.experimentblm}
\end{figure}

For the standard UCB algorithm, we use the commonly adopted upper confidence bound $\sqrt{\frac{\ln t}{n_{i,t}}}$, where $t$ is the current round number and $n_{i,t}$ is the number of times arm $i$ has been played up to the $t^{th}$ round \citep{slivkins2019introduction}. For the $\epsilon$-greedy algorithm, we set $\epsilon=0.02$, a typical implementation. We tested various settings for these two baselines, and our choices are near-optimal for BLMs. For both baselines, we treat each possible $2$-node intervention set as an arm, resulting in a total of $\binom{7-2}{2}=10$ arms. All experiments were executed using Python in a multithreaded environment on Arch Linux, utilizing $4$ performance cores of an Intel Core\texttrademark~i7-12700H Processor at 4.30GHz with 32GB DDR5 SDRAM. The total execution time amounts to $1687$ seconds. Our Python implementation can be found in the supplementary material.

\section{Pure Exploration of Causal Bandits without Graph Structure}\label{sec:pureexploration}

Another performance measure for bandit algorithms is called sample complexity. In this setting, 
the agent aims to
	find an action with the maximum expected reward using as small number of rounds as possible. This setting is also called pure exploration. 
 To be more specific, 
 the agent is willing to find $\varepsilon$-optimal arm with probability at least $1-\delta$ by sampling as few rounds as possible for fixed parameter $\varepsilon$ and $\delta$. 
 For pure exploration, we consider the general binary causal model with only null and atomic interventions, and study the gap-dependent bounds, 
 	meaning that the sample complexity depends on the reward gap between the optimal and suboptimal actions.
Moreover, let $a^*$ be one of the optimal actions. For each action $a = do(X_i=x)$, define $\mu_{a} = \mathbb{E}[Y\mid a]$ and the gap for action $a$ to be
\begin{align}
    \Delta_a = \left\{
\begin{array}{lcc}
\mu_{a^*}-\max_{a \in \bA \setminus \{a^*\}}\{\mu_{a} \},      &      & {a = a^*}; \\
\mu_{a^*}-\mu_{a},     &      & {a\neq a^*}.\\
\end{array} \right.
\end{align}
Here, $\Delta_a$ can be $0$.

According to the causal discovery literature~\citep{pearl2009causality}, by passive observations alone one can obtain an essential graph of the causal graph, with some edge
	directions unidentified.
We assume that the essential graph is known but the exact graph structure is unknown,
which is also considered by \cite{lu2021causal}, with additional assumptions on the graph.

One naive solution for this problem is to first identify the graph structure and then to performed the pure exploration algorithm of causal bandits with known graph \citep{xiong2022pure}. Define 
 $c_e=|P(X\mid do(X'=1))-P(X'\mid do(X=0))|$ for each edge $e=(X,X')$ and $c_X = \min_{e:X\to X'}\frac{1}{c_e^2}$.  Then this naive solution admits a sample complexity about
 \begin{small}
 \begin{align}
     \Tilde{O}\left(\sum_{a \in S}\frac{1}{\max\{\Delta_a,\varepsilon/2\}^2} + \sum_{x \in X}\frac{1}{c_X^2}\right),
 \end{align}
 \end{small}
 where $S$ is a particular set defined following the previous work \citep{xiong2022pure} and the definition is provided in Appendix~\ref{appendix:pureexploration}. The first term is the sample complexity in \cite{xiong2022pure}, while the second term is the cost for identifying the directions of all edges in the essential graph.

 This naive solution separates the causal discovery phase and learning phase, so it cannot discover the directions adaptively. In Appendix~\ref{appendix:pureexploration}, we propose an adaptive algorithm to discover the edges' directions and 
 	learn the reward distribution in parallel, which can provide a lower sample complexity for some cases.

 However, when the $\Delta_a$ and $c_X$ is small, both the naive algorithm and our algorithms provided in Appendix~\ref{appendix:pureexploration} suffers $\Omega(\frac{n}{\varepsilon^2}\log(1/\delta))$ sample complexity. 
 We claim that pure exploration for the general binary causal model is intrinsically hard due to unknown graph structure. To show this, we state a negative result for pure exploration of causal bandits on unknown graph structure with atomic intervention. It states that even if we have all observation distribution $P(\bX, Y)$ as prior knowledge, we still cannot achieve better sample complexity result than the result in the classical pure exploration problem for the 
 	multi-armed bandit $O(\frac{n}{\varepsilon^2}\log(1/\delta))$.

\begin{theorem}[Lower bound]\label{thm:lowerboundatomic}
    Consider causal bandits with only essential graph and atomic intervention, for any algorithm which can output $\varepsilon$-optimal action with probability at least $1-\delta$, there is a bandit instance with expected sample complexity $\Omega(\frac{n}{\varepsilon^2}\log (1/\delta))$ even if we have all observational distribution $P(\bX, Y)$.
\end{theorem}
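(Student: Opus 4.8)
I would prove the bound by a change-of-measure (Mannor--Tsitsiklis / Garivier--Kaufmann) argument, arranged so that the prior knowledge of the observational law $P(\bX,Y)$ is genuinely worthless: every instance in the hard family must share \emph{one} observational distribution \emph{and} one essential graph, so the only way two members can disagree is through the effect of some atomic intervention on an edge whose direction the essential graph leaves open. Concretely I would exhibit a base instance $\nu_0$ together with $n-1$ alternatives $\nu_2,\dots,\nu_n$ such that $do(X_j=1)$ is the unique $\varepsilon$-optimal action in $\nu_j$, while $\nu_0$ and $\nu_j$ induce the \emph{same} one-round law of $(\bX,Y)$ under every action except a constant number of atomic interventions ``touching'' the $j$-th gadget, on which the two laws differ by only $O(\varepsilon^2)$ in KL divergence.

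\textbf{The construction.} I would use $n-1$ parallel gadgets; gadget $j$ contains the intervenable node $X_j$ and an auxiliary intervenable node $W_j$ joined by a single edge whose direction the essential graph does not orient (no v-structure forces it, and the ambiguity survives the learner's a priori knowledge that $Y$ is a sink). The gadget's structural equations are chosen so that (a) its observational joint $P(X_j,W_j)$ is a fixed, weakly coupled (coupling parameter $\asymp\varepsilon$) law that is reproduced identically by either orientation $X_j\to W_j$ or $W_j\to X_j$ --- so the two orientations are Markov-equivalent with the same observational distribution --- while (b) under $do(X_j=1)$ the two orientations yield conditional gadget-laws that are $\Theta(\varepsilon)$ apart in total variation. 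The gadget variables are fed into $Y$ through a deterministic mediator so that flipping gadget $j$ shifts $\mathbb{E}[Y\mid do(X_j=1)]$ by $+\Theta(\varepsilon)$ and changes nothing else. I take $\nu_0$ to have every gadget in the ``inert'' orientation under which $do(X_j=1)$ has no downstream effect, and $\nu_j$ to flip only gadget $j$; then $do(X_j=1)$ becomes the unique action in $\nu_j$ whose mean exceeds the rest by $\Theta(\varepsilon)$, hence the unique $\varepsilon$-optimal action, and all members stay Markov-equivalent with the common observational law because only flip-invariant gadget-edge orientations changed.

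\textbf{The counting step.} Run the algorithm on $\nu_0$ and let $N_a$ be the number of times action $a$ is played before stopping. Since the events $\{\text{recommend }do(X_j=1)\}$ are disjoint over $j$, at most one index $j$ has $\Pr_{\nu_0}(\text{recommend }do(X_j=1))>\tfrac12$. Fix any of the $\ge n-2$ remaining indices $j$. By $(\varepsilon,\delta)$-correctness on $\nu_j$, $\Pr_{\nu_j}(\text{recommend }do(X_j=1))\ge 1-\delta$. The data-processing inequality applied to the indicator of that event, the monotonicity of the binary relative entropy $\mathrm{kl}$, and the Wald-type change-of-measure identity $\mathrm{KL}(\mathbb{P}_{\nu_0}\Vert\mathbb{P}_{\nu_j})=\sum_{a}\mathbb{E}_{\nu_0}[N_a]\,\mathrm{KL}(P^a_{\nu_0}\Vert P^a_{\nu_j})$ (with $P^a_\nu$ the one-round law of $(\bX,Y)$ under $a$), together with the fact that $\nu_0,\nu_j$ differ only on the set $\mathcal{G}_j$ of the $O(1)$ atomic interventions touching gadget $j$ and each of those contributes KL $O(\varepsilon^2)$, give
\[
\mathrm{kl}\!\left(\tfrac12,\,1-\delta\right)\ \le\ \mathrm{KL}\big(\mathbb{P}_{\nu_0}\,\big\Vert\,\mathbb{P}_{\nu_j}\big)\ =\ \left(\sum_{a\in\mathcal{G}_j}\mathbb{E}_{\nu_0}[N_a]\right)\cdot O(\varepsilon^2).
\]
Since $\mathrm{kl}(\tfrac12,1-\delta)=\Omega(\log(1/\delta))$ this yields $\sum_{a\in\mathcal{G}_j}\mathbb{E}_{\nu_0}[N_a]=\Omega(\varepsilon^{-2}\log(1/\delta))$. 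Summing over the $\ge n-2$ admissible $j$, whose sets $\mathcal{G}_j$ are pairwise disjoint, bounds $\mathbb{E}_{\nu_0}\!\big[\sum_a N_a\big]=\Omega(n\varepsilon^{-2}\log(1/\delta))$, the claimed sample complexity on $\nu_0$.

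\textbf{Main obstacle.} The work is in the construction, specifically reconciling three competing demands: (i) the essential graph must really leave each gadget edge unoriented --- this requires checking the v-structure pattern of the whole graph, including the constant node $X_1$ and the edges carrying the signal into $Y$, and confirming the ambiguity is not eliminated by the learner's knowledge that $Y$ has no children; (ii) the observational law $P(\bX,Y)$ must be \emph{exactly}, not approximately, invariant under each flip; and (iii) each flip must move $\mathbb{E}[Y\mid do(X_j=1)]$ by a full $\Theta(\varepsilon)$ while keeping every per-action KL between $\nu_0$ and $\nu_j$ at $O(\varepsilon^2)$ and confined to gadget $j$, \emph{and} keeping the conditional law of $Y$ a legitimate $[0,1]$-valued function. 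The tension is between (iii)'s need for the $j$-th gadget to influence $Y$ with $\Theta(1)$ strength (so that a $\Theta(\varepsilon)$ probability perturbation becomes a $\Theta(\varepsilon)$ mean shift with only $O(\varepsilon^2)$ KL, rather than being diluted) and the boundedness of $Y$'s conditional, which caps the total influence of all $n-1$ gadgets; the deterministic-mediator routing, designed so that only the flipped gadget injects a perturbation and a direct intervention $do(W_j=w)$ cannot exploit that strength, is the key device, and making it coexist with (i) and (ii) --- possibly only in a regime such as $\varepsilon=O(n^{-1/2})$ --- is the crux. Once the family is in place, the change-of-measure bookkeeping above is routine.
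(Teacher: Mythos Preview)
Your change-of-measure framework is exactly right, and the paper follows the same high-level plan: a base instance plus $n-1$ alternatives sharing one observational law, with each alternative making a different atomic intervention uniquely $\varepsilon$-optimal; then the Wald identity localises the required KL to a handful of actions per alternative, and summing gives $\Omega(n\varepsilon^{-2}\log(1/\delta))$.

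Where you diverge is the construction. You propose $n-1$ \emph{independent} gadgets $(X_j,W_j)$, each with its own ambiguous edge, all feeding into $Y$ through some mediator; and you correctly flag that the hard part is reconciling a $\Theta(1)$-strength path from every gadget into $Y$ with $Y$'s boundedness, possibly forcing $\varepsilon=O(n^{-1/2})$. The paper's construction is quite different and sidesteps this obstacle entirely: it uses a \emph{single shared hub} $X_1$ as the sole parent of $Y$, a fixed confounder $X_2\to X_1$ and $X_2\to X_i$ for all $i\ge 3$, and the ambiguous edges are $X_1\leftrightarrow X_i$ for $i\ge 3$. The base instance $\xi_2$ orients all of them as $X_1\to X_i$ (so every $do(X_i=x)$ is inert on $Y$ and $do(X_2=1)$ is best), while $\xi_i$ flips just $X_1\leftrightarrow X_i$ to $X_i\to X_1$, making $do(X_i=1)$ optimal with gap $\Theta(\varepsilon)$. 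Because $X_1$ is the only parent of $Y$, there is no aggregation problem and no need for a separate mediator or for $\varepsilon$ to scale with $n$; the joint $P(\bX,Y)$ is written down explicitly and checked to be identical across all $\xi_i$. The KL computation then reduces to showing that $P_{\xi_2}(X_1\mid do(X_i=x))$ and $P_{\xi_i}(X_1\mid do(X_i=x))$ differ by $O(\varepsilon)$, which is a short do-calculus calculation.

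So your plan is sound but leaves the crux unresolved; the paper's shared-hub design is a strictly simpler realisation of the same idea that makes the obstacle you identify disappear. If you pursue your parallel-gadget route, you would still owe an explicit mediator/law satisfying (i)--(iii) simultaneously, whereas the hub construction gives it for free.
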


Note that if we know distribution $P(\bX, Y)$ and the exact graph structure, we can compute each intervention $P(Y\mid do(X=x))$ by do-calculus because the absence of hidden variables. So Theorem~\ref{thm:lowerboundatomic} shows the intrinsic hardness provided by unknown graph structure. The detailed proof can be found in Appendix~\ref{appendix:pureexploration}.
 
\section{General Causal Bandits without Graph Structure}\label{appendix:pureexploration}
In this section, we only consider the atomic intervention, and provide an algorithm to solve causal bandits with the graph skeleton on binary model. We only consider the atomic intervention setting. An atomic intervention is $do(X = x)$, where $X$ is a node of graph $G$ and $x \in \{0,1\}$. 
\subsection{General Causal Bandit Algorithms}
We first provide the positive results, which provides an algorithm to improve the sample complexity comparing to applying the multi-armed bandit approach directly.
\begin{algorithm}[t]

	\caption{{\textrm Causal-PE-unknown}$(G,A,\varepsilon,\delta)$}
	\label{alg:general}
	\begin{algorithmic}[1]

	\STATE Initialize $t=1$, $T_a(0)=0, \hat{\mu}_a=0$ for all arms $a \in A$, $\cA_{known} = \emptyset$
	
	\FOR{$t=1,2,\ldots,$}
	\STATE $a^{t-1}_h = \argmax_{a \in A}\hat{\mu}_a^{t-1}$
 
        \STATE  \label{line:lucb}$ a_l^{t-1} = \argmax_{a \in A\setminus a_h^{t-1}}(U_a^{t-1})$ 
    
	\IF{$U_{a_l^{t-1}}\le L_{a_h^{t-1}}+\varepsilon$}
 
        \STATE \label{line:endlucb} Return $a_h^{t-1}$
	
        \ENDIF
        
	\STATE Perform $do()$ operation and observe $\bX_t$ and $Y_t$. For $a=do()$, $T_a(t)=T_a(t-1)+1$, $D_a(t)=D_a(t-1), r_{a,\emptyset}(t)=\frac{1}{T_a(t)}\sum_{j=1}^t Y_j,  \ p_{a,\emptyset}(t)=1$.\label{line:observe}
	
	\FOR{$a=do(X=x) \in \cA_{known}$}\label{line:possibleparent}
        \STATE$T_{a, \bz}(t)=T_{a, \bz}(t-1)+\mathbb{I}\{X_t=x, \bP=\bz\},$ $T_{a}(t)=\min_{\bz}\{T_{a,\bz}(t)\},$ where $\bP = \Pa(X)$.
	$D_a(t)=D_a(t-1).$
	
	\STATE Update $r_{a,\bz}(t)=\frac{1}{T_{a,\bz}(t)}\sum_{j=1}^t \mathbb{I}\{X_j=x, \bP_j=\bz\}Y_j$.
	
	\STATE Update $p_{a,\bz}(t)=\frac{1}{t}\sum_{j=1}^t \mathbb{I}\{\bP_j=\bz\}.$
	
	\STATE Estimate $\hat{\mu}_{O,a}(t)=\sum_{z}r_{a,\bz}(t)p_{a,\bz}(t)$ and calculate $[L_{O,a}^t, U_{O,a}^t]$by \eqref{eq:obs and int confidence bound} and \eqref{eq:confidence radius}.

        \ENDFOR\label{line:endpossibleparent}

        \STATE RECOVER-EDGE($a_h^{t-1}$).\label{line:recoveredge}
        \STATE  RECOVER-EDGE($a_l^{t-1}$).\label{line:endrecoveredge}

	\STATE Update empirical mean $\hat{\mu}_{I,a}(t)$ using interventional data%$\hat{\mu}_{I,a}=\frac{1}{D_a(t)}\sum_{j=1}^t(\mathbb{I}\{a_l^{j-1}=a\}Y_l^{j-1}+\mathbb{I}\{a_h^{j-1}=a\}Y_h^{j-1})$ 
        and interventional confidence bound $[L_{I,a}^t, U_{I,a}^t]$
		
	\STATE Update confidence bound $[L_a^t,U_a^t]$ by \eqref{eq:finalconfbound}, $\hat{\mu}_a= (L_a^t+U_a^t)/2$, for each arm $a$.

    \ENDFOR
 \end{algorithmic}
\end{algorithm}

\begin{algorithm}[t]
\caption{RECOVER-EDGE($a$)}
\begin{algorithmic}[1]
\label{alg:recover-edge-unknown}
	\IF{$a=do()$} \STATE Return.
	\ELSE \STATE Assume $a = do(X=x)$. Sample action $do(X=1), do(X=0)$.
        \STATE $D_{a'}(t) = D_{a'}(t)+1$ for $a' = do(X=1)$ and $a' = do(X=0)$.
	\STATE Estimate $P(X'=1\mid do(X=1))$ and $P(X'=1\mid do(X=0))$ using interventional data for neighbor $X'$, where the direction of $(X',X)$ is unknown. 
        \STATE Update the confidence bound $[L_{X'\mid do(X=1)}, U_{X'\mid do(X=1)}]$ and $[L_{X'\mid do(X=0)}, U_{X'\mid do(X=0)}]$ by \eqref{eq:identify confidence bound}.
	
	\IF {$[L_{X'\mid do(X=1)}, U_{X'\mid do(X=1)}]\cap[L_{X'\mid do(X=0)}, U_{X'\mid       
        do(X=0)}] = \emptyset$} \STATE recover $X\to X_i$. \ENDIF
        \IF{$\exists X'$ such that $(X',X)$ is unknown}
        \STATE Choose one such $X'$ and perform $do(X'=1)$ and  $do(X'=0)$.
        \STATE Estimate $P(X=1\mid do(X'=0))$ and $P(X=1\mid do(X'=1))$ using interventional    data.
        \STATE Update the confidence bound $[L_{X\mid do(X'=1)}, U_{X\mid do(X'=1)}]$ and $[L_{X\mid do(X'=0)}, U_{X\mid do(X'=0)}]$ by \eqref{eq:identify confidence bound}.
	
	\IF {$[L_{X\mid do(X'=1)}, U_{X\mid do(X'=1)}]\cap[L_{X\mid do(X'=0)}, U_{X\mid    
        do(X'=0)}] = \emptyset$} \STATE recover $X\to X_i$.\ENDIF
        \STATE $D_{a'}(t) = D_{a'}(t)+1$ for $a' = do(X'=1)$ and $a'=do(X'=0).$

	\ENDIF
	
        \ENDIF
\end{algorithmic}
\end{algorithm}

At each iteration we try to recover the edges' direction in parallel using sub-procedure "RECOVER-EDGE$(a)$" for $a \in A$. For action $a=do(X=x)$, this sub-procedure first performs two interventions $do(X=1)$ and $do(X=0)$, then chooses an undirected edge $(X,X')$ corresponding to $X$ (if exists), and then perform $do(X'=1)$, $do(X'=0)$. The goal of these operations is to estimate the difference between $P(X=1\mid do(X'=0))$ and $P(X=1\mid do(X'=1))$, 
 and also the difference between $P(X'=1\mid do(X=0))$, $P(X'=1\mid do(X=0))$. which decides whether $X'\to X$ or $X\to X'$. By this sub-procedure in parallel, the algorithm estimate the model and recover the edges' direction simultaneously and adaptively. To measure the difficulty for identified the direction of edges, for $e:X\to X'$ we define \begin{gather}c_e = P(X'=1\mid do(X=1))-P(X'=1\mid do(X=0))\label{def:ce}\\ c_a = c_X = \min_{e:X\to X'}c_e.\label{def:ca}\end{gather} $c_e$ measure the difficulty for distinguishing the direction for an edge, and $c_a=c_X$ represents the hardness for discovering all directions corresponding to $X$ and its childs.

The main Algorithm~\ref{alg:general} is followed from \cite{xiong2022pure}. During the algorithm, we add "RECOVER-EDGE" sub-procedure to identify the directions of the unknown edges. This sub-procedure first perform intervention $do(X=0)$ and $do(X=1)$ on the node $X$. Then if there is an edge $(X',X)$ which direction has not been identified, it chooses one such edge and perform $do(X'=1)$ and $do(X'=0)$. Then it constructs the confidence bound for all $P(X'=1\mid do(X=1))$, $P(X'=1\mid do(X=0))$, $P(X=1\mid do(X'=1))$ and $P(X=1\mid do(X'=0))$ based on Hoeffding's concentration bound. In fact, assume there are $D_a(t)$ samples for $a = do(X'=x), x \in \{0,1\}$ until round $t$, then the confidence bound for $X$ conditioning on $do(X'=x)$ is defined by 
\begin{equation}
\begin{aligned}\label{eq:identify confidence bound}
    [L_{X\mid do(X'=x)}, U_{X\mid do(X'=x)}] &= \left[\hat{P}(X=1\mid do(X'=x))- \sqrt{\frac{2}{D_a(t)}\log \frac{4n^2t^2}{\delta}},\right. \\&\quad\quad\left.\hat{P}(X=1\mid do(X'=x)) + \sqrt{\frac{2}{D_a(t)}\log \frac{4n^2t^2}{\delta}} \right],
\end{aligned}
\end{equation}
where $n$ is the number of nodes, and $\hat{P}(X=1\mid do(X'=x))$ are the empirical mean of $P(X=1\mid do(X'=x))$ using all these $D_a(t)$ samples for $do(X'=x)$. Other confidence bounds define in this way similarly.

Moreover, at iteration $t$, Line~\ref{line:lucb}-Line~\ref{line:endlucb} first choose two actions $a_h^{t-1}$ and $a_l^{t-1}$ through LUCB1 algorithm. Then, we use $\cA_{known}$ to represent all nodes actions $do(X=x)$ where all the edges corresponding to $X$ are identified. In fact, if all the edges corresponding to $X$ are identified, we can find the true parent set $\Pa(X)$.  Then we can use do-calculus to estimate the causal effect:
\begin{align}\label{eq:docalculus}
    \EE[Y\mid do(X=x)] = \sum_{z}P(Y\mid X=x, Z=z)P(Z=z).
\end{align}

Line~\ref{line:possibleparent}-\ref{line:endpossibleparent} enmurates all these actions, and calculate corresponding confidence bound. The confidence bound is calculated by  
\begin{align}\label{eq:finalconfbound}
    [L_a^t,U_a^t]=[L_{O,a}^t, U_{O,a}^t]\cap [L_{I,a}^t, U_{I,a}^t],
\end{align}
where the first term $[L_{O,a}^t, U_{O,a}^t] = (-\infty, \infty)$ for $a = do(X=x)$ if the parents of $X$ are not sure at time $t$.  In fact, if we do not discover all the edges corresponding to $X$, we cannot estimate the causal effect $\EE[Y\mid do(X=x)]$ using do-calculus. For nodes which parent set is identified, we calculate \begin{equation}\label{eq:obs and int confidence bound}
\begin{aligned}
    \relax[L_{O,a}^t, U_{O,a}^t] &= [\hat{\mu}_{O,a}(t)-\beta_{O,a}(t), \hat{\mu}_{O,a}(t) + \beta_{O,a}(t)],\\ \relax[L_{I,a}^t, U_{I,a}^t] &= [\hat{\mu}_{I,a}(t)-\beta_{I,a}(t), \hat{\mu}_{I,a}(t) + \beta_{I,a}(t)]
\end{aligned}
\end{equation}
The term $\hat{\mu}_{O,a}$ is calculated by estimating all terms at the right side of \eqref{eq:docalculus} empirically, and confidence radius is given by 
\begin{align}\label{eq:confidence radius}
    \beta_{O,a}(t) = \sqrt{\frac{12}{T_a(t)}\log\frac{16n^2Z_at^3}{\delta}}, \beta_{I,a}(t) = 2\sqrt{\frac{1}{D_a(t)}\log \frac{2n\log (2t)}{\delta}}
\end{align}
Similar to \cite{xiong2022pure}, we can prove it is a valid confidence radius, which means that the true effect $\mu_{O,a}$ will fall into the confidence bound $[L_{O,a}^t, U_{O,a}^t]$ with a high probability. 

Line~\ref{line:recoveredge}-\ref{line:endrecoveredge} try to recover the edge for action chosen by LUCB1 algorithm. At the end of this iteration, the algorithm updates all parameters and confidence bounds. 

To represent the complexity result, we first provide the definition of gap-dependent threshold in \cite{xiong2022pure}:
For $a = do(X=x)$ and one possible configuration of the parent $\bz \in \{0,1\}^{|\Pa(X)|}$, define $q_{a,\bz} = P(X=x, \Pa(X) = \bz)$ and $q_a = \min_{\bz} \{q_{a,\bz}\}$.
Then sort the arm set as $ q_{a_1}\cdot\max\{\Delta_{a_1},\varepsilon/2\}^2\le q_{a_2}\cdot \max\{\Delta_{a_2},\varepsilon/2\}^2\le \ldots \le q_{a_{|\bA|}}\cdot \max\{\Delta_{a_{|\bA|}},\varepsilon/2\}^2$. Recall that $\Delta_a = \mu^* - \mu_a$ is the reward gap between the optimal reward and the reward of action $a$. Then $H_r$ is defined by 
\begin{align}
    H_r = \sum_{i=1}^r \frac{1}{\max\{\Delta_{a_i},\varepsilon/2\}^2}. \label{def:H}
\end{align}
\begin{definition}[Gap-dependent observation threshold \citep{xiong2022pure}]
\label{def:gapm}
For a given causal graph $G$ and its associated $q_a$'s and $\Delta_a$'s, 
	the {\em gap-dependent observation threshold}
	$m_{\varepsilon, \Delta}$ is defined as:
   \begin{align} \label{eq:gapmdef}
   	\ \ m_{\varepsilon, \Delta} \nonumber = \min\left\{\tau: \left|\left\{a\in \bA \Bigg| q_a \max\left\{\Delta_a,\varepsilon/2\right\}^2<\frac{1}{H_{\tau}}\right\}\right|\le \tau\right\}.\end{align}
\end{definition}

Denote action set $S = \{a \in \bA: q_a \max\{\Delta_a, \varepsilon/2\}^2<\frac{1}{H_{m_{\varepsilon,\Delta}}}\}$ are all actions which $q_a$ is relatively small, then $|S|\le m_{\varepsilon,\Delta}$. Intuitively, action $a$ with smaller $q_a$ are harder to be estimated by observation: If we assume $q_a = q_{a,\bz}$ for a fixed vector $\bz$, then $P(X=x, \Pa(X) = \bz)$ is hard to observe and estimate by empirical estimation. Thus $S$ contains all actions that are relatively hard to observe, so it is more efficient to estimate $\mu_a$ by intervention for $a \in S$.
Based on this definition, we can provide the final sample complexity result:
\begin{theorem}
    \label{thm:easy-unknowngraph}
Denote $H=\sum_{a \in S}\frac{1}{\max\{\Delta_a,\varepsilon/2\}^2}+\sum_{a\notin S}\min\{\frac{1}{\max\{\Delta_a,\varepsilon/2\}^2}, \frac{1}{c_a^2}+\sum_{e:X'\to X}\frac{1}{c_e^2}\}$. With probability $1-4\delta$, Algorithm \ref{alg:general} will return a $\varepsilon$-optimal arm with sample complexity bound at most 
\begin{equation*}
    T=O\left(H\log \left(\frac{nZH}{\delta}
    \right)\right),
\end{equation*}
where $c_e,c_a$ is defined in \eqref{def:ce} and \eqref{def:ca}.
\end{theorem}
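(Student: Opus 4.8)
The plan is to follow the LUCB1-style analysis of the known-graph pure exploration algorithm in \cite{xiong2022pure} and graft onto it a new argument for the adaptive edge-orientation subroutine RECOVER-EDGE. I would organise everything around a single clean event $\mathcal{G}$ on which \emph{all} the confidence intervals maintained by Algorithm~\ref{alg:general} are valid at every round simultaneously: the observational intervals $[L^t_{O,a},U^t_{O,a}]$ and interventional intervals $[L^t_{I,a},U^t_{I,a}]$ of \eqref{eq:obs and int confidence bound} (with radii \eqref{eq:confidence radius}), and the edge-direction intervals of \eqref{eq:identify confidence bound}. A Hoeffding bound for each estimator together with a union bound over arms, parent configurations $\bz$, node pairs, and — through the $t^2$, $t^3$ and $\log t$ factors built into the radii — over all rounds gives $\Pr(\mathcal{G})\ge 1-4\delta$, the four summands coming from the four families of confidence sets; this is exactly where the constant $4$ enters, and the $Z_a$ inside the radii (with $Z=\max_a Z_a$ the number of parent configurations) is what produces the $Z$ inside the final logarithm.

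Next, on $\mathcal{G}$ I would establish \emph{correctness}. First, no edge is ever mis-oriented: if the true edge is $X'\to X$ then $X$ is not an ancestor of $X'$ in the DAG, so $P(X'=1\mid do(X=1))=P(X'=1\mid do(X=0))$, hence on $\mathcal{G}$ the two intervals for $X'$ always intersect and RECOVER-EDGE never adds the wrong direction; symmetrically, the correct direction $X'\to X$ \emph{is} added once the relevant intervals have shrunk below the separation $c_e$. Consequently, whenever a node enters $\cA_{known}$ its recorded parent set is the true $\Pa(X)$, the do-calculus identity \eqref{eq:docalculus} is exact, and $[L^t_{O,a},U^t_{O,a}]$ is a valid interval for $\mu_a$. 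Second, the usual LUCB argument: when the test $U_{a_l^{t-1}}\le L_{a_h^{t-1}}+\varepsilon$ of Line~\ref{line:endlucb} fires, validity of all intervals forces $\mu_{a_h^{t-1}}\ge \mu_{a^*}-\varepsilon$, so the returned arm is $\varepsilon$-optimal.

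The heart of the proof is the \emph{round count}. I would bound, for each arm $a=do(X=x)$, the number of rounds in which it can be selected as $a_h^{t-1}$ or $a_l^{t-1}$ before its width $U^t_a-L^t_a$ drops below $\max\{\Delta_a,\varepsilon/2\}$, after which it can no longer obstruct the stopping test. There are two shrinkage mechanisms. (i) Direct intervention: each selection triggers a RECOVER-EDGE$(a)$ call, which performs $do(X=1)$ and $do(X=0)$, so $D_a(t)$ grows linearly in the number of selections and the interventional radius falls below the gap after $O(\max\{\Delta_a,\varepsilon/2\}^{-2}\log(\cdot))$ selections. (ii) Observation, available only once $do(X=x)\in\cA_{known}$: the $do()$ operation of Line~\ref{line:observe} runs every round, and a binomial concentration gives $T_a(t)=\min_{\bz}T_{a,\bz}(t)=\Omega(q_a t)$, so the observational radius beats the gap once $t=\Omega(q_a^{-1}\max\{\Delta_a,\varepsilon/2\}^{-2}\log(\cdot))$; crucially this is a \emph{global} cost shared by all arms. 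Entering $\cA_{known}$ requires every edge incident to $X$ oriented, which by the edge lemma costs $O\bigl((c_a^{-2}+\sum_{e:X'\to X}c_e^{-2})\log(\cdot)\bigr)$ interventions charged to $X$ — the single term $c_a^{-2}=c_X^{-2}=\max_{e:X\to X'}c_e^{-2}$ because one $do(X=1),do(X=0)$ pair tests all out-edges of $X$ at once, and the sum because in-edges are tested one neighbour at a time inside RECOVER-EDGE. Splitting the arm set as in Definition~\ref{def:gapm}: for $a\in S$ observation is never competitive (small $q_a$ forces $q_a^{-1}\max\{\Delta_a,\varepsilon/2\}^{-2}\ge H_{m_{\varepsilon,\Delta}}\ge \sum_{a'\in S}\max\{\Delta_{a'},\varepsilon/2\}^{-2}\ge \max\{\Delta_a,\varepsilon/2\}^{-2}$), so arm $a$ contributes $O(\max\{\Delta_a,\varepsilon/2\}^{-2})$; for $a\notin S$ the \emph{per-arm} cost before an estimate of $\mu_a$ is available is $O\bigl(\min\{\max\{\Delta_a,\varepsilon/2\}^{-2},\ c_a^{-2}+\sum_{e:X'\to X}c_e^{-2}\}\bigr)$ — either estimate $\mu_a$ directly by intervention, or pay to orient the edges and then let observation finish within the globally-absorbed $O(H_{m_{\varepsilon,\Delta}}\log(\cdot))$ rounds. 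Summing selection counts over all arms, using that each round consumes two selections, and absorbing the per-round interventions contributed by RECOVER-EDGE on both $a_h^{t-1}$ and $a_l^{t-1}$, yields a total of $O(H\log(\cdot))$ rounds; the $H$ inside the logarithm is handled by the standard bootstrap (first prove $T=O(H\cdot\mathrm{polylog})$, then feed this back into the $\log t$ factors of the radii), giving $T=O(H\log(nZH/\delta))$.

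The step I expect to be the main obstacle is making the adaptive interleaving rigorous. RECOVER-EDGE is invoked \emph{only} on the two LUCB-selected arms, so I must argue that an arm which keeps being selected necessarily accumulates the interventions required to orient its edges (monotone progress, with no regression under $\mathcal{G}$), while an arm that is rarely selected cannot block termination in the first place; and I must bound the global interventional budget without double-counting the samples that a single RECOVER-EDGE call contributes simultaneously to several of $X$'s out-edges, and the interventions on a shared neighbour $X'$ that help orient edges incident to both $X$ and $X'$. The remaining ingredients — Hoeffding concentration, the binomial lower bound on $T_{a,\bz}(t)$, the validity of $[L^t_{O,a},U^t_{O,a}]$ via exact do-calculus, and the $S$/$\bA\setminus S$ accounting — are essentially as in \cite{xiong2022pure}.
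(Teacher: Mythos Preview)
Your proposal is correct and follows essentially the same approach as the paper. The paper organises the argument around four separate good events $\cE_1,\cE_2,\cE_3,\cE_4$ (concentration of $\hat q_a$, interventional intervals, observational intervals, and the edge-orientation concentration underlying what it calls Lemma~\ref{lemma:identify}) rather than a single $\mathcal G$, and it phrases the round count as a two-phase bound $T_1+T_2$ (first $T_1=O(H\log(\cdot))$ rounds to make all $a\notin S$ with $a\in\cA_{known}$ have small $\beta_a$, then $T_2$ further rounds charged exactly as you describe) rather than a per-arm selection count; but your per-arm accounting, your $C_a=c_a^{-2}+\sum_{e:X'\to X}c_e^{-2}$ decomposition with the out-edge/in-edge distinction, your $S$ versus $\bA\setminus S$ split via Definition~\ref{def:gapm}, and your bootstrap for the logarithm all match the paper's Lemma~\ref{lemma:identify}, Lemma~\ref{lemma: 6 in xiong}, and Lemma~\ref{lemma:technical transform} step for step.
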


The result can be explained in an intuitive way. The first term of $H$ is the summation of all actions in $S$. As we discussed above, it is more efficient to estimate the $\mu_a$ with intervention for $a \in S$. Thus, this summation can be regarded as the sample complexity applying multi-armed bandit algorithm (e.g. LUCB1) directly. The second term is to estimate the actions by observation. For each action $a = do(X=x)$ with larger $q_a$, we can first identify the edge's direction corresponding to the node $X$, and then using do-calculus to estimate the reward. The term $\frac{1}{c_a^2}+\sum_{e:X'\to X}\frac{1}{c_e^2}$ represents the complextity to identify the directions, and the complexity for using do-calculus can be contained in the first term $\sum_{a \in S}\frac{1}{\max\{\Delta_a,\varepsilon/2\}^2}$ because of the definition of gap-dependent observation threshold. Also, the term $\min\{\frac{1}{\max\{\Delta_a,\varepsilon/2\}^2}, \frac{1}{c_a^2}+\sum_{e:X'\to X}\frac{1}{c_e^2}\}$ is because when we are discovering the edges' direction, if the reward can be estimated by intervention accurately, we turn to use interventional estimation and give up the causal discovery for this node.
The detailed proof can be found in the Section~\ref{subsec: proof of general}.

Even if these two mechanisms can reduce the sample complexity, at the worst case the complexity also degenerates to $O(n/\varepsilon^2)$, which is equal to the complexity for multi-armed bandit. We provide a lower bound to show that this problem cannot be avoided.

\begin{theorem}[Lower bound]\label{thm:generallowerbound}
    Consider causal bandits with only essential graph and atomic intervention, for any $(\varepsilon,\delta)-PAC$ algorithm, there is a bandit instance with expected sample complexity $\Omega(\frac{n}{\varepsilon^2}\log (1/\delta))$ even if we have all observational distribution $P(\bX, Y)$.
\end{theorem}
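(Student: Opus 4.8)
The plan is to reduce this to the classical $n$-armed best-arm-identification lower bound by exhibiting a family of general binary causal models that all (i) have the same essential graph and (ii) induce the same observational distribution $P(\bX,Y)$ -- so that being handed the essential graph and $P(\bX,Y)$ gives the learner no information whatsoever -- yet whose unique $\varepsilon$-optimal atomic interventions are all different (this also settles the verbatim-equivalent Theorem~\ref{thm:lowerboundatomic}). Concretely I would build $m=\Theta(n)$ ``gadgets'', a reference instance $\cT_0$, and for each $j$ an instance $\cT_j$ obtained from $\cT_0$ by re-orienting only the $j$-th gadget; $\cT_j$ will make one designated atomic intervention $a_j=do(X_j=1)$ the unique arm with reward $\mu_0+2\varepsilon$, while in $\cT_0$ every arm (including $a_j$) has reward $\le\mu_0-2\varepsilon$ except the null intervention $do()$, whose reward $P(Y=1)=\mu_0$ is fixed across the whole family. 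Then any $(\varepsilon,\delta)$-PAC learner must output $a_j$ with probability $\ge 1-\delta$ under $\cT_j$ and output something $\ne a_j$ with probability $\ge 1-\delta$ under $\cT_0$.

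For the gadget I would use an orientation-ambiguous edge together with a ``witness'': gadget $j$ consists of the arm node $X_j$ and a witness node $W_j$, with an undirected edge $X_j-W_j$ in the essential graph and oriented edges $X_j\to Y$ and $W_j\to Y$. The two orientations $X_j\to W_j$ and $W_j\to X_j$ are Markov equivalent, realize the same two-variable marginal $P(X_j,W_j)$, and (one checks via the Meek rules together with the known sink constraint on $Y$) leave the essential graph unchanged; the direct edge $X_j\to Y$ is present so that the effect of $do(X_j=1)$ is not a trivial convex combination of the identifiable arms $do(W_j=0),do(W_j=1)$. With $W_j\to X_j$ one gets $\E[Y\mid do(X_j=1)]=\sum_w P(W_j=w)P(Y=1\mid X_j=1,W_j=w)$, and with $X_j\to W_j$ one gets $\E[Y\mid do(X_j=1)]=\sum_w P(W_j=w\mid X_j=1)P(Y=1\mid X_j=1,W_j=w)$; both candidate values are computable from $P(\bX,Y)$ but differ by $\rho_j\sigma_j$, where $\rho_j=P(W_j=1\mid X_j=1)-P(W_j=1)$ and $\sigma_j$ is the observational response of $Y$ to $W_j$, and the learner cannot tell which is the truth without interventions. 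I would tune the conditional tables (and take $m=\Theta(n)$) so that $\rho_j\sigma_j=\Theta(\varepsilon)$, so that re-orienting gadget $j$ moves $a_j$ across the $\varepsilon$-boundary as above, while $\cT_0$ and $\cT_j$ differ only on the $O(1)$ arms ``touching'' gadget $j$ (namely $do(X_j=\cdot),do(W_j=\cdot)$), and there by a KL of order $\varepsilon^2$.

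With such a family in hand the argument is the standard change-of-measure computation already used in the proof of Theorem~\ref{thm:binarylowerbound}: letting $\tau$ be the (almost surely finite) stopping time, $T_a(\tau)$ the number of pulls of arm $a$, and $N_j=\sum_{a\text{ touching }j}T_a(\tau)$, the divergence decomposition gives $\mathrm{KL}(\mathbb{P}_{\cT_0}\Vert\mathbb{P}_{\cT_j})\le c\,\varepsilon^2\,\E_{\cT_0}[N_j]$ for a numerical constant $c$, and the Bretagnolle--Huber inequality together with PAC-correctness ($\mathbb{P}_{\cT_0}(\text{output}=a_j)\le\delta$ and $\mathbb{P}_{\cT_j}(\text{output}\ne a_j)\le\delta$) yields $\E_{\cT_0}[N_j]\ge\frac{1}{c\,\varepsilon^2}\ln\frac{1}{4\delta}$. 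Since each atomic intervention touches at most one gadget, $\sum_{j=1}^m N_j\le\tau$, so $\E_{\cT_0}[\tau]\ge\sum_{j=1}^m\E_{\cT_0}[N_j]\ge\frac{m}{c\varepsilon^2}\ln\frac{1}{4\delta}=\Omega(\tfrac{n}{\varepsilon^2}\log\tfrac{1}{\delta})$; the worst-case instance is $\cT_0$ (or $\argmax_i\E_{\cT_i}[\tau]$), and because every $\cT_i$ shares the same $P(\bX,Y)$, handing that distribution to the learner is useless.

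The main obstacle is the construction itself -- specifically, engineering the conditional tables so that all $\Theta(n)$ arms are simultaneously ``$\Omega(\varepsilon^{-2}\log\delta^{-1})$-hard''. To block the cheap route ``intervene on $X_j$, watch the witness $W_j$, compare to its known baseline'' one needs $\rho_j=O(\varepsilon)$, which forces $\sigma_j=\Omega(1)$, i.e.\ $Y$ must respond at a constant rate to every witness; with a naive additive reward $P(Y=1\mid\bW)=\frac{1}{2}+\sum_j s_j(W_j-\frac{1}{2})$ the responses then sum to $\Theta(n)$ and leave $[0,1]$, and with a single shared witness one intervention $do(W=1)$ could resolve many gadgets at once and collapse the bound. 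The fix must decouple the witnesses across arms while choosing a bounded, non-additive reward (in the spirit of the ``special configuration'' reward in the proof of Theorem~\ref{thm:binarylowerbound}) that keeps each gadget's ambiguity both genuinely $\Theta(\varepsilon)$-wide and expensive to resolve by any single intervention; verifying that such a choice exists, is faithful to the common essential graph, and places the arm rewards in the required windows is where the real work lies. The KL bookkeeping in the third paragraph is then routine once every conditional is bounded away from $0$ and $1$.
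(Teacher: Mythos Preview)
Your change-of-measure skeleton and the KL bookkeeping in your third paragraph match the paper's argument exactly; the divergence is entirely in the construction, and your proposal stops at precisely the hard step.

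The obstacle you flag is real, and ``non-additive reward in the spirit of Theorem~\ref{thm:binarylowerbound}'' is not by itself a fix. For $m=\Theta(n)$ \emph{independent} witnesses, the degree-one Fourier coefficients of any bounded $Y$ satisfy $\sum_j p_j(1-p_j)\bigl(\E[Y\mid W_j{=}1]-\E[Y\mid W_j{=}0]\bigr)^2\le\operatorname{Var}(Y)\le\tfrac14$, so \emph{unconditional} responses of order $\Omega(1)$ are impossible unless $p_j(1-p_j)=O(1/m)$, and then the per-pull KL of your ``watch $W_j$ after $do(X_j{=}1)$'' test is $\Theta(\rho_j^2/(p_j(1-p_j)))=\Theta(m\varepsilon^2)$, collapsing the bound to $\Omega(\varepsilon^{-2})$. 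Your $\sigma_j$ is actually a \emph{conditional} response (on $X_j{=}1$), so one can try to escape Parseval by making the events $\{X_j{=}1\}$ rare and nearly disjoint and letting $Y$ read off $W_j$ only on that event; but then one must still control the rewards of $do(W_j{=}\cdot)$ and $do(X_j{=}0)$, arrange a well-separated optimum in $\cT_0$, and check the CPDAG is unchanged. None of this is carried out, and it is genuinely the crux.

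The paper bypasses all of it by \emph{sharing a single hub} instead of using independent witnesses. Take $X_1\to Y$ with $Y$ depending only on $X_1$, a fixed root $X_2$ with $X_2\to X_1$ and $X_2\to X_i$ for every $i\ge3$, and let the ambiguous edges be $X_1{-}X_i$ for $i=3,\dots,n$. The reference instance $\xi_2$ orients them all as $X_1\to X_i$; instance $\xi_i$ flips only the $i$-th to $X_i\to X_1$. The shielding edges $X_2\to X_i$ kill every would-be v-structure at $X_1$, so all $\xi_i$ share the same CPDAG; the common joint has $P(X_1)=\tfrac12$ and each $X_i$ $(0.5\pm4\varepsilon)$-coupled to $X_1$, and one checks it factors along every DAG in the family. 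Because $Y$ depends on a single variable there is nothing to overflow: flipping edge $i$ moves $\E[Y\mid do(X_i{=}1)]$ from $0.5$ to $0.5+\Theta(\varepsilon)$, and $\xi_2,\xi_i$ agree under every action except $do(X_i{=}\cdot)$, where the per-pull KL is $O(\varepsilon^2)$ (the role of your per-arm witness is played by $X_1$, but it is the \emph{same} witness for all $i$). Summing over $i$ gives $\Omega(n\varepsilon^{-2}\log\delta^{-1})$. The one cost of the shared hub is that $do(X_1{=}\cdot)$ is excluded from the action set, since intervening on the hub and watching the $X_i$'s would leak every orientation at once.
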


Theorem~\ref{thm:generallowerbound} states that even if we receive all observational distribution, which shows the intrinsic hardness for unknown graph. Indeed, the proof of lower bound shows that the unknown direction will lead to different interventional effects even when the observational distribution are the same, leading to a unavoidable hardness. 

\subsection{Proof of Theorem~\ref{thm:easy-unknowngraph}}\label{subsec: proof of general}
First, fixed an action $a = do(X_i=x)$,  $\bz \in \{0,1\}^{|\Pa(X)|}$ , then $T_{a,\bz}(t) = \sum_{j=1}^t \II\{X_{j,i}=x, \Pa(X_i)_j = \bz\}$ and the empirical mean $\hat{q}_{a,\bz}(t) = T_{a,\bz}(t)/t.$ Then denote $2^{|\Pa(X)|} = Z_a$, if $q_{a,\bz}(t)\ge \frac{6}{t}\log(2nZ_a/\delta)$, with probability at least $1-\frac{\delta}{2nZ_a}$, we can have 
\begin{align*}
    |\hat{q}_{a,\bz}(t)-q_{a,\bz}(t)|< \sqrt{\frac{6q_{a,\bz}(t)}{t}\log\left(\frac{2nZ_a}{\delta}\right)}
\end{align*}
Hence 
	\begin{gather}\label{eq_28}
		\hat{q}_{a}(t)=\min_{\bz}\{\hat{q}_{a,\bz}(t)\}\le \min_{\bz}\{q_{a,\bz}+\sqrt{\frac{6q_{a,\bz}}{t}\log \frac{2nZ_a}{\delta}}\}= q_{a}+\sqrt{\frac{6q_a}{t}\log \frac{2nZ_a}{\delta}}.
	\end{gather}
	When $q_a\ge \frac{3}{t}\log \frac{2nZ_a}{\delta}$,
	$f(x)=x-\sqrt{\frac{6x}{t}\log \frac{2nZ_a}{\delta}}$ is a increasing function.
	\begin{gather}\label{eq_29}
		\hat{q}_{a}(t)\ge \min_{\bz}\{q_{a,\bz}-\sqrt{\frac{6q_{a,\bz}}{t}\log \frac{2nZ_a}{\delta}}\} =  q_{a}-\sqrt{\frac{6q_a}{t}\log \frac{2nZ_a}{\delta}}.
	\end{gather}
So define the event as 
\begin{align*}
    \cE_1(t) = \left\{\forall a \in \bA\ \mbox{with}\ t\ge \frac{6}{q_a}\log\left(\frac{2nZ_a}{\delta}\right), |\hat{q}_a(t)-q_a|\le \sqrt{\frac{6q_a}{t}\log \left(\frac{2nZ_a}{\delta}\right)}\right\}
\end{align*}
then $\Pr\{\cE_1^c(t)\}\le \delta$, where $\cE^c$ means the complement of the event $\cE$.

Now we consider the concentration bound. First, by classical anytime confidence bound, with probability at least $1-\frac{\delta}{2n}$, for any time $D_a(t) \ge 1$
\begin{align*}
    |\hat{\mu}_{I,a}(t)-\mu_{I,a}|< 2\sqrt{\frac{1}{D_a(t)}\log \left(\frac{2n\log(2D_a(t))}{\delta}\right)} \le 2\sqrt{\frac{1}{D_a(t)}\log \left(\frac{2n\log(2t)}{\delta}\right)}
\end{align*}
Thus define the event as 
\begin{align*}
    \cE_2 = \left\{\forall t, a, |\hat{\mu}_{I,a}(t)-\mu_{I,a}|<2\sqrt{\frac{1}{D_a(t)}\log \left(\frac{2n\log(2t)}{\delta}\right)}\right\},
\end{align*}
then $\Pr\{\cE^c_2\}\le \delta$.

Consider the observational confidence bound. First, if $a \notin \cA_{known}$, $[L_{O,a}^t, U_{O,a}^t] = (-\infty,\infty)$ and then the $\hat{\mu}_{O,a}(t) \in [L_{O,a}^t, U_{O,a}^t]$. Now we consider that if $a = do(X=x) \in \cA_{known}$ and the parent of $X$ is $\bP$.
By Hoeffding's inequality, with probability at least $1-\delta/16n^2Z_at^3$, for $a=do(X=x)$, 
		\begin{align}\label{eq:r bound}
			|r_{a,\bz}(t)-P(Y=1\mid X=x, \bP=\bz)|&>\sqrt{\frac{1}{2T_{a,\bz}(t)}\log\frac{16n^2Z_at^3}{\delta}}
   \end{align}
   Also, by Chernoff's inequality, since $q_a\le P(\bP=\bz)$ for all $\bz \in \{0,1\}^{|\bP|}$, when $t\ge \frac{6}{q_a}\log \left(\frac{16n^2Z_at^3}{\delta}\right)$ with probability at least $1-\delta/16n^2Z_at^3$ we will have 
   \begin{align}\label{eq: p bound}
       |p_{a,\bz}(t)-P(\bP=\bz)|&>\sqrt{\frac{6P(\bP=z)}{t}\log \frac{16n^2Z_at^3}{\delta}},
   \end{align}

then 
\begin{align*}
    \hat{\mu}_{O,a}&= \sum_{\bz}r_{a,\bz}(t)\cdot p_{a,\bz}(t)\nonumber\\
			&\le \sum_{\bz}P(Y=1\mid X=x, \bP=\bz)p_{a,\bz}(t)+\sum_{\bz}p_{a,\bz}(t)\sqrt{\frac{1}{2T_{a,\bz}(t)}\log\frac{16n^2Z_at^3}{\delta}}\nonumber\\
			&\le \sum_{\bz}P(Y=1\mid X=x, \bP=\bz)p_{a,\bz}(t)+ \sqrt{\frac{1}{2T_{a}(t)}\log\frac{16n^2Z_at^3}{\delta}}\nonumber\\
			&\le \sum_{\bz}P(Y=1\mid X=x, \bP=\bz)P(\bP=\bz) + \sum_{\bz}\sqrt{\frac{6P(\bP=\bz)}{t}\log \frac{16n^2Z_at^3}{\delta}} +\nonumber\\ &\notag \sqrt{\frac{1}{2T_{a}(t)}\log\frac{16n^2Z_at^3}{\delta}}\nonumber\\
			&\le \mu_a+\sqrt{\frac{6Z}{t}\log \frac{16n^2Z_at^3}{\delta}} + \sqrt{\frac{1}{2T_a(t)}\log\frac{16n^2Z_at^3}{\delta}}\nonumber\\
                &\le \mu_a+\sqrt{\frac{6}{T_a(t)}\log \frac{16n^2Z_at^3}{\delta}} + \sqrt{\frac{1}{2T_a(t)}\log\frac{16n^2Z_at^3}{\delta}},\nonumber\\
                & = \mu_a+\sqrt{\frac{8}{T_a(t)}\log \frac{16n^2Z_at^3}{\delta}}.
\end{align*}
Also, if $t\le \frac{6}{q_a}\log \frac{16n^2Z_at^3}{\delta}$, first by Chernoff inequality, set $Q = \frac{6}{q_a}\log\frac{16n^2Z_at^3}{\delta}$, then with probability at least $1-\delta/16n^2Z_at^3$, we have 
\begin{align}\label{eq:qa bound}
    \hat{q}_a(Q)\le 2q_a.
\end{align}
by $\cE_1(Q)$.
\begin{align*}
    T_a(t)  \le T_a(Q) \le \hat{q}_{a}(Q)\cdot Q \le 2q_a\cdot Q = \frac{12}{q_a}\log \frac{16n^2Z_at^3}{\delta}.
\end{align*}
Then $\sqrt{\frac{12}{T_a(t)}\log \frac{16n^2Z_at^3}{\delta}}\ge 1$ and the inequality
\begin{align*}
    |\hat{\mu}_{O,a}(t)-\mu_{O,a}|\le \sqrt{\frac{12}{T_a(t)}\log \frac{16n^2Z_at^3}{\delta}}
\end{align*}
also holds.
Thus we define the event 
\begin{align*}
    \cE_3 = \left\{\forall a, t,  |\hat{\mu}_{O,a}(t)-\mu_{O,a}|\le \sqrt{\frac{12}{T_a(t)}\log \frac{16n^2Z_at^3}{\delta}}\right\}
\end{align*}
then by taking the union bound of \eqref{eq:r bound}, \eqref{eq: p bound} and \eqref{eq:qa bound}, 
\begin{align*}
    \Pr\{\cE^c_3\}&\le \sum_{t=1}^\infty \sum_{a \in \bA} \sum_{\bz} 3\cdot \frac{\delta}{16n^2Z_at^3} \\
    &\le \sum_{t=1}^\infty \frac{\delta}{4t^3}\\
    &\le \delta.
\end{align*}

Now we consider how to bound our sample complexity based on events $\cE_1, \cE_2$ and $\cE_3$. First, we provide the following lemma in \cite{xiong2022pure}:
\begin{lemma}[Lemma 6 in \cite{xiong2022pure}]\label{lemma: 6 in xiong}
    Under the event $\cE_1, \cE_2$ and $\cE_3$, at round $t$, if we have 
		    $$\beta_{a_h^t}(t)\le \frac{\max\{\Delta_{a_h^t},\varepsilon/2\}}{4},\beta_{a_l^t}(t)\le \frac{\max\{\Delta_{a_l^t},\varepsilon/2\}}{4}, $$
        where $a_h^t, a_l^t$ are the actions performed by algorithm at round $t$. then the algorithm will stop at round $t+1$.
\end{lemma}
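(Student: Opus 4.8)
The statement is a standard LUCB-style stopping guarantee, so the plan is to prove it by a short case analysis on how $a_h^t$ and $a_l^t$ relate to an optimal arm $a^*$, carried out on the event on which all the confidence intervals are valid.

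I would first record the facts to be used, all valid on $\cE_1\cap\cE_2\cap\cE_3$ together with the event that every edge orientation already committed by RECOVER-EDGE is correct (so that for $a\in\cA_{known}$ the parent set used in the do-calculus estimate is the true one, hence $\mu_a\in[L_a^t,U_a^t]$; for $a\notin\cA_{known}$ this is trivial since $[L_{O,a}^t,U_{O,a}^t]=(-\infty,\infty)$ and the interventional interval is valid): (i) writing $\beta_a(t)$ for the half-width of $[L_a^t,U_a^t]$, we have $U_a^t=\hat\mu_a+\beta_a(t)$ and $L_a^t=\hat\mu_a-\beta_a(t)$; (ii) $|\hat\mu_a-\mu_a|\le\beta_a(t)$, hence $U_a^t\le\mu_a+2\beta_a(t)$ and $L_a^t\ge\mu_a-2\beta_a(t)$; (iii) $\hat\mu_{a_h^t}\ge\hat\mu_{a_l^t}$ because $a_h^t$ is the empirical maximizer; (iv) if $a^*\ne a_h^t$ then $a^*$ is eligible to be $a_l^t$, so $U_{a_l^t}^t\ge U_{a^*}^t\ge\mu_{a^*}$. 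I will also use $\Delta_{a^*}\le\Delta_a$ for all $a\ne a^*$ (immediate from the definition of $\Delta$) and abbreviate $\delta_a:=\mu_{a^*}-\mu_a$, which equals $\Delta_a$ for $a\ne a^*$ and $0$ for $a=a^*$.

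Combining (i)--(iii) gives the master estimate
\begin{align*}
U_{a_l^t}^t-L_{a_h^t}^t &\le (\mu_{a_l^t}-\mu_{a_h^t})+2\beta_{a_l^t}(t)+2\beta_{a_h^t}(t) \\
&= (\delta_{a_h^t}-\delta_{a_l^t})+2\beta_{a_l^t}(t)+2\beta_{a_h^t}(t),
\end{align*}
and it suffices to show the right-hand side is at most $\varepsilon$, for then the test $U_{a_l^{t}}\le L_{a_h^{t}}+\varepsilon$ succeeds in the next round. If $a_h^t=a^*$ (so $\delta_{a_h^t}=0$, $a_l^t\ne a^*$, $\delta_{a_l^t}=\Delta_{a_l^t}$), the hypothesis and $\Delta_{a^*}\le\Delta_{a_l^t}$ give $2\beta_{a_l^t}+2\beta_{a^*}\le\frac{1}{2}\max\{\Delta_{a_l^t},\varepsilon/2\}+\frac{1}{2}\max\{\Delta_{a^*},\varepsilon/2\}\le\max\{\Delta_{a_l^t},\varepsilon/2\}$, so the master bound becomes $-\Delta_{a_l^t}+\max\{\Delta_{a_l^t},\varepsilon/2\}\le\varepsilon/2$: the large part of the radius sum is absorbed by $-\Delta_{a_l^t}$. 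If $a_h^t\ne a^*$, the hypothesis does not directly control $\beta_{a^*}(t)$, so I would bootstrap: (iv) with (iii) and (ii) yields $\Delta_{a_h^t}=\delta_{a_h^t}\le\beta_{a_l^t}(t)+\beta_{a_h^t}(t)$, and, when $a_l^t\ne a^*$, also $\Delta_{a_l^t}\le 2\beta_{a_l^t}(t)$; substituting these back into $\beta_a(t)\le\frac{1}{4}\max\{\Delta_a,\varepsilon/2\}$ and noting that $\Delta>\varepsilon/2$ would make the recursion inconsistent forces $\Delta_{a_h^t},\Delta_{a_l^t}\le\varepsilon/2$, hence $\beta_{a_h^t}(t),\beta_{a_l^t}(t)\le\varepsilon/8$ and the master bound is at most $\Delta_{a_h^t}+\varepsilon/4+\varepsilon/4\le\varepsilon$. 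The remaining sub-case $a_l^t=a^*$ (so $a_h^t\ne a^*$) is the same, using $\delta_{a^*}=0$ and $\beta_{a^*}(t)\le\frac{1}{4}\max\{\Delta_{a^*},\varepsilon/2\}\le\frac{1}{4}\max\{\Delta_{a_h^t},\varepsilon/2\}$ to run the same bootstrap on $\Delta_{a_h^t}$.

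The main obstacle is exactly the case $a_h^t\ne a^*$: the hypothesis bounds only the radii of the two arms the algorithm currently examines, not that of $a^*$, so one cannot plug $a^*$ into it; the fix is the bootstrapping step, which exploits that an empirically-best-yet-suboptimal arm must have gap at most $\varepsilon/2$ once its radius is small, since otherwise the (valid) confidence intervals would already separate it from $a^*$. One should also dispose of the degenerate possibility that $[L_a^t,U_a^t]=(-\infty,\infty)$ for some arm whose parents are not yet identified: such an arm has $\beta_a(t)=\infty$, which violates the hypothesis, so in the situation of the lemma both active arms have finite, valid intervals and the argument above applies verbatim.
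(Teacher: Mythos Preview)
Your proof is correct and follows essentially the same approach as the paper: both argue by cases on whether $a_h^t$ and $a_l^t$ coincide with an optimal arm $a^*$, and in the case $a_h^t\ne a^*$ both use the key bootstrap that $\Delta_{a_l^t}\le 2\beta_{a_l^t}(t)$ (from $U_{a_l^t}^t\ge U_{a^*}^t\ge\mu_{a^*}$) and $\Delta_{a_h^t}\le\beta_{a_h^t}(t)+\beta_{a_l^t}(t)$ force both gaps to be at most $\varepsilon/2$, after which the stopping inequality follows from $\beta\le\varepsilon/8$. Your ``master estimate'' $U_{a_l^t}^t-L_{a_h^t}^t\le(\delta_{a_h^t}-\delta_{a_l^t})+2\beta_{a_l^t}(t)+2\beta_{a_h^t}(t)$ is a cleaner way to organize the same inequalities, and your remark that the validity of the observational interval for $a\in\cA_{known}$ tacitly requires the committed edge orientations to be correct is a useful clarification that the paper leaves implicit.
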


Now assume the algorithm does not terminate at $T_1 = 192H\log(nZT_1^3/\delta)$, where $Z = \max_a Z_a$. 
For $a \in S$, $D_a(t)$.  Note that $H\ge H_{m_{\varepsilon,\Delta}}$.
Thus at round $T_1$, for action $a$ with $q_a\ge\frac{1}{H_{m_{\varepsilon,\Delta}}\cdot \max\{\Delta_a,\varepsilon/2\}^2}\ge \frac{192}{T_1}\log \frac{16nZ_aT_1^3}{\delta}$, if $a \in \cA_{known}$, then under event $\cE_{1}(T_1)$,
 we have 
 \begin{align*}
     \hat{q}_a(T_1)\ge q_a - \sqrt{\frac{6q_a}{T_1}\log \frac{16nZ_aT_1^3}{\delta} }\ge \frac{q_a}{2}.
 \end{align*}
 Then
\begin{align*}
    \beta_{a}(T_1)\le \beta_{O,a}(T_1) = \sqrt{\frac{12}{T_a(T_1)}\log \frac{16n^2Z_at^3}{\delta}}\le \sqrt{\frac{12q_a}{2T_1}} \le \frac{\max\{\Delta_a, \varepsilon/2\}^2}{4}.
\end{align*}

Now we prove that if $D_a(t)$ is large for some $a$, then $a \in \cA_{known}$.
\begin{lemma}\label{lemma:identify}
    With probability at least $1-\delta$, denote $C_a = \frac{1}{c_a^2}+\sum_{e:X'\to X}\frac{1}{c_e^2}$. If $D_a(t)\ge 32 C_a\log (4n^2t^2/\delta)$, $a \in \cA_{known}$.
\end{lemma}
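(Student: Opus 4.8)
The plan is an ``explore soundly / explore completely'' argument resting on a single high-probability event. First I would fix the good event $\mathcal{G}$ on which, simultaneously for every round $s\ge 1$, every ordered pair $(W,W')$ of skeleton-adjacent nodes, and every $w\in\{0,1\}$, the empirical estimate of $P(W'=1\mid do(W=w))$ built from its current stock of interventional samples deviates from the true value by less than the corresponding confidence radius of \eqref{eq:identify confidence bound}. By Hoeffding's inequality together with a union bound over the at most $n^2$ ordered pairs, the two values of $w$, and all $s$ (the tail $\sum_s 1/s^2$ is finite and is absorbed into the constant $4$ inside the logarithm), $\Pr(\mathcal{G})\ge 1-\delta$; on $\mathcal{G}$ every confidence interval used inside \textsc{Recover-Edge} is valid. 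Everything below is argued on $\mathcal{G}$, which yields the claimed probability.

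\emph{Soundness.} On $\mathcal{G}$ both intervals compared inside \textsc{Recover-Edge} contain their true values, so whenever the procedure declares them disjoint the two interventional distributions genuinely differ. Since the model is Markovian, the third rule of do-calculus (exactly the reasoning behind the second part of Lemma~\ref{lemma.dodifference}) then forces the intervened node to be an ancestor of the observed one; as the two nodes are skeleton-adjacent this pins the true edge to the orientation the algorithm records, so no wrong orientation is ever added.

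\emph{Completeness with a sample bound.} For a true edge $e$ incident to $X$, exactly one of the two tests \textsc{Recover-Edge} can run on that edge has nonzero population gap, namely the test along the true direction, whose gap equals $c_e$ by \eqref{def:ce} (the reverse test has gap $0$ and never fires, consistent with soundness). On $\mathcal{G}$ the empirical gap of the active test is at least $c_e-2r$, where $r=\sqrt{\tfrac{2}{D}\log\tfrac{4n^2t^2}{\delta}}$ is the common confidence radius and $D$ is the number of samples of the intervention that test performs, while disjointness requires the empirical gap to exceed $2r$; hence once $c_e>4r$, i.e.\ $D>\tfrac{32}{c_e^2}\log\tfrac{4n^2t^2}{\delta}$, the edge $e$ is oriented. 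It remains to turn ``$D$ large enough for each incident edge'' into ``$D_a(t)$ large''. Each visit of \textsc{Recover-Edge} to $X$ increments $D_a(t)$ and runs the active test of every still-unoriented out-edge $X\to X'$ off the pooled samples of $do(X{=}0),do(X{=}1)$, so all out-edges of $X$ are oriented once $D_a(t)>\tfrac{32}{c_a^2}\log\tfrac{4n^2t^2}{\delta}$ with $c_a=\min_{e:X\to X'}c_e$. The in-edges $X'\to X$ are cleared through the second branch, which on each visit samples the intervention of a single still-unoriented neighbor; charging each visit to the in-edge it advances, the number of visits that suffices to orient all of them is at most $\sum_{e:X'\to X}\tfrac{32}{c_e^2}\log\tfrac{4n^2t^2}{\delta}$. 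Summing the two budgets, $D_a(t)\ge 32\big(\tfrac{1}{c_a^2}+\sum_{e:X'\to X}\tfrac{1}{c_e^2}\big)\log\tfrac{4n^2t^2}{\delta}=32\,C_a\log\tfrac{4n^2t^2}{\delta}$ forces every edge incident to $X$ to be oriented, i.e.\ $a\in\cA_{known}$.

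\emph{Main obstacle.} The delicate step is this last counting argument. $D_a(t)$ is a single scalar, whereas $X$ may be incident to many edges of widely different difficulty, and the in-edge tests draw their samples from neighbor-interventions that \textsc{Recover-Edge} schedules one neighbor per visit rather than all at once, so $D_a(t)$ does not literally count the samples of those tests. Making the ``$\min$ over out-edges, $\sum$ over in-edges'' shape of $C_a$ come out exactly --- arguing that every increment of $D_a(t)$ carries genuine progress on some not-yet-oriented incident edge, that an oriented edge leaves the unknown set and so never wastes a later visit, and handling the case split that pairs each edge with its unique active test and the counter that governs it --- is the part of the proof that has to be done with care; the concentration and do-calculus ingredients above are comparatively routine.
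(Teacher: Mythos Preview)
Your proposal follows the same route as the paper's proof: a single Hoeffding-plus-union-bound good event, out-edges of $X$ handled by the first branch of \textsc{Recover-Edge} once $D_a(t)$ exceeds $32c_a^{-2}\log(4n^2t^2/\delta)$, and in-edges cleared through the second branch via exactly the charging argument that sums $32c_e^{-2}\log(4n^2t^2/\delta)$ over incoming edges. Your explicit soundness paragraph and your identification of the in-edge counting as the delicate step are in fact more careful than the paper's own rather terse (and at the end slightly truncated) treatment of that same point.
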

\begin{proof}
    If $D_a(t)\ge 8C_a\log t$, we have called sub-procedure RECOVER-EDGE$(a)$ for $D_a(t)$ times. Then, for each edge $e:X\to X'$, we will perform intervention $do(X=1)$, $do(X=0)$ for at least $D_a(t)$ times and observe the empirical difference $|\hat{P}(X'\mid do(X=1))-\hat{P}(X'\mid do(X=0))|$. By Hoeffding's inequality and union bound on all time $t$ and the $\binom{n-1}{2}$ ordered-pair $(X',X)$, with probability at least $1-\delta$, for all $t \in [T]$ and all $X',X$ we have 
    \begin{align*}
        |\hat{P}(X'\mid do(X=1))-P(X'\mid do(X=1))|&\le \sqrt{\frac{2}{D_a(t)}\log \frac{4n^2t^2}{\delta}}\\
         |\hat{P}(X'\mid do(X=0)-P(X'\mid do(X=0))|&\le \sqrt{\frac{2}{D_a(t)}\log \frac{4n^2t^2}{\delta}}\\
    \end{align*}
    Then for the confidence bounds \begin{small}\begin{align*}
        &[L_{X'\mid do(X=1)}, U_{X'\mid do(X=1)}] \\&= \left[\hat{P}(X'\mid do(X=1))-\sqrt{\frac{2}{D_a(t)}\log \frac{4n^2t^2}{\delta}}, \hat{P}(X'\mid do(X=1))+\sqrt{\frac{2}{D_a(t)}\log \frac{4n^2t^2}{\delta}}\right],\\
         &[L_{X'\mid do(X=0)}, U_{X'\mid do(X=0)}] \\&= \left[\hat{P}(X'\mid do(X=0))-\sqrt{\frac{2}{D_a(t)}\log \frac{4n^2t^2}{\delta}}, \hat{P}(X'\mid do(X=0))+\sqrt{\frac{2}{D_a(t)}\log \frac{4n^2t^2}{\delta}}\right],
    \end{align*}\end{small}
    the intersection 
    \begin{align*}
         [L_{X'\mid do(X=1)}, U_{X'\mid do(X=1)}]\cap [L_{X'\mid do(X=0)}, U_{X'\mid do(X=0)}] = \emptyset,
    \end{align*}
    since 
    \begin{align*}
        &|\hat{P}(X'\mid do(X=1)-\hat{P}(X'\mid do(X=0))|\\\ge & |P(X'\mid do(X=1)-P(X'\mid do(X=0))|-|P(X'\mid do(X=1)-\hat{P}(X'\mid do(X=1))|\\&\ \ \ \ -|P(X'\mid do(X=0)-\hat{P}(X'\mid do(X=0))|\\
        \ge& c_a-2\sqrt{\frac{2}{D_a(t)}\log \frac{4n^2t^2}{\delta}} \\\ge& 2\sqrt{\frac{2}{D_a(t)}\log \frac{4n^2t^2}{\delta}}. 
    \end{align*}
    where we use $D_a(t)\ge \frac{1}{c_a^2}\log\frac{4n^2t^2}{\delta}$.
    Then the edge's direction will be identified correctly.

    Consider the edge $e: X'\to X$, then if we sample $do(X'=1)$ and $do(X'=0)$ for $\frac{1}{c_e^2}\log \frac{4n^2t^2}{\delta}$ times within sub-procedures RECOVER-EDGE$(a)$, similarly we will identify the edge $X'\to X$. Then because the RECOVER-EDGE$(a)$ will perform intervention $do(X'=0)$ and $do(X'=1)$ for the $X'$ that the direction of $(X',X)$ has not been discovered each time, after $\sum_{e:X'\to X}\frac{1}{c_e^2}\log \frac{4n^2t^2}{\delta}$.
\end{proof}
Then we define 
\begin{align*}
    \cE_4 = \{\mbox{Lemma~\ref{lemma:identify} holds}\}
\end{align*}
Then $\Pr\{\cE_4^c\}\le \delta$.
Also, under the event $\cE_2$, the following lemma shows that if $D_a(t)$ is really large, we can estimate the $\mu_a$ accurately.
\begin{lemma}
    Under event $\cE_2$, if $D_a(t)\ge \frac{64}{\max\{\Delta_a,\varepsilon/2\}^2}\log \frac{16n^2Z_at^3}{\delta}$, then 
    \begin{align*}
        \beta_a(T_1)\le \frac{\max\{\Delta_a,\varepsilon/2\}^2}{4}.
    \end{align*}
\end{lemma}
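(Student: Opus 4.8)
The plan is to bound the final confidence radius $\beta_a$ by the purely interventional radius $\beta_{I,a}$ and then substitute the hypothesis on $D_a(t)$. By the definition in \eqref{eq:finalconfbound}, the interval $[L_a^t,U_a^t]$ is the intersection $[L_{O,a}^t,U_{O,a}^t]\cap[L_{I,a}^t,U_{I,a}^t]$, where the first factor is all of $\R$ when $a\notin\cA_{known}$ and an ordinary interval otherwise; in either case the half-width satisfies $\beta_a(t)\le\beta_{I,a}(t)$, since intersecting with another interval never enlarges the radius. Hence it suffices to control $\beta_{I,a}(t)=2\sqrt{\frac{1}{D_a(t)}\log\frac{2n\log(2t)}{\delta}}$, which on the event $\cE_2$ is a valid confidence radius for $\mu_{I,a}=\mu_a$.

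Next I would substitute the assumed lower bound $D_a(t)\ge \frac{64}{\max\{\Delta_a,\varepsilon/2\}^2}\log\frac{16n^2Z_at^3}{\delta}$ directly into $\beta_{I,a}$, obtaining
\[
\beta_{I,a}(t)\;\le\;2\sqrt{\frac{\max\{\Delta_a,\varepsilon/2\}^2}{64\log\frac{16n^2Z_at^3}{\delta}}\cdot\log\frac{2n\log(2t)}{\delta}}.
\]
The elementary inequality $2n\log(2t)\le 16n^2Z_at^3$, valid for all integers $n,Z_a\ge 1$ and all $t\ge 1$, makes the ratio of the two logarithms at most $1$, so the right-hand side is at most $2\sqrt{\max\{\Delta_a,\varepsilon/2\}^2/64}=\frac{\max\{\Delta_a,\varepsilon/2\}}{4}$. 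Combined with $\beta_a(t)\le\beta_{I,a}(t)$, this proves the claim (the lemma being applied at $t=T_1$).

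There is essentially no hard step: the argument is one substitution plus an elementary logarithm comparison. The only points worth flagging are (i) one must explicitly handle both $a\in\cA_{known}$ and $a\notin\cA_{known}$ when asserting $\beta_a\le\beta_{I,a}$, as noted above; and (ii) the exponent in the stated conclusion should read $\max\{\Delta_a,\varepsilon/2\}$ rather than $\max\{\Delta_a,\varepsilon/2\}^2$, since $\beta_{I,a}$ decays like $D_a(t)^{-1/2}$ — with this correction the bound matches exactly the stopping threshold $\frac{\max\{\Delta_a,\varepsilon/2\}}{4}$ required by Lemma~\ref{lemma: 6 in xiong}. Downstream, this lemma is combined with the analogous $\cA_{known}$ estimate established just before it to show, via Lemma~\ref{lemma: 6 in xiong}, that the algorithm terminates by round $T_1+1$ when $T_1=192H\log(nZT_1^3/\delta)$, which is precisely where the sample-complexity bound of Theorem~\ref{thm:easy-unknowngraph} comes from.
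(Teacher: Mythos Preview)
Your proof is correct and follows essentially the same route as the paper: bound $\beta_a(t)\le\beta_{I,a}(t)$, replace $\log\frac{2n\log(2t)}{\delta}$ by the larger $\log\frac{16n^2Z_at^3}{\delta}$, and substitute the hypothesis on $D_a(t)$. You also correctly caught that the stated conclusion should be $\frac{\max\{\Delta_a,\varepsilon/2\}}{4}$ rather than its square; the paper carries the same typo in both the lemma statement and the displayed chain, but the intended bound (and the one actually needed for Lemma~\ref{lemma: 6 in xiong}) is the unsquared version.
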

\begin{proof}
    In fact, 
    \begin{align*}
        \beta_a(t)\le \beta_{I,a}(t) = 2\sqrt{\frac{1}{D_a(t)}\log\left(\frac{2n\log(2t)}{\delta}\right)}\le 2\sqrt{\frac{1}{D_a(t)}\log \frac{16n^2Z_at^3}{\delta}}\le \frac{\max\{\Delta_a,\varepsilon/2\}^2}{4}.
    \end{align*}
\end{proof}

Now we turn to our main result.
From the Lemma~\ref{lemma: 6 in xiong}, at least one arm $a$ with $\beta_{a}(t)\ge \frac{\max\{\Delta_{a},\varepsilon/2\}}{4}$ will be performed an intervention at each round $t\ge T_1$. Under the event $\cE_1, \cE_2, \cE_3$ and $\cE_4$, these interventions will only performed in two types of action $a$:
\begin{itemize}
    \item $q_a\le \frac{1}{H_{m_{\varepsilon,\Delta}}\cdot \max\{\Delta_a,\varepsilon/2\}^2}$ and $D_a(t)\le \frac{64}{\max\{\Delta_a,\varepsilon/2\}^2}\log \frac{16n^2Z_at^3}{\delta}$.
    \item $D_a(t)\le \min\{MC_a\log(t), \frac{64}{\max\{\Delta_a,\varepsilon/2\}^2}\log \frac{16n^2Z_at^3}{\delta}\}$.
\end{itemize}
Note that $q_a\le \frac{1}{H_{m_{\varepsilon,\Delta}}\cdot \max\{\Delta_a,\varepsilon/2\}^2}$ implies that $a \in S$, then after at most $T_2$ rounds, where
\begin{align*} 
    T_2 &= 64\left(\sum_{a \in S}\frac{1}{\max\{\Delta_a,\varepsilon/2\}^2}+\sum_{a\notin S}\min\left\{\frac{1}{\max\{\Delta_a,\varepsilon/2\}^2}, \frac{1}{c_a^2}+\sum_{e:X'\to X}\frac{1}{c_e^2}\right\}\right)\log \frac{16n^2ZT_2^3}{\delta} \\&= 64H\log\frac{16n^2ZT_2^3}{\delta}
\end{align*}
 the algorithm should terminates.  The fist term is the summation of all actions in $S$, and the second term is for the second type of actions, where $$D_a(t)\le\min\{MC_a\log(t), \frac{64}{\max\{\Delta_a,\varepsilon/2\}^2}\log \frac{16n^2Z_at^3}{\delta}\}. $$
 Denote $T = T_1+T_2$, then 
 \begin{align*}
     T = T_1 + T_2 \le 256H \log\frac{16n^2ZT^3}{\delta}\le 768H\log \frac{16nZT}{\delta}
 \end{align*}
 
 Then by the Lemma~\ref{lemma:technical transform}, with probability at least $1-4\delta$, the sample complexity has the upper bound
 \begin{align*}
     T = O\left(H\log\left(\frac{nZH}{\delta}\right)\right)
 \end{align*}
 Replace $\delta$ to $\delta/4$, we derive the sample complexity in the Theorem~\ref{thm:easy-unknowngraph}.
 The correctness of algorithm can be derived by LUCB1 algorithm. We provide a short argument here.
 Because the stopping rule is $\hat{\mu}_{a_l^{t}}^{t}+\beta_{a_l^t}(t)\le \hat{\mu}_{a_h^{t}}^{t}-\beta_{a_h^t}(t)+\varepsilon$,
	if $a^*\neq a_h^t$, we have 
	\begin{align*}
	    \mu_{a_h^t}+\varepsilon\ge \hat{\mu}_{a_h^{t}}-\beta_{a_h^t}(t)+\varepsilon\ge \hat{\mu}_{a_l^{t}}+\beta_{a_l^t}(t)
	    \ge \hat{\mu}_{a^*}+\beta_{a^*}(t)
	    \ge \mu_{a^*}.
	\end{align*}
	Hence either $a^*=a_h^t$ or $a_h^t$ is $\varepsilon$-optimal arm.

\subsection{Proof of Lemma~\ref{lemma: 6 in xiong}}
For completeness, we provide the proof in \cite{xiong2022pure}.
\begin{proof}
    If the optimal arm $a^*= a_h^t$,
		    \begin{align}
		        \hat{\mu}_{a_l^{t}}+\beta_{a_l^t}(t)&\le \mu_{a_l^t}+2\beta_{a_l^t}(t)\nonumber\\
		        &\le \mu_{a_l^t}+\frac{\max\{\Delta_{a_l^t},\varepsilon/2\}}{2}\nonumber\\
		        &\le \mu_{a_h^t}-\Delta_{a_l^t}+\frac{\max\{\Delta_{a_l^t},\varepsilon/2\}}{2}\nonumber\\
		        &\le \hat{\mu}_{a_h^t}+\beta_{a^*}(T_{a^*}(t))-\Delta_{a_l^t}+\frac{\max\{\Delta_{a_l^t},\varepsilon/2\}}{2}\nonumber\\
	           &\le \hat{\mu}_{a_h^t}-\beta_{a^*}(T_{a^*}(t))+\frac{\max\{\Delta_{a^*},\varepsilon/2\}+\max\{\Delta_{a_l^t},\varepsilon/2\}}{2}-\Delta_{a_l^t}\nonumber\\
	           &\le \hat{\mu}_{a_h^t}-\beta_{a^*}(T_{a^*}(t))+\frac{\Delta_{a^*}+\varepsilon/2+\Delta_{a_l^t}+\varepsilon/2}{2}-\Delta_{a_l^t}\nonumber\\
	           &\le \hat{\mu}_{a_h^t}-\beta_{a^*}(T_{a^*}(t))+\varepsilon.\nonumber
		    \end{align}
		    If optimal arm $a^*\neq a_h^t$, and the algorithm doesn't stop at round $t+1$, then we prove $a^*\neq a_l^t$. Otherwise, assume $a^*=a_l^t$
		    \begin{align}
		        \hat{\mu}_{a_h^t}^t &\le \mu_{a_h^t}^t+\frac{\max\{\Delta_{a_h^t},\varepsilon/2\}}{4}\\&= \mu_{a_l^t}^t-\Delta_{a_h^t}+\frac{\max\{\Delta_{a_h^t},\varepsilon/2\}}{4}\\&\le
		        \mu_{a_l^t}^t-\frac{3\Delta_{a_h^t}}{4}+\varepsilon/4\\
		        &\le \hat{\mu}_{a_l^t}^t+\frac{\max\{\Delta_{a^*},\varepsilon/2\}}{4}-\frac{3\Delta_{a_h^t}}{4}+\varepsilon/4\\&\le
		        \hat{\mu}_{a_l^t}^t+\varepsilon/2-\frac{\Delta_{a_h^t}}{2}.
		    \end{align}
		    From the definition of $a_{h}^t$, we know $\varepsilon>\Delta_{a_h^t}\ge\Delta_{a^*}, \beta_{a_h^t}(t)\le \varepsilon/4, \beta_{a_l^t}(t)\le \varepsilon/4$.
		    Then $\hat{\mu}_{a_l^t}^t+\beta_{a_l^t}(t)+\beta_{a_h^t}(t)\le \hat{\mu}_{a_l^t}+\varepsilon/2\le \hat{\mu}_{a_h^{t}}^t+\varepsilon,$ which means the algorithm stops at round $t+1$.
		    
		    Now we can assume $a^*\neq a_l^t, a^*\neq a_h^t$. Then 
		    \begin{align}
		          \mu_{a_l^t}+2\beta_{a_l^t}(t)\ge\hat{\mu}_{a_l^t}+\beta_{a_l^t}(t)\ge \hat{\mu}_{a^*}+\beta_{a^*}(T_{a^*}(t))\ge \mu_{a^*}=\mu_{a_l^t}+\Delta_{a_l^t}.
		    \end{align}
		    Thus 
		    \begin{align}
		        \Delta_{a_l^t}\le 2\beta_{a_l^t}(t)\le \frac{\max\{\Delta_{a_l^t},\varepsilon/2\}}{2},
		    \end{align}
		    which leads to $\Delta_{a_l^t}\le \varepsilon/2, \beta_{a_l^t}(t)\le \varepsilon/8$. Since 
		    
		    Also, 
		    \begin{align}
		        \mu_{a_h^t}+\beta_{a_h^t}(t)\ge\hat{\mu}_{a_h^t}\ge \hat{\mu}_{a_l^t}\ge \mu_{a^*}-\beta_{a_l^t}(t)= \mu_{a_h^t}+\Delta_{a_h^t}-\beta_{a_l^t}(t),
		    \end{align}
		    which leads to 
		    \begin{align}
		        \frac{\max\{\Delta_{a_h^t},\varepsilon/2\}}{4}\ge \Delta_{a_h^t}-\varepsilon/8,
		    \end{align}
		    and $\Delta_{a_h^t}\le \varepsilon/2, \beta_{a_h^t}(t)\le \varepsilon/8.$ Hence
		    $\hat{\mu}_{a_l^t}^t+\beta_{a_l^t}(t)+\beta_{a_h^t}(t)\le \hat{\mu}_{a_l^t}+\varepsilon/2\le \hat{\mu}_{a_h^{t}}^t+\varepsilon,$ which means the algorithm stops at round $t+1$.
\end{proof}
\subsection{Proof of Theorem~\ref{thm:generallowerbound}}
\begin{proof}
    We construct $n-1$ graphs with the same distribution $P(\bX, Y)$ but different causal graph. Indeed, We construct the bandit instances $\{\xi_i\}_{2\le i\le n}$ as follows.
    For instance $\xi_2$, the graph structure contains edge $X_1\to Y, X_2\to X_1, X_1\to X_i(3\le i\le n)$ and $X_2\to X_i(3\le i\le n)$.
    For instances $\xi_i(3\le i\le n)$, we change $X_1\to X_i$ to $X_i\to X_1$. The graph structure are shown in the Figure~\ref{fig:lowerbound2 tau2} and Figure~\ref{fig:lowerbound2 taui}.

    The observational distribution for all instance is:
    \begin{align}
        P(\bX, Y) = p_1p_2\ldots p_{n},
    \end{align}
    where 
    \begin{align}
        p_1 &= 0.5, \\ \ p_2 &= \left\{\begin{array}{cc}
            0.5+\varepsilon & x_2 = x_1 \\
            0.5-\varepsilon & x_2 \neq x_1
        \end{array}\right. \\ \ \ p_i &= \left\{\begin{array}{cc}
            0.5+4\varepsilon & x_i = x_1 \\
            0.5-4\varepsilon & x_i \neq x_1
        \end{array}\right..
    \end{align}
    \begin{figure}[H]
        \centering
        \includegraphics[scale = 0.7]{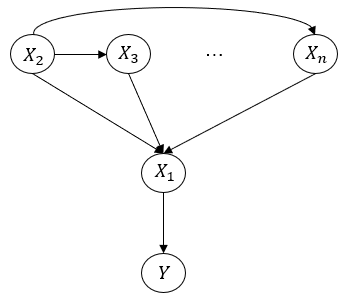}
        \caption{Causal Bandits Instance $\tau_2$}
        \label{fig:lowerbound2 tau2}
    \end{figure}
    \begin{figure}[H]
        \centering
        \includegraphics[scale = 0.7]{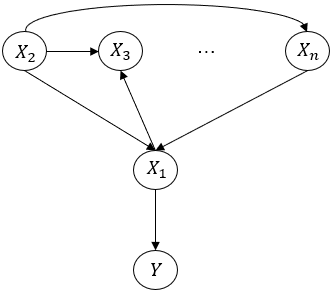}
        \caption{Causal Bandits Instance $\tau_i (i = 3)$}
        \label{fig:lowerbound2 taui}
    \end{figure}

    It is easy to check that $\sum_{\bx, y}P(\bX=\bx, Y=y) = 1$ and $P(X_i=1)=0.5$. The action set is $do(), do(X_i=1), do(X_i=0)$ where $2\le i\le n$, which means the action set does not contain $do(X_1=x)$ for $x = 0,1$.
    
    Now in $\xi_2$, we consider $P(Y=1\mid do(X_2=1))$. Actually, it is easy to show that $P(Y=1\mid do(X_2=1))=P(X_1=1\mid do(X_2=1))=0.5+\varepsilon$. Similarly, $P(Y=1\mid do(X_2=0))=0.5-\varepsilon$. For other actions, $P(Y=1\mid a) = P(X_1=1\mid a) = 0.5$ since other actions $a$ will not influence the value of $X_1$.

    Now consider instance $\xi_i$ for $3\le i\le n$. For action $do()$ and $do(X_j=x)$ with $j\neq 2,i$, it will not influence the value of $X_1$ and then $P(Y=1\mid a) = 0.5$.
    Now consider action $a = do(X_2=1)$, we have 
    \begin{align*}
        P(Y=1\mid do(X_2=1)) & = P(X_1=1\mid do(X_2=1))\\
        &=P(X_1=1\mid X_2=1)=0.5+\varepsilon.
    \end{align*}
    Similarly, $P(Y=1\mid do(X_2=0))=0.5-\varepsilon$.

    Now we calculate $P(Y=1\mid do(X_i=1))$ in instance $\xi_i$.
    In fact, denote $q = 0.5+4\varepsilon$ and by do-calculus,
    \begin{align*}
        &P(X_1=1\mid do(X_i=1))\\&=\sum_{x = 0,1}P(X_1=1\mid X_i=1, X_2=x)P(X_2=x)\\
        & = 0.5(P(X_1=1\mid X_i=1, X_2=0)+P(X_1=1\mid X_i=1, X_2 = 1)\\
        &= 0.5\left(\frac{P(X_1=1, X_i=1, X_2=0)}{P(X_i=1, X_2=0)}+\frac{P(X_1=1, X_i=1, X_2=1)}{P(X_i=1, X_2=1)}\right)\\
        &=0.5\left(\frac{(0.5+4\varepsilon)(0.5-\varepsilon)}{(0.5+4\varepsilon)(0.5-\varepsilon)+(0.5-4\varepsilon)(0.5+\varepsilon)}+\frac{(0.5+4\varepsilon)(0.5+\varepsilon)}{(0.5+4\varepsilon)(0.5+\varepsilon)+(0.5-4\varepsilon)(0.5-\varepsilon)}\right)\\
        & = 0.5\left(\frac{q(0.5-\varepsilon)}{q(0.5-\varepsilon)+(1-q)(0.5+\varepsilon)}+\frac{q(0.5+\varepsilon)}{q(0.5+\varepsilon)+(1-q)(0.5-\varepsilon)}\right)\\
        &=0.5\left(\frac{q(0.5-\varepsilon)}{0.5-(2q-1)\varepsilon}+\frac{q(0.5+\varepsilon)}{0.5+(2q-1)\varepsilon}\right)\\
        & = q\left(\frac{0.5^2-(2q-1)\varepsilon^2}{0.5^2-(2q-1)^2\varepsilon^2}\right)\le q = 0.5+4\varepsilon.
    \end{align*}
    Also, we prove that \begin{align*}
        q\left(\frac{0.5^2-(2q-1)\varepsilon^2}{0.5^2-(2q-1)^2\varepsilon^2}\right) \ge 0.5+2\varepsilon.
    \end{align*}
    Actually, this inequality is equal to 
    \begin{align*}
        (0.5+4\varepsilon)(0.5^2-8\varepsilon^3)&\ge (0.5+2\varepsilon)(0.5^2-8\varepsilon^4)\\
        \iff 1&\ge 56\varepsilon^3 + 8\varepsilon^2 - 32\varepsilon^4.
    \end{align*}
    When $\varepsilon$ is small enough, this inequality holds. 
    In summary, we have 
    \begin{align*}
        P(X_1=1\mid do(X_i=1)) \in [0.5+2\varepsilon, 0.5+4\varepsilon].
    \end{align*}
    Similarly, we can get 
    \begin{align*}
        P(X_1=1\mid do(X_i=0)) &= 0.5(P(X_1=1\mid X_i=0, X_2=1)+P(X_1=1\mid X_i=0, X_2=0)) \\&= (1-q)\left(\frac{0.5^2-(1-2q)\varepsilon^2}{0.5^2-(1-2q)^2\varepsilon^2}\right) \in [0.5-4\varepsilon,0.5].
    \end{align*}
    Now in instance $\xi_2$, the output action should be $do(X_2=1)$, while in instance $\xi_i$, the output action should be $do(X_i=1)$.

    Now by Pinkser's inequality, for an policy $\pi$, we have 
    \begin{align*}
        2\delta \ge P_{\xi_2}(a^o = do(X_i=1))+P_{\xi_i}(a^o \neq do(X_i=1))\ge \exp(-\mbox{KL}(\xi_2^\pi, \xi_i^\pi)).
    \end{align*}
    Also, assume the stopping time as $\tau$ for the environment $\cE$, the KL divergence can be rewritten as 
    \begin{align}
        \mbox{KL}(\xi_2^\pi, \xi_i^\pi) &= \EE_{A_t\sim\xi_2^\pi}\left[\sum_{t=1}^\tau \mbox{KL}(P_{\xi_2}(\bX_t, Y_t\mid A_t),P_{\xi_i}(\bX_t, Y_t\mid A_t) )\right]\\
        &=\EE_{\xi_2^\pi}\left[\sum_{t=1}^\tau P_{\xi_2}(\bX_t, Y_t\mid A_t)\left(\log\frac{ P_{\xi_2}(\bX_t, Y_t\mid A_t)}{ P_{\xi_i}(\bX_t, Y_t\mid A_t)}\right)\right]\\
        &= \EE_{\xi_2^\pi}\left[\sum_{t=1}^\tau P_{\xi_2}(X_{t,i},X_{t,1}\mid A_t)\left(\log\frac{P_{\xi_2}(X_{t,i}, X_{t,1}\mid A_t)}{P_{\xi_i}(X_{t,i}, X_{t,1}\mid A_t)}\right)\right]\label{eq:KLdiv}
    \end{align}
    where the last equation is derived as follows: \begin{align*}\frac{P_{\xi_2}(\bX_t, Y_t\mid A_t)}{P_{\xi_i}(\bX_t, Y_t\mid A_t)} = \frac{P_{\xi_2}(X_{t,i}, X_{t,1}\mid A_t)\cdot P_{\xi_2}(\Bar{\bX}_{t,i},Y_t\mid X_{t,i}, X_{t,1}, A_t)}{P_{\xi_i}(X_{t,i}, X_{t,1}\mid A_t)\cdot P_{\xi_i}(\Bar{\bX}_{t,i}, Y_t\mid X_{t,i}, X_{t,1}, A_t)}\end{align*}
    where $\Bar{\bX}_{t,i} = \bX_t\setminus \{X_{t,i}, X_{t,1}\}$.  Now since $\Bar{\bX}_{t,i}$ is only decided by $X_1, X_2$ and $X_2$ is only decided by $A_t$, then $$P_{\xi_2}(\Bar{\bX}_{t,i},Y_t\mid X_{t,i}, X_{t,1}, A_t) = P_{\xi_i}(\Bar{\bX}_{t,i},Y_t\mid X_{t,i}, X_{t,1}, A_t)$$ and then 
    \begin{align*}
        \frac{P_{\xi_2}(\bX_t, Y_t\mid A_t)}{P_{\xi_i}(\bX_t, Y_t\mid A_t)} =  \frac{P_{\xi_2}(X_{t,i}, X_{t,1}\mid A_t)}{P_{\xi_i}(X_{t,i}, X_{t,1}\mid A_t)}.
    \end{align*}

    Note that only when $A_t = do(X_i=1), do(X_i=0)$, $P_{\xi_2}(X_{t,i}, X_{t,1}\mid A_t) \neq P_{\xi_i}(X_{t,i}, X_{t,1}\mid A_t)$. Then the equation \eqref{eq:KLdiv} can be further calculated as 
    \begin{align*}
        \mbox{\eqref{eq:KLdiv}}  &= \sum_{x = 0,1}\EE_{\xi_2^\pi}\left[\sum_{t=1}^\tau \II\{A_t = do(X_i=x)\}\right]\cdot P_{\xi_2}(X_{t,i}, X_{t,1}\mid do(X_i=x))\\&\cdot\left(\log\frac{P_{\xi_2}(X_{t,i}, X_{t,1}\mid do(X_i=x))}{P_{\xi_i}(X_{t,i}, X_{t,1}\mid do(X_i=x))}\right)  \\
        &=\sum_{x = 0,1}\EE_{\xi_2^\pi}\left[\sum_{t=1}^\tau \II\{A_t = do(X_i=x)\}\right]\cdot P_{\xi_2}( X_{t,1}\mid do(X_i=x))\\&\cdot\left(\log\frac{P_{\xi_2}( X_{t,1}\mid do(X_i=x))}{P_{\xi_i}(X_{t,1}\mid do(X_i=x))}\right) \\
        & \le \sum_{x = 0,1}\EE_{\xi_2^\pi}\left[\sum_{t=1}^\tau \II\{A_t = do(X_i=x)\}\right]\left(0.5\cdot \left(\log\frac{0.5}{0.5+4\varepsilon} + \log\frac{0.5}{0.5-4\varepsilon}\right)\right)\\
        & \le \sum_{x = 0,1}\EE_{\xi_2^\pi}\left[\sum_{t=1}^\tau \II\{A_t = do(X_i=x)\}\right] 96\varepsilon^2\\
        & = 96\varepsilon^2\cdot \EE_{\xi_2^\pi}[N(do(X_i=1))+N(do(X_i=0))].
    \end{align*}
    where the $\EE_{\xi_2^\pi}N(a)$ represents that the number of times taking action $a$ for policy $\pi$ under the instance $\xi_2$. Now we have 
    \begin{align*}
         \EE_{\xi_2^\pi}[N(do(X_i=1))+N(do(X_i=0))]\ge \frac{\mbox{KL}(\xi_2^\pi, \xi_i^\pi)}{96\varepsilon^2}\ge \frac{1}{96\varepsilon^2}\log\frac{1}{2\delta}.
    \end{align*}
    Hence the stopping time $\tau$ under policy $\pi$ can be lower bounded by 
    \begin{align*}
        \EE_{\xi_2^\pi}[\tau]\ge \sum_{i=3}^n \EE_{\xi_2^\pi}[N(do(X_i=1))+N(do(X_i=0))] \ge \frac{n-2}{96\varepsilon^2}\log\frac{1}{2\delta} = O\left(\frac{n}{\varepsilon^2}\log \frac{1}{\delta}\right).
    \end{align*}
    
\end{proof}
\subsection{Technical Lemma}
\begin{lemma}\label{lemma:technical transform}
    If $T = CH\log \frac{dT}{\delta}$ for some constant $C$ and parameter $d$ such that $d\ge e\delta$, then $T = O(H\log \frac{Hd}{\delta})$.
\end{lemma}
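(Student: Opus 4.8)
The plan is to treat the identity $T = CH\log(dT/\delta)$ as a fixed-point equation and "unroll" it once to eliminate the self-reference. Write $\log$ for the natural logarithm (the base only affects the hidden constants) and set $L := \log(dT/\delta)$, so that $T = CHL$. First dispatch the trivial case $L < 1$: then $T < CH$, and since the hypothesis $d \ge e\delta$ gives $Hd/\delta \ge He \ge e$ (recall that $H \ge 1$ in all our applications), we have $\log(Hd/\delta) \ge 1$ and hence $T = O(H) = O(H\log(Hd/\delta))$. So from now on I would assume $L \ge 1$.

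The main step is to substitute $T = CHL$ back into the definition of $L$, obtaining
\[
L \;=\; \log\!\frac{dT}{\delta} \;=\; \log\!\frac{CHdL}{\delta} \;=\; \log\!\frac{CHd}{\delta} \;+\; \log L .
\]
The one elementary inequality needed is $\log L \le L/e$ for all $L > 0$, which follows by minimizing $g(L) = L/e - \log L$ (the minimum is attained at $L = e$ and equals $0$). Plugging this into the displayed identity gives $L - L/e \le \log(CHd/\delta)$, i.e.\ $L \le \tfrac{e}{e-1}\log(CHd/\delta)$, and therefore
\[
T \;=\; CHL \;\le\; \frac{Ce}{e-1}\, H \log\!\frac{CHd}{\delta}.
\]

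Finally I would absorb the constant $C$ into the $O(\cdot)$ notation: since $d \ge e\delta$ and $H \ge 1$ give $\log(Hd/\delta) \ge \log(He) \ge 1$, we get $\log(CHd/\delta) = \log C + \log(Hd/\delta) \le (|\log C| + 1)\log(Hd/\delta)$, whence $T = O\!\big(H\log(Hd/\delta)\big)$, which is the claim. The only genuine subtlety is the middle paragraph — the equation is implicit through the $\log L$ term — and it is resolved cleanly by the bound $\log L \le L/e$; the hypothesis $d \ge e\delta$ is exactly what guarantees $\log(Hd/\delta)$ is bounded away from $0$ so that the multiplicative constant $C$ (and the $L<1$ corner case) can be swallowed. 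Everything else is routine bookkeeping with constants.
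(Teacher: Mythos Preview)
Your proof is correct and takes a genuinely different route from the paper's. You substitute $T=CHL$ back into $L=\log(dT/\delta)$ to obtain the self-referential identity $L=\log(CHd/\delta)+\log L$, and then kill the $\log L$ term with the sharp elementary bound $\log L\le L/e$; this immediately gives $L\le\tfrac{e}{e-1}\log(CHd/\delta)$ and the result follows. The paper instead defines $f(x)=x/\log(dx/\delta)$, observes that $f$ is nondecreasing for $x\ge1$ (using $d\ge e\delta$), notes that $f(T)=CH$, and then verifies directly that $f\bigl(C'H\log(Hd/\delta)\bigr)\ge CH$ for a suitable constant $C'$ satisfying $C'\ge 2C+C\log C'$, which by monotonicity forces $T\le C'H\log(Hd/\delta)$. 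Your argument is shorter and arguably cleaner, since it solves the fixed-point equation in one line rather than guessing the answer and checking; the paper's monotonicity argument has the minor advantage of making the constant explicit ($C'$ solving $C'\ge 2C+C\log C'$) rather than the $\tfrac{Ce}{e-1}(|\log C|+1)$ that comes out of your bound. Both proofs tacitly use $H\ge1$ (or equivalently $\log(Hd/\delta)\ge1$) to absorb the additive $\log C$ into the multiplicative constant, which is harmless in the paper's applications where $H$ is a sum of terms each at least $4/\varepsilon^2$.
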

\begin{proof}
    Let $f(x) = \frac{x}{\log (dx/\delta)}$, then for $x\ge 1$ 
    \begin{align*}
        f'(x) = \frac{\log (dx/\delta) - 1}{\log^2 dx/\delta}\ge 0
    \end{align*}
    because $dx/\delta>e$. Then $f(x)$ is non-decreasing for $x\ge 1$.

    To prove $T = O(H\log \frac{Hd}{\delta})$, we only need to show that $f(T) \le f(C'H\log \frac{Hd}{\delta})$ for some constant $C'$.
    Since
    \begin{align*}
        \log \frac{C'Hd\log \frac{Hd}{\delta}}{\delta} & = \log\frac{C'Hd}{\delta} + \log\log \frac{Hd}{\delta}
    \end{align*}
    we only need to prove 
    \begin{align*}
        f(C'H\log \frac{Hd}{\delta}) = \frac{C'H\log \frac{Hd}{\delta}}{\log\frac{C'Hd}{\delta} + \log\log \frac{Hd}{\delta}}\ge CH =f(T).
    \end{align*}
    If we choose $C'\ge 2C+C\log C'$, then
    \begin{align*}
        CH\left(\log\frac{C'Hd}{\delta} + \log\log \frac{Hd}{\delta}\right) &\le CH(\log\frac{C'Hd}{\delta} + \log \frac{Hd}{\delta})\\
        &\le 2CH\log \frac{Hd}{\delta} + CH\log C'\\
        &\le (2C+C\log C') H\log \frac{Hd}{\delta}\\
        &\le C'H\log \frac{Hd}{\delta}.
    \end{align*}
\end{proof}

\label{sec.general}

%%%%%%%%%%%%%%%%%%%%%%%%%%%%%%%%%%%%%%%%%%%%%%%%%%%%%%%%%%%%%%%%%%%%%%%%%%%%%%%
%%%%%%%%%%%%%%%%%%%%%%%%%%%%%%%%%%%%%%%%%%%%%%%%%%%%%%%%%%%%%%%%%%%%%%%%%%%%%%%

\end{document}